\newcommand{\axes}{
\draw (-1,0) -- (1,0);
\draw (0, -1) -- (0, 1);
\draw [fill] (0,0) circle   (1.5pt);
}
\tikzstyle{controller}=[circle,draw=blue!50,fill=blue!30,thick,
\tikzstyle{state} = [circle,draw=black!100,fill=blue!20,thick,
\tikzstyle{observ}=[circle,draw=black!100,fill=black!10,thick,
\tikzstyle{pre}=[<-,shorten <=1pt,>=stealth',thick]
\tikzstyle{post}=[->,shorten >=1pt,>=stealth',thick]
\tikzstyle{block} = [rectangle, draw, fill=blue!0
\tikzstyle{blockred} = [rectangle, draw, fill=blue!30
\tikzstyle{blockS} = [rectangle, 
\tikzstyle{block2} = [rectangle, draw, fill=blue!0
\tikzstyle{block2red} = [rectangle, draw, fill=red!0
\tikzstyle{line} = [draw, -latex']
\tikzstyle{cloud} = [draw, ellipse
\tikzstyle{blockblue} = [rectangle, draw, fill=red!0
\tikzstyle{blockred} = [rectangle, draw, fill=red!0
\tikzstyle{blockyel} = [rectangle, draw, fill=blue!15
\tikzstyle{line} = [draw, -latex']
\tikzstyle{cloud} = [draw, ellipse,fill=orange!0
\tikzstyle{blockeqn} = [rectangle, draw, fill=blue!15
\newcommand{\bm}{\boldsymbol}
\renewcommand{\P}{\mathcal{P}}
\newcommand{\PH}{\P_{\hspace{-1pt}H}}
\newcommand{\PHt}{\P_{\hspace{-1pt}H'}}
\newcommand{\X}{\mathcal{X}}
\newcommand{\Y}{\mathcal{Y}}
\renewcommand{\H}{\mathcal{H}}
\newcommand{\F}{\mathcal{F}}
\newcommand{\xmark}{\textit{\sffamily X}}
\newcommand{\prm}{\theta} 
\newcommand{\prmv}{\bm\theta} 
\newcommand{\mix}{\psi} 
\newcommand{\prmset}{\Theta}
\newcommand{\ac}{\hat i}
\newcommand{\sign}{\operatorname{sign}}
\newcommand{\E}{\mexp}
\newcommand{\obsOpr}{T_{\bm\prm}} 
\newcommand{\ot}{\bar{n}}
\newcommand{\vo}{\bm c}
\newcommand{\uo}{\bm b}
\renewcommand{\O}{O_{P}}
\newcommand{\prmotv}{ \bar{\bm{\prm}}}
\newcommand{\prmot}{\bar{\prm}}
\newcommand{\sm}{\!-\!}
\renewcommand{\u}{\bm u}
\renewcommand{\v}{\bm v}
\newcommand{\At}{\bar A}
\newcommand{\Ht}{\bar H}
\newcommand{\alpmin}{ \a^{\min}}
\newcommand{\alpmax}{ \a^{\max}}
\newcommand{\av}{\bar{\bm a}}
\renewcommand{\t}{\tau}
\newcommand{\dt}{\Delta \t}
\newcommand{\ths}{h_T}
\newcommand{\gnn}{f}
\newcommand{\gcc}{g}
\newcommand{\feasreg}{\Gamma}
\newcommand{\supp}{\operatorname{supp}}
\newcommand{\row}{_{{\textrm{\tiny \textup{in}}}}}
\newcommand{\tran}{{\textrm{\tiny \textup{\sffamily T}}}}
\renewcommand{\a}{\alpha}
\renewcommand{\b}{\beta}
\newcommand{\g}{\gamma}
\renewcommand{\t}{\tau}
\newcommand{\frob}{\textrm{{\tiny \textup{F}}}}
\newcommand{\kernel}{{\bar{K}}}
\newcommand{\kernelExp}{\mathcal{K}}
\newcommand{\mom}{{M}}
\newcommand{\momK}{{\mathcal{M}}}
\newcommand{\momT}{{G}}
\newcommand{\emomT}{{\hat \momT}}
\newcommand{\emom}{{\hat \mom}}
\newcommand{\emomK}{{\hat{\mathcal{M}}}}
\newcommand{\momTK}{{\mathcal{G}}}
\newcommand{\emomTK}{{\hat{\mathcal{G}}}}
\newcommand{\dmom}[1]{R^{(#1)}}
\newcommand{\dmomT}[1]{F^{(#1)}}
\newcommand{\dmomo}{\kappa}
\newcommand{\edmom}[1]{ \hat R^{(#1)}}
\newcommand{\edmomT}[1]{\hat F^{(#1)}}
\newcommand{\edmomo}{\hat{\kappa}}
\newcommand{\init}{\pi^0}
\newcommand{\initv}{\bm \pi^0}
\newcommand{\initot}{\bar{\bm\pi}^0}
\newcommand{\statv}{\bm\pi}
\newcommand{\stat}{\pi}
\newcommand{\statot}{\bar{\stat}}
\newcommand{\statotv}{\bar{\statv}}
\newcommand{\dav}{{{\bm \delta}^{\operatorname{out}}}}
\newcommand{\dalpv}{{{\bm \delta}^{\operatorname{in}}}}
\newcommand{\da}{{\delta}^{\operatorname{out}}}
\newcommand{\dalp}{{\delta}^{\operatorname{in}}}
\newcommand{\eAt}{\hat\At}
\newtheorem{theorem}{Theorem}
\newtheorem{lemma}{Lemma}
\title{Learning Parametric-Output HMMs with Two Aliased States}
\author{Roi Weiss\\Ben-Gurion University \and Boaz Nadler \\ Weizmann Institute of Science}
\begin{document} 
\maketitle




\begin{abstract} 
In various applications involving hidden Markov models (HMMs), some of the hidden states are \textit{aliased}, having identical output distributions.
The
minimality, identifiability and learnability of such aliased HMMs have been long standing  problems, with only partial solutions provided thus far. In this paper we focus on  parametric-output HMMs, whose output distributions come from a parametric family, and that have exactly two aliased states. For this class, we present a complete characterization of their minimality and identifiability. Furthermore, for a large family of parametric output distributions, we derive computationally efficient and statistically consistent algorithms to detect the presence of aliasing and learn the aliased HMM transition and emission parameters. We illustrate our theoretical analysis by  several simulations.   
\end{abstract} 

\section{Introduction}

HMMs are a fundamental tool in the analysis of time series. A\ discrete time HMM\ with \(n\) hidden states is characterized by a $n\times n$ 
transition matrix, and by the emissions probabilities from these $n$ states. In several applications, the HMMs, or more general processes such as partially observable Markov decision processes, are {\em aliased}, with some states having identical output distributions.
In modeling of ion channel gating, 
for example, one postulates that at any given time an ion channel can be in only one of a 
finite number of hidden states,
some of which are open and conducting current while others are closed, see e.g. \citet{fredkin1992maximum}. 
Given electric current measurements,
one fits an aliased HMM and infers important biological insight regarding the gating process.
Other examples appear in the fields of reinforcement learning \citep{chrisman1992reinforcement,mccallum1995instance,shani2004resolving,shani2005model} and robot navigation \citep{jefferies2008robotics,zatuchna2009learning}.
In the latter case, aliasing occurs whenever different spatial locations appear (statistically) identical to the robot, given its limited sensing devices. As a last example, HMMs with several silent states that do not emit any output \citep{leggetter1994speaker,stanke2003gene,brejova2007most}, can also be viewed as aliased. 

Key notions related to the study of HMMs, be them aliased or not, are their minimality, identifiability and learnability: 
\begin{itemize}
\item[]{\em Minimality.} Is there an HMM 
with fewer states that induces the same distribution over all output sequences?
\item[]{\em Identifiability.} Does
the distribution over all output sequences
uniquely determines the HMM's parameters, up to a permutation of its hidden states?  
\item[]{\em Learning.} Given a long output sequence from a minimal and identifiable HMM, efficiently learn its parameters. 
\end{itemize}
%
%

For non-aliased HMMs, these notions have been intensively studied and by now are relatively well understood, see for example \citet{petrie1969probabilistic,finesso1990consistent,leroux1992maximum,allman2009identifiability} and \citet{Cappe_HMM_book}. 
The most common approach to learn the parameters of an HMM is via the Baum-Welch iterative algorithm \citep{baumwelch1970}.
Recently,
tensor decompositions and other  computationally efficient spectral methods have been developed to learn non-aliased HMMs 
\citep{DBLP:conf/colt/HsuKZ09,Siddiqi10,DBLP:conf/colt/Anand12,kontorovich2013learning}. 

In contrast, the minimality, identifiability and learnability of aliased HMMs have been long standing problems, with only partial solutions provided thus far. 
For example, \citet{blackwell1957identifiability} characterized the identifiability of a specific aliased HMM\ with 4 states. The identifiability of deterministic output HMMs, where each hidden state outputs a deterministic symbol, was partially resolved by \citet{ito1992identifiability}. To the best of our knowledge, precise characterizations of the minimality, identifiability and learnability of probabilistic output HMMs with aliased states are still open problems. In particular, the recently developed tensor and spectral methods mentioned above, explicitly require the HMM to be non-aliasing, and are not directly applicable to learning aliased HMMs.  
 
\paragraph{Main results.}In this paper we study the minimality, identifiability and learnability of parametric-output HMMs that have \textit{exactly two} aliased states. This is the simplest possible class of aliased HMMs, and as shown below, even its analysis is far from trivial. Our main contributions are as follows:\ First, we provide a complete characterization of their minimality and identifiability, deriving necessary and sufficient conditions for each of these notions to hold. Our identifiability conditions are easy to check for any given 2-aliased HMM, and extend those of \citet{ito1992identifiability} for the case of deterministic outputs. 
Second, we solve the problem of learning a possibly aliased HMM, from a long sequence of its outputs.  To this end, we first derive an algorithm to \textit{detect} whether an observed output sequence corresponds to a non-aliased HMM or to an aliased one. In the former case, the HMM\  can be learned by various methods, such as \citet{DBLP:conf/colt/Anand12,kontorovich2013learning}. In the latter case we show how the aliased states can be identified
and present a method to recover the HMM parameters. Our approach is applicable to any family of output distributions whose mixtures are efficiently learnable. Examples include  high dimensional Gaussians and products distributions, see \citet{feldman2008learning,belkin2010polynomial,DBLP:conf/colt/Anand12} and references therein.  After learning the output mixture parameters, our moment-based algorithm requires only a single pass over the data. It is possibly the first statistically consistent and computationally efficient scheme to handle 2-aliased HMMs.
While our approach may be extended to more complicated aliasing, such cases are beyond the scope of this paper.
We conclude with some simulations illustrating the performance of our suggested algorithms.

\section{Definitions \& Problem Setup}
\paragraph{Notation.}
We denote by $I_n$ the $n\times n$ identity matrix
and
\(
\bm 1_n  =  (1,\dots,1)^\tran \in \R^n.
\)
For $\bm v\in\R^n$, $\diag(\bm v)$  is the $n\times n$ diagonal matrix
with  entries $v_i$ on its diagonal.
The $i$-th row and column of a matrix $A\in\R^{n\times n}$ are denoted by $A_{[i,\cdot]}$ and $A_{[\cdot,i]}$, respectively.
We also denote $[n] = \{1,2,\dots,n\}$.
For a discrete random variable  $X$ we abbreviate 
$P(x)$ for $\Pr(X\!=\!x).$ For a second random variable \(Z\), the quantity $P(z \gn x)$ denotes either $\Pr(Z=z \gn X=x)$, or the conditional density \(p(Z=z|X=x),\) depending on whether \(Z\) is discrete or continuous.

\paragraph{Hidden Markov Models.}
Consider a discrete-time HMM
with $n$ hidden states $\{1,\dots,n\}$, whose output alphabet   $\Y $ 
is either discrete or continuous. 
Let $\F_{\prm} = \{f_{\prm}:\Y\to\R \gn \prm\in\prmset\}$  be a family of {\em parametric} probability density functions
where $\Theta$ is a suitable parameter space. 
A {\em parametric-output} HMM is defined by a tuple 
\( H = (A,\bm\prm,\initv) \)
where \(A\) is the \(n\times n\) transition matrix
of the hidden states
\[
A_{ij} = \Pr(X_{t+1} = i \gn X_{t} = j) = P(i \gn j),
\]
\(\initv\in\R^n\) is the distribution of the initial state, 
and the vector of parameters 
$\bm\prm = (\prm_1,\prm_2,\dots,\prm_n)\in\prmset^n$ 
determines the
$n$ probability density functions  $(f_{\prm_1},f_{\prm_2},\dots,f_{\prm_n})$.

The output sequence of the HMM is generated as follows.\ First, an unobserved Markov 
sequence of{ hidden states} 
$x=(x_t)_{t=0}^{T-1}$
is generated according to the  distribution
\beq
P(x) = 
\init_{x_0}\prod_{t=1}^{T-1} 
P(x_t \gn x_{t-1})
.
\eeq
The output \(y_t\in \Y\) at time $t$ depends only on the hidden state $x_t$ via 
$P(y_t\gn x_{t})\equiv f_{\prm_{x_t}}(y_t)$. Hence
the conditional distribution of an output sequence $y=(y_t)_{t=0}^{T-1}$ 
is
$$
 P(y\gn x) = \prod_{t=0}^{T-1} P(y_t\gn x_t) = 
\prod_{t=0}^{T-1} f_{\prm_{x_t}}(y_t).
$$
We denote by $P_{H,k}:\Y^k \to \R$ the joint distribution of the first $k$ consecutive outputs of the HMM $H$. 
For $y=(y_0,\dots,y_{k-1}) \in \Y^k$ this distribution is given by
\beq
P_{H,k}(y) = \sum_{ x\in[n]^k }P(y \gn x) P(x)
.
\eeq
Further we denote by 
$\PH = \set{P_{H,k} \gn k\geq1}$
the set of all these distributions.

\paragraph{2-Aliased HMMs.}
For an HMM $H$
with output parameters $\prmv=(\prm_1,\prm_2,\dots,\prm_{n})\in\prmset^n$
we say that states $i$ and $j$ are {\em aliased} if $\prm_i = \prm_j$.
In this paper we consider the special case where 
$H$ has \emph{exactly two} aliased states, denoted as 2A-HMM.
Without loss of generality, we assume the aliased states are the two last ones, $n\sm1$ and $n$.
Thus,
$$
\prm_1\neq \prm_2\neq \dots\neq \prm_{n-2}\neq \prm_{n\sm1}
\quad \text{and}
\quad
\prm_{n-1} = \prm_n
.
$$
We denote the vector of the $n\sm1$ {\em unique} output parameters of $H$ by
\(
\prmotv = (\prm_1,\prm_2,\dots,\prm_{n-2},\prm_{n\sm1}) \in \prmset^{n-1}.
\)
For future use, we define the {\em aliased kernel}
$\kernel\in\R^{(n\sm1)\times (n\sm1)}$ as the matrix of inner products between the $n\sm1$ different \(f_{\prm_i}\)'s,
\beqn
\label{eq:K_def}
\kernel_{ij} \equiv
\langle f_{\prm_i},f_{\prm_j} \rangle
=
\int_{\mathcal Y} f_{\prm_i}(y)
f_{\prm_j}(y)
 \textrm{d} y,
 \quad
i,j\in[n\sm1]
 .
\eeqn

\paragraph{Assumptions.}%
As in  previous works \citep{leroux1992maximum,kontorovich2013learning}, we make the following standard assumptions:

\begin{itemize}
	\item[({\bf A1})]
The parametric family $\F_{\prm}$ of the output distributions is linearly independent of order $n$: 
for any distinct \(\{\theta_i\}_{i=1}^n\), 
\(
\sum_{i=1}^n a_i f_{\prm_i} \equiv 0 
\)
iff
\(
a_i = 0
\)
for all
\(
i\in[n].
\)

\item[({\bf A2})]
The transition matrix $A$ is ergodic and its
unique
stationary distribution $\statv =(\pi_1,\pi_2,\dots,\pi_n)  
$
is positive. 
\end{itemize}
Note that assumption (A1) implies that the parametric family \(\F_\prm\)
is {\em identifiable}, namely $f_\prm=f_{\prm'}$ iff $\prm=\prm'$. It also implies that the 
kernel matrix $\kernel$ of (\ref{eq:K_def}) is full rank $n\sm1$.

\section{Decomposing the transition matrix $A$} 
\label{sec:decompose}
The main tool in our analysis is a novel decomposition of the 2A-HMM's transition matrix into its non-aliased and aliased parts.
As shown in Lemma \ref{lem:A-exp} below, the aliased part consists of three rank-one matrices, that correspond to the dynamics of exit from, entrance to, and within the two aliased states. This decomposition is used to derive the conditions for minimality and identifiability (Section \ref{sec:minident}), and plays a  key role in learning the HMM (Section \ref{sec:learn}).

To this end, we introduce a {\em pseudo-state} $\ot$, combining the two aliased states $n\sm1$ and $n$.
We define
\beqn\pi_{\ot} = \pi_{n-1} + \pi_n
\quad\mbox{and}\quad
\beta = {\pi_{n-1}}/{\pi_{\ot}}.
        \label{eq:beta_def}
\eeqn
We shall make extensive use of
the following two matrices:
\beq
        B &=&   \left(
                        \begin{array}{ccc|cc}
                                 & & & 0 & 0 \\[-5pt]
                                 & I_{n-2} & & \vdots & \vdots    \\                    
                                 &  &  &  0  & 0\\
                                \hline
                                0 & \dots &  0  & 1 & 1   \\                            
                        \end{array}
                \right)
                \in \R^{(n\sm1) \times n}
,\\[5pt]
        C_\b &=&        \left(
                        \begin{array}{ccc|c}
                                 & & & 0  \\[-5pt]
                                 & I_{n-2} & & \vdots    \\                     
                                 &  &  &  0  \\
                                \hline
                                0 & \dots &  0  & \b   \\
                                0 & \dots &  0  &  1\sm\b  \\                           
                        \end{array}
                \right)
                \in \R^{n \times (n\sm1)}.
\eeq
As explained below,
these matrices can be viewed as projection and lifting operators, mapping between non-aliased and aliased quantities.

\paragraph{Non-aliased part.} 
The non-aliased part of \(A\) is a stochastic matrix $\bar A\in\R^{(n\sm1)\times(n\sm1)}$, obtained by {\em merging} the two aliased states $n\sm1$ and $n$ into the pseudo-state $\ot$. Its entries are given by
\beqn
\label{eq:Atilde}
\At \!  = \!
\left(
                        \begin{array}{ccc|c}
                                 & & & P(1\gn \ot)   \\
                                 & \hspace{-22pt} A_{[1:n\sm2] \times [1:n\sm2]}\hspace{-20pt}  & & \vdots    \\                       
                                 &  &  &  P(n\sm2 \gn \ot) \\[3pt]
                                \hline
                                \rule[11pt]{0pt}{0pt}
                                P(\ot\gn 1) &\hspace{-10pt} \dots &\hspace{-10pt}  P(\ot\gn n\sm2)  & P(\ot \gn \ot)                                  
                        \end{array}
                \right),
\eeqn
where the transition probabilities \textit{into} the pseudo-state are
\beq
P( \ot \gn j )  & = & P(n\sm1 \gn j ) + P(n \gn j )
,\quad \forall j\in [n],
\eeq
the transition probabilities {\em out of} the pseudo-state are defined with respect to the stationary distribution by
\beq
P(i\gn \ot) & = & \b P(i\gn n\sm1) + (1\sm\b)P(i\gn n)
,\quad \forall i\in [n]
\eeq
and lastly, the probability to {\em stay} in the pseudo-state is
\beq
P(\ot \gn \ot) & = & \b P(\ot \gn n\sm1) + (1\sm\b)P(\ot \gn n).
\eeq
It is easy to check that the unique stationary distribution of \(\At\) is  
$
\statotv  =  (\pi_1,\pi_{2}, \dots, \pi_{n\sm2} , \pi_{\ot} ) \in \R^{n\sm1}$. Finally, note that $\At  =  B A C_\b$,  
 $\statotv = B \statv$ and $\statv = C_\b \statotv$, justifying the lifting and projection interpretation of the matrices \(B,C_{\beta}\).

\paragraph{Aliased part.}
Next we present some  key quantities that distinguish between the two aliased states.
Let $\supp\row  = 
\{j\in[n] \gn P(\ot\gn j) > 0 \}$
be the set of states that can move into
either one of the aliased states. We define
\beqn
\label{eq:alp-def}
\a_j & = & 
\begin{cases}
\frac{P(n\sm1 \gn j)}{P( \ot \gn j)}
&
 j\in\supp\row
 \\
 0
 & \text{otherwise},
\end{cases}
\eeqn
as the \emph{relative probability} of moving from state $j$ to state $n\sm1$, conditional on moving to either $n\sm1$ or $n$.
We define the two vectors $\dav,\dalpv \in\R^{n-1}$ as follows:
$\forall i,j\in [n\sm1]$,
\beqn
\label{eq:deltaa}
\da_i & = &
\begin{cases}
P(i \gn n\sm1) - P(i \gn n) & i < n-1 
\\
P(\ot \gn n\sm1) - P(\ot \gn n) & i = n-1
\end{cases}
\\
\label{eq:deltaalp}
\dalp_j & =  &
\begin{cases}
(\a_{j} \sm \b) P(\ot \gn j)& j < n\sm1
\\[5pt]
\b (\a_{n\sm1} \sm \b) P(\ot \gn n\sm1) \\
\quad +\, (1\sm\b) (\a_n \sm \b) P(\ot \gn n) & j = n\sm1.
\end{cases}
\eeqn
In other words, $\dav$ captures the differences in the transition probabilities {\em out of} the aliased states. In particular, if $\dav=\bm{0}$ then starting from either one of the two aliased states, the Markov chain evolution  is identical. Intuitively such an HMM\ is not minimal, as its two aliased states can be lumped together, see Theorem \ref{thm:2-alias-irr-cond} below.  

Similarly, \(\dalpv\) compares the relative probabilities  \textit{into} the aliased states  $\a_j$, to the stationary relative probability $\b = {\pi_{n-1}}/{\pi_{\ot}}$. This quantity also plays a role in the minimality of the HMM.

Lastly, for our decomposition, we define the scalar
\beqn
\label{eq:deltaM1}
\dmomo  =  (\a_{n\sm1} - \b)P(\ot\gn n\sm1)  - (\a_n - \b) P(\ot\gn n).
\eeqn

\paragraph{Decomposing $A$.}
\newcommand{\davb}{\Delta{\bar{\bm  a}}}
\newcommand{\dalpvb}{\Delta{\bar{\bm  \a}}}
The following lemma  provides a decomposition of the transition matrix in terms of $\At$, $\dav$, $\dalpv$, $\dmomo$ and $\b$
(all omitted proofs are given in the Appendix).
\begin{lemma}
\label{lem:A-exp}
The transition matrix $A$ of a 2A-HMM can be written as
\beqn
\label{eq:A-exp}
A 
=
C_\b \At B
+
C_\b \dav
\vo_\b^\tran
+
\uo 
(\dalpv)^{\tran} B
+
\dmomo\,  \uo \vo_\b^\tran
,
\eeqn
where
\(
{\vo_\b}^{\tran}  =  (0, \dots,  0, {1\sm \b}, -\b)\in\R^n
\)
and
\(
\uo  =  (0,\dots, 0,1,-1)^{\tran} \in\R^n.
\)
\end{lemma}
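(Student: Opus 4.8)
The plan is to avoid a term-by-term entrywise check and instead exploit that $C_\b B$ is an idempotent whose complementary projector has rank one. Write $Q := C_\b B\in\R^{n\times n}$. Two elementary facts, both obtained by multiplying the block forms of $B$ and $C_\b$, get things started: $BC_\b = I_{n\sm1}$, and consequently $Q^2 = C_\b(BC_\b)B = C_\b B = Q$. The first fact already disposes of the leading term, since, using $\At = BAC_\b$,
\[
C_\b\At B = C_\b(BAC_\b)B = (C_\b B)A(C_\b B) = QAQ .
\]

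The key structural step is to verify the rank-one identity $I_n - Q = \uo\,\vo_\b^\tran$. Because $Q$ fixes the first $n\sm2$ coordinates, both sides vanish off the $2\times2$ aliased block, on which one checks directly that
\[
I_2 - \begin{pmatrix}\b & \b\\ 1\sm\b & 1\sm\b\end{pmatrix}
= \begin{pmatrix}1\sm\b & -\b\\ -(1\sm\b) & \b\end{pmatrix}
= \begin{pmatrix}1\\ -1\end{pmatrix}\!\begin{pmatrix}1\sm\b & -\b\end{pmatrix},
\]
which is precisely the aliased block of $\uo\vo_\b^\tran$.

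Next I would apply the tautology $A = (Q + (I_n\sm Q))\,A\,(Q + (I_n\sm Q))$, which expands into $QAQ$, $QA(I_n\sm Q)$, $(I_n\sm Q)AQ$, and $(I_n\sm Q)A(I_n\sm Q)$. Substituting $I_n\sm Q = \uo\vo_\b^\tran$ and pulling out the rank-one factors rewrites these as $QAQ$, $(QA\uo)\,\vo_\b^\tran$, $\uo\,(\vo_\b^\tran A Q)$, and $(\vo_\b^\tran A\uo)\,\uo\vo_\b^\tran$. Matching with \eqref{eq:A-exp}, it then remains to prove the three identifications
\[
QA\uo = C_\b\dav,\qquad \vo_\b^\tran A Q = (\dalpv)^\tran B,\qquad \vo_\b^\tran A\uo = \dmomo .
\]
Each is a short computation from the definitions. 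For the first, $A\uo = A_{[\cdot,n\sm1]}\sm A_{[\cdot,n]}$ has $i$-th entry $P(i\gn n\sm1)\sm P(i\gn n)$; left-multiplying by $B$ merges the two aliased rows, and comparison with \eqref{eq:deltaa} gives $BA\uo = \dav$, whence $QA\uo = C_\b(BA\uo) = C_\b\dav$. For the second, since $Q = C_\b B$ it suffices to show $\vo_\b^\tran A C_\b = (\dalpv)^\tran$ and then right-multiply by $B$; the row $\vo_\b^\tran A$ has $j$-th entry $(1\sm\b)P(n\sm1\gn j)\sm\b P(n\gn j)$, which the definition of $\a_j$ in \eqref{eq:alp-def} turns into $(\a_j\sm\b)P(\ot\gn j) = \dalp_j$ for $j<n\sm1$, while applying $C_\b$ weights the two aliased entries by $\b$ and $1\sm\b$ to reproduce the two-line expression for $\dalp_{n\sm1}$. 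The third identification then falls out of the same entrywise formulas, since $\vo_\b^\tran A\uo$ is the difference of the $(n\sm1)$-th and $n$-th entries of $\vo_\b^\tran A$, which equals $\dmomo$ by \eqref{eq:deltaM1}.

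I expect the only real obstacle to be the pseudo-state bookkeeping in the second identification: verifying both that $(1\sm\b)P(n\sm1\gn j)\sm\b P(n\gn j)$ collapses to $(\a_j\sm\b)P(\ot\gn j)$ through the definition of $\a_j$, and that the $\b$, $1\sm\b$ reweighting produced by $C_\b$ in the pseudo-state column matches exactly the piecewise definition of $\dalp_{n\sm1}$ in \eqref{eq:deltaalp}. Everything else reduces to mechanical block multiplications and the two rank-one observations above.
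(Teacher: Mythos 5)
Your proof is correct, and it takes a genuinely different route from the paper's. The paper proves Lemma \ref{lem:A-exp} by brute force: it writes out all four terms of (\ref{eq:A-exp}) entrywise and checks that they sum to $A$, displaying only the representative entry $A_{n,n}$ in detail. You instead observe that $Q=C_\b B$ is idempotent (via $BC_\b=I_{n\sm1}$) with rank-one complement $I_n-Q=\uo\vo_\b^\tran$, expand the tautology $A=(Q+(I_n\sm Q))A(Q+(I_n\sm Q))$, and identify the four resulting terms with the four terms of the decomposition through the three identities $BA\uo=\dav$, $\vo_\b^\tran AC_\b=(\dalpv)^\tran$, and $\vo_\b^\tran A\uo=\dmomo$. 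All of these check out against the definitions (\ref{eq:deltaa}), (\ref{eq:deltaalp}), (\ref{eq:deltaM1}), including the pseudo-state bookkeeping you flagged: $(1\sm\b)P(n\sm1\gn j)\sm\b P(n\gn j)=(\a_j\sm\b)P(\ot\gn j)$ holds for $j\in\supp\row$ by substituting $P(n\sm1\gn j)=\a_jP(\ot\gn j)$, and trivially otherwise. Your argument buys two things over the paper's: it replaces an opaque entrywise verification with a structural one that explains \emph{why} the aliased perturbation consists of three rank-one pieces (they are the cross terms of a rank-one projector), and it derives along the way exactly the identities $I_n-C_\b B=\uo\vo_\b^\tran$, $BA\uo=\dav$, $\vo_\b^\tran AC_\b=(\dalpv)^\tran$ that the paper later asserts separately in the proof of Lemma \ref{lem:del-rel}; the paper's entrywise check, by contrast, requires no auxiliary identities but yields no reusable structure.
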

\noindent
In (\ref{eq:A-exp}), the first term 
is the merged transition matrix $\At\in\R^{(n\sm1)\times(n\sm1)}$ lifted back into $\R^{n\times n}$. 
This term 
captures all of the non-aliased transitions.
The second matrix is zero except in the last two columns, accounting for 
the exit transition probabilities from the two aliased states.
Similarly, the third matrix is zero except in the last two rows, differentiating the entry probabilities. The fourth term is non-zero only on the lower right $2\times 2$ block involving the aliased states
$n\sm1$, $n$.
This term corresponds to the internal dynamics between them.
Note that each of the last three terms is at most a rank-$1$ matrix, which  together can be seen as a perturbation due to the presence of aliasing.

In section \ref{sec:learn}
we shall see that given a long output sequence from the HMM, the presence of aliasing can be detected and the quantities  $\At$, $\dav$, $\dalpv$, $\dmomo$ and $\b$ 
can all be estimated from it. An estimate for $A$ is then obtained via Eq. (\ref{eq:A-exp}).
 
\section{Minimality and Identifiability}
\label{sec:minident}
Two HMMs $H$ and $H'$ are said to be {\em equivalent} if
their observed output sequences are statistically indistinguishable, namely
 $\PHt = \PH$.
Similarly, an HMM $H$ is \textit{minimal} if there is no equivalent HMM with fewer number of states. 
Note 
that if $H$ is non-aliased then Assumptions (A1-A2) readily imply that it is also minimal
\citep{leroux1992maximum}.
In this section we present necessary and sufficient conditions for a 2A-HMM to be minimal, and for two minimal 2A-HMMs to be equivalent.
Finally, we derive necessary and sufficient conditions for a minimal 2A-HMM to be identifiable.

\subsection{Minimality}
\label{sec:min}
The minimality of an HMM is closely related to the notion of \textit{lumpability}: can hidden states be merged without affecting the distribution $\P_H$ \citep{fredkin1986aggregated,white2000lumpable,huang2014minimal}.
Obviously, an HMM is minimal iff no subset of hidden states can be merged.
In the following theorem we give precise conditions for the minimality of a 2A-HMM.

\begin{theorem} 
\label{thm:2-alias-irr-cond}
Let $H$
be a 2A-HMM satisfying Assumptions (A1-A2) whose initial state $X_0$ is distributed according to 
\(
\initv=(\init_1, \init_2, \dots,  \beta^0\init_{\ot},(1\sm\beta^0) \init_{\ot}).
\)
Then,
\begin{itemize}
        \item[(i)] If  $\init_{\ot} \neq 0$ and $\b^0 \neq \b$ 
        then
        $H$ is minimal
iff $\dav \neq \bm 0$.
\item[(ii)] If 
$\init_{\ot} = 0$ or $\b^0 = \b$ then 
$H$ is minimal
iff both 
$\dav\neq~\bm 0$ and $\dalpv \neq \,\bm 0$.
\end{itemize}
\end{theorem}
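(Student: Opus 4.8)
The plan is to phrase the output process of $H$ through its observable-operator representation and to read off minimality from reachability and observability of that representation. Writing $O_y=\diag(f_{\prm_1}(y),\dots,f_{\prm_n}(y))$ and $\bar O_y=\diag(f_{\prm_1}(y),\dots,f_{\prm_{n\sm1}}(y))$, the length-$k$ output density is
\[
P_{H,k}(y_0,\dots,y_{k-1})=\bm 1_n^\tran O_{y_{k-1}}A\,O_{y_{k-2}}\cdots A\,O_{y_0}\,\initv .
\]
By the classical minimal-realization criterion (Kalman; Carlyle--Paz), this $n$-dimensional representation is minimal---no equivalent HMM has fewer than $n$ states---iff it is both \emph{reachable}, i.e.\ the forward vectors $\bm\gamma_k=O_{y_{k-1}}A\cdots A\,O_{y_0}\initv$ span $\R^n$, and \emph{observable}, i.e.\ the backward functionals $\bm 1_n^\tran O_{z_{m-1}}A\cdots A\,O_{z_0}$ span the row space of $\R^n$. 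Since $H$ already has $n$ states, I only need to decide when each property holds. Throughout I will use $\bm 1_n^\tran=\bm 1_{n\sm1}^\tran B$, $B O_y=\bar O_y B$, $BC_\b=I_{n\sm1}$, $B\uo=\bm 0$, $\vo_\b^\tran\uo=1$, $\vo_\b^\tran C_\b=\bm 0^\tran$, $O_y\uo=f_{\prm_{n\sm1}}(y)\uo$, together with the reductions that follow from Lemma~\ref{lem:A-exp}, namely $BA=\At B+\dav\vo_\b^\tran$, $\vo_\b^\tran A=(\dalpv)^\tran B+\dmomo\vo_\b^\tran$, and $A\uo=C_\b\dav+\dmomo\uo$.

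I first treat observability, which does not involve $\initv$. For any $\v\in\R^n$ a single-symbol functional gives $\bm 1_n^\tran O_z\v=\sum_{i\le n\sm2}f_{\prm_i}(z)v_i+f_{\prm_{n\sm1}}(z)(v_{n\sm1}+v_n)$, so by Assumption~(A1) the unobservable subspace is contained in $\operatorname{span}(\uo)$. To test $\uo$ itself I evaluate a length-two future using $O_z\uo=f_{\prm_{n\sm1}}(z)\uo$ and $A\uo=C_\b\dav+\dmomo\uo$, obtaining $\bm 1_n^\tran O_{z_1}A\,O_{z_0}\uo=f_{\prm_{n\sm1}}(z_0)\,\bm 1_{n\sm1}^\tran\bar O_{z_1}\dav=f_{\prm_{n\sm1}}(z_0)\sum_i f_{\prm_i}(z_1)\da_i$. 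By (A1) this vanishes for all $z_0,z_1$ iff $\dav=\bm 0$. Hence the representation is observable iff $\dav\neq\bm 0$; when $\dav=\bm 0$, $\uo$ is a common eigendirection of all $O_zA$ annihilated by $\bm 1_n^\tran$, and one checks directly that $H$ is then equivalent to the merged $(n\sm1)$-state HMM $(\At,\prmotv,B\initv)$. Thus $\dav\neq\bm 0$ is necessary for minimality in both cases.

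Reachability will carry the dependence on the initial law and on $\dalpv$. Put $\bm g_k=B\bm\gamma_k$ and $s_k=\vo_\b^\tran\bm\gamma_k$; since the map $\bm\gamma\mapsto(B\bm\gamma,\vo_\b^\tran\bm\gamma)$ acts on the last two coordinates by $\bigl(\begin{smallmatrix}1&1\\1\sm\b&-\b\end{smallmatrix}\bigr)$, of determinant $-1$, it is invertible, so $\{\bm\gamma_k\}$ spans $\R^n$ iff $\{(\bm g_k,s_k)\}$ spans $\R^{n\sm1}\times\R$. The two reductions above yield the coupled recursions $\bm g_k=\bar O_{y_k}(\At\bm g_{k-1}+\dav s_{k-1})$ and $s_k=f_{\prm_{n\sm1}}(y_k)\bigl((\dalpv)^\tran\bm g_{k-1}+\dmomo s_{k-1}\bigr)$, started at $\bm g_0=\bar O_{y_0}B\initv$ and $s_0=f_{\prm_{n\sm1}}(y_0)\,\init_{\ot}(\b^0\sm\b)$. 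I will first argue from (A1)--(A2) that the non-aliased coordinates are always reachable, i.e.\ $\{\bm g_k\}$ spans $\R^{n\sm1}$ (essentially reachability of the minimal merged HMM, \citep{leroux1992maximum}), so that reachability of $H$ fails only if every forward vector lies in the hyperplane $\{\vo_\b^\tran\bm\gamma=0\}$, i.e.\ $s_k\equiv 0$. From the initialization, $s_0$ can be made nonzero iff $\init_{\ot}\neq0$ and $\b^0\neq\b$; and if $s_0=0$ the $s$-recursion forces $s_k\equiv0$ for all $k$ and all outputs exactly when $\dalpv=\bm 0$, whereas if $\dalpv\neq\bm 0$ the spreading of $\bm g_{k-1}$ produces some $s_k\neq0$. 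Hence the representation is reachable iff $\init_{\ot}\neq0$ and $\b^0\neq\b$, or $\dalpv\neq\bm 0$.

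Combining the two characterizations, $H$ is minimal iff it is reachable and observable: in case~(i), $\init_{\ot}\neq0$ and $\b^0\neq\b$ make reachability automatic, leaving the condition $\dav\neq\bm 0$; in case~(ii), reachability additionally requires $\dalpv\neq\bm 0$, giving the stated pair of conditions. The step I expect to be the main obstacle is reachability: establishing $\operatorname{span}\{\bm g_k\}=\R^{n\sm1}$ rigorously despite the coupling term $\dav s_{k-1}$ and a possibly degenerate $\initv$, and upgrading ``$\dalpv\neq\bm 0$'' to ``$s_k\neq0$ for some $k,y$'' when $\initv$ has zero entries; both hinge on using ergodicity of $A$ to spread the forward mass and (A1) to separate output symbols. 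Making the reachable-and-observable $\Rightarrow$ minimal implication self-contained, via the Hankel-rank factorization, is the remaining technical piece.
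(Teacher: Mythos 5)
Your necessity direction (conditions fail $\Rightarrow$ not minimal) is essentially the paper's: when $\dav=\bm 0$, and when $\dalpv=\bm 0$ with $\vo_\b^\tran\initv=0$, you reduce the forward recursion to that of the merged HMM $(\At,\prmotv,B\initv)$, which is exactly the paper's inductive argument. The gap is in the sufficiency direction, which you route through the claim that $H$ is minimal iff its observable-operator representation is reachable and observable. Two problems. First, that equivalence is not the Kalman/Carlyle--Paz theorem for HMMs: Hankel rank $n$ does imply that no HMM with fewer states exists, but the converse fails in general (a minimal \emph{positive} realization can need strictly more states than the Hankel rank), so ``not reachable $\Rightarrow$ not minimal'' is not available off the shelf; you patch it by explicit merging only for the specific failure modes $\dav=\bm 0$ and $s_k\equiv 0$. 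Second, and more seriously, the reachability half that you need for ``conditions $\Rightarrow$ minimal'' is both unproven and logically incomplete as sketched: (a) $\operatorname{span}\{\bm g_k\}=\R^{n\sm1}$ does not follow from minimality of the merged chain, because of the coupling term $\dav s_{k-1}$ in your recursion; (b) even granting it, the forward vectors could span a hyperplane $\ker(\bm w^\tran)$ with $\bm w^\tran\uo\neq 0$ other than $\ker(\vo_\b^\tran)$, so ``reachability fails only if $s_k\equiv 0$'' is a non sequitur; and (c) one nonzero $s_{k_0}$ together with spanning $\bm g_k$'s still leaves open that the joint vectors $(\bm g_k,s_k)$ lie on a graph $\{s=\ell(\bm g)\}$ of dimension $n\sm1$. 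You flag reachability as the remaining obstacle, but it is the heart of the proof, not a technicality to be filled in later.

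The paper avoids reachability entirely. For case (ii) it observes that an equivalent $(n\sm1)$-state HMM must, by (A1), have the same output components and hence the same second-order moments, and Lemma~\ref{lem:del-rel} converts the moment identity into $\dav(\dalpv)^\tran=\dav''(\dalpv''){}^\tran=0$. For case (i) it computes $P_{H,2}$ directly from the decomposition of Lemma~\ref{lem:A-exp}; linear independence (A1) and invertibility of $\kernel$ then force $\kernel\dav=0$, i.e.\ $\dav=\bm 0$, a contradiction. If you want to keep your framework, the cleanest repair is to replace the reachability analysis by this two-point-correlation argument, which needs only length-$2$ statistics rather than control of the entire forward orbit.
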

\noindent
By Theorem \ref{thm:2-alias-irr-cond}, a necessary condition for minimality of a 2A-HMM is that the two aliased states have 
different exit probabilities, \(\dav\neq0\).
Namely,
there exists a non-aliased state $i\in[n\sm2]$ such that $P(i\gn n\sm1)\neq P(i\gn n)$.
Otherwise the two aliased states can be merged.
If the 2A-HMM is started from its stationary distribution, then an additional necessary condition is $\dalpv\neq0$.
This last condition implies that there is a non-aliased state $j\in\supp\row\setminus \{n\sm1,n\}$ with relative entrance probability
$\a_j \neq \b$.

\subsection{Identifiability}
\label{sec:ident}
Recall that an HMM $H$ 
is (strictly) \emph{identifiable} if $\PH$ uniquely determines the transition matrix $A$ and the output parameters $\bm\prm$, up to a permutation of the hidden states.
We establish the conditions for identifiability of a 2A-HMM in two steps.
First we derive a novel {geometric} characterization of the set of all minimal HMMs that are equivalent to $H$, up to a permutation of the hidden states 
(Theorem \ref{lem:feasreg}).
Then we give necessary and sufficient conditions for $H$ to be identifiable, namely for this set to be
the singleton set,
consisting of only $H$ itself 
(Appendix \ref{app:ident}).
In the process, we provide a simple procedure (Algorithm \ref{alg:feasreg}) to determine whether a given minimal 2A-HMM is identifiable or not.

\paragraph{Equivalence between minimal 2A-HMMs.}
Necessary and sufficient conditions for the equivalence of two minimal 
HMMs were studied in several works \citep{finesso1990consistent,ito1992identifiability, vanluyten2008equivalence}.
We now provide analogous conditions for parametric output 2A-HMMs.
Toward this end, we define 
the following 2-dimensional family of matrices $S(\tau_{n-1} , \tau_n)\in\R^{n\times n}$ 
given
 by 
\beq
                S(\tau_{n-1} , \tau_n ) =
                \left(
                        \begin{array}{ccc|cc}
                                 & & & 0 & 0  \\[-4pt]
                                 & I_{n-2} & & \vdots  & \vdots  \\                      
                                 &  &  &  0 & 0 \\
                                \hline
                                0 & \dots &  0  & \t_{n\sm1}  & \t_{n} \\
                                0 & \dots &  0  &  1\sm\t_{n\sm1} &  1\sm\t_{n}                               
                        \end{array}
                \right).
\eeq
Clearly, for $\t_{n\sm1}\neq\t_n$, $S$ is invertible. 
As in \citet{ito1992identifiability}, consider then the following similarity transformation of the transition matrix \(A,\)
\beqn
\label{eq:Atrans}
A_H(\tau_{n-1},\tau_n) &\!\! =\!\! & S(\tau_{n-1}, \tau_n)^{-1} A S(\tau_{n-1} , \tau_n).
\eeqn
It is easy to verify that $\bm{1}^\tran_n A_H=\bm{1}^\tran_n$. However, $A_H$ is not necessarily stochastic, as depending on $\t_{n-1},\t_n$ it may have negative entries. The following lemma resolves the equivalence of 2A-HMMs, in terms of this transformation. 
\begin{lemma}
\label{lem:equiv}
Let $H=(A,\prmv,\statv)$ be a minimal 2A-HMM satisfying Assumptions (A1-A2). Then a minimal HMM $H'=(A',\prmv',\statv')$ with $n'$ states is equivalent to $H$ iff $n'=n$ and 
there exists a permutation matrix $\Pi\in\R^{n\times n}$ and
$\tau_{n-1} > \tau_n $ such that 
$\prmv'   =   \Pi\, \prmv$
and
\beq 
\statv' &=&  \Pi\, S(\t_{n-1} , \t_n)^{-1} \statv
\\
   A' & = &  \Pi\, A_H(\t_{n-1},\t_n) \,\Pi^{-1} \,\geq\, 0.
\eeq
\end{lemma}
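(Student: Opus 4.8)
The plan is to work with the observable-operator representation of the output law. Writing $O(y)=\diag\bigl(f_{\prm_1}(y),\dots,f_{\prm_n}(y)\bigr)$, whose last two diagonal entries coincide because $\prm_{n-1}=\prm_n$, one has $P_{H,k}(y_0,\dots,y_{k-1}) = \bm 1_n^\tran\, O(y_{k-1})A\,O(y_{k-2})A\cdots A\,O(y_0)\,\statv$, and likewise for $H'$ with its own $A',O',\statv'$. For the \emph{if} direction, suppose $\prmv'=\Pi\prmv$, $\statv'=\Pi\, S^{-1}\statv$ and $A'=\Pi\, A_H(\tau_{n-1},\tau_n)\,\Pi^{-1}=\Pi\, S^{-1}A S\,\Pi^{-1}$ with $S=S(\tau_{n-1},\tau_n)$. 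The key structural fact is that $S$ commutes with every $O(y)$: on the first $n-2$ coordinates both matrices are diagonal, while on the aliased block $S$ acts by its $2\times2$ sub-block and $O(y)$ acts as the scalar $f_{\prm_{n-1}}(y)I_2$, so $[S,O(y)]=0$. Together with $\bm 1_n^\tran S=\bm 1_n^\tran$ (each column of $S$ sums to one, hence $\bm 1_n^\tran S^{-1}=\bm 1_n^\tran$) this makes the conjugations telescope in the product for $P_{H',k}$: every interior factor $S\,O(y_t)\,S^{-1}=O(y_t)$ collapses, the leftover $S^{-1}$ at the left is absorbed by $\bm 1_n^\tran$, and $\Pi$ cancels since it merely relabels states. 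Hence $P_{H',k}=P_{H,k}$ for all $k$, i.e.\ $\PHt=\PH$.

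For the \emph{only if} direction I would invoke the uniqueness theory of minimal realizations \citep{finesso1990consistent,vanluyten2008equivalence}. Equivalence forces equal Hankel rank; since a minimal 2A-HMM satisfying (A1)--(A2) has an $n$-dimensional minimal (reachable and observable) realization, we obtain $n'=n$ and an invertible $S$ with $\bm 1_n^\tran S=\bm 1_n^\tran$, $\statv'=S^{-1}\statv$, and $A'O'(y)=S^{-1}A\,O(y)\,S$ for all $y$. Comparing the single-output marginals $P_{H,1}(y)=\sum_i \pi_i f_{\prm_i}(y)$ and using that (A1) makes finite mixtures of $\F_\prm$ identifiable, the distinct output parameters of $H'$ coincide with those of $H$ and carry matching aggregate weights; as $H'$ has $n$ states but $n-1$ distinct positive-weight parameters, exactly one is repeated, so $\prmv'=\Pi\prmv$ for some permutation $\Pi$. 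Relabelling $H'$ by $\Pi^{-1}$ we may assume $O'=O$. Integrating $A'O(y)=S^{-1}A\,O(y)\,S$ over $y\in\Y$ and using $\int_\Y O(y)\,dy=I_n$ gives $A'=S^{-1}AS$, and substituting this back yields the commutator constraints $A\,[S,O(y)]=0$ for all $y\in\Y$.

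The crux is to read off the block form of $S$ from these constraints. Fix a column $j$: the $j$-th column of $[S,O(y)]$ has entries $S_{ij}\bigl(f_{\prm_j}(y)-f_{\prm_i}(y)\bigr)$, so $A\,[S,O(y)]=0$ asserts that an $\R^n$-valued linear form in the vector $(f_{\prm_1}(y),\dots,f_{\prm_n}(y))$ vanishes identically in $y$. By (A1) these vectors span the entire subspace $\{\,d:d_{n-1}=d_n\,\}$, so the form vanishes on it, which equates coefficients coordinatewise. For $j\le n-2$ this at once forces $S_{ij}=0$ for every $i\le n-2$ with $i\neq j$, and reduces the remaining requirement to $A_{[\cdot,n-1]}\,S_{n-1,j}+A_{[\cdot,n]}\,S_{n,j}=\bm 0$. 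Here minimality enters decisively: by Theorem \ref{thm:2-alias-irr-cond}, $\dav\neq\bm 0$ means the exit distributions $A_{[\cdot,n-1]}=P(\cdot\gn n-1)$ and $A_{[\cdot,n]}=P(\cdot\gn n)$ are \emph{distinct} probability vectors, hence linearly independent, forcing $S_{n-1,j}=S_{n,j}=0$. Thus the bottom-left and top-right blocks of $S$ both vanish and $S=\diag(D,T)$ with $D$ diagonal, while $\bm 1_n^\tran S=\bm 1_n^\tran$ gives $D=I_{n-2}$ and makes the columns of the $2\times2$ block $T$ sum to one; that is, $S=S(\tau_{n-1},\tau_n)$, with invertibility equivalent to $\det T=\tau_{n-1}-\tau_n\neq0$, and after swapping the two aliased labels if needed we take $\tau_{n-1}>\tau_n$. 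Translating back through $\Pi$ recovers $A'=\Pi\, A_H(\tau_{n-1},\tau_n)\,\Pi^{-1}$, $\statv'=\Pi\, S(\tau_{n-1},\tau_n)^{-1}\statv$ and $\prmv'=\Pi\prmv$, with $A'\ge0$ since $H'$ is a genuine HMM.

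I expect the main obstacle to be the careful statement and invocation of minimal-realization uniqueness in the continuous-output setting, namely guaranteeing $n'=n$ and the gauge normalization $\bm 1_n^\tran S=\bm 1_n^\tran$; once that framework is in place, the commutator computation, in which minimality (the linear independence of the two exit columns) annihilates the unwanted blocks of $S$, is the conceptual heart but is otherwise elementary.
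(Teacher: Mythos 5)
Your proof follows the route the paper intends. The ``if'' direction via the observable-operator product (Eq.~(\ref{eq:opr-rep})), the commutation $[S(\t_{n-1},\t_n),\obsOpr(y)]=0$ (which holds precisely because the lower-right block of $\obsOpr(y)$ is the scalar $f_{\prm_{n-1}}(y)I_2$), and the absorption $\bm 1_n^\tran S^{-1}=\bm 1_n^\tran$ is complete and correct. The ``only if'' direction uses exactly the similarity-transformation machinery the paper builds around $S(\t_{n-1},\t_n)$ and the realization-theoretic references it cites (\citet{finesso1990consistent,ito1992identifiability,vanluyten2008equivalence}); the paper does not actually print a standalone proof of this lemma in its appendix, so there is no alternative argument to diverge from. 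Your commutator computation is the right conceptual core: by (A1) the vectors $(f_{\prm_1}(y),\dots,f_{\prm_n}(y))$ span $\{d\in\R^n: d_{n-1}=d_n\}$, equating coefficients of the distinct densities kills $S_{ij}$ for $i\neq j$, $i\le n-2$, and reduces the remaining constraint in columns $j\le n-2$ to $A_{[\cdot,n-1]}S_{n-1,j}+A_{[\cdot,n]}S_{n,j}=\bm 0$, which $\dav\neq\bm 0$ (two \emph{distinct} probability vectors are linearly independent) converts into $S_{n-1,j}=S_{n,j}=0$.

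The one genuine gap is the step you flag but do not fill: you assume that a minimal 2A-HMM under (A1)--(A2) admits an order-$n$ minimal (reachable and observable) linear realization, so that equivalence of $H$ and $H'$ yields an invertible $S$ with $\bm 1_n^\tran S=\bm 1_n^\tran$, $\statv'=S^{-1}\statv$ and $A'O'(y)=S^{-1}A\,O(y)\,S$. Minimality in the HMM sense (fewest hidden states) does not automatically coincide with minimality of the linear realization, and if the Hankel rank were smaller than $n$ the similarity argument would collapse. This is exactly where Theorem~\ref{thm:2-alias-irr-cond} must be used quantitatively rather than just cited: $\dav\neq\bm 0$ is what prevents $\uo$ from being unobservable (if $BA\uo=\bm 0$ then $A\uo\propto\uo$ and every functional $\bm 1_n^\tran\obsOpr(y_k)A\cdots A\obsOpr(y_1)$ annihilates $\uo$), while, since $H$ is started from $\statv$, $\dalpv\neq\bm 0$ is what prevents the reachable subspace from being confined to the $(n-1)$-dimensional range of $C_\b$. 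Making this link explicit (or citing a realization theorem verified to apply to parametric, possibly continuous outputs) is needed to close the ``only if'' direction. A second, milder gap of the same flavor --- one the paper itself incurs in its proof of Theorem~\ref{thm:2-alias-irr-cond} --- is that deducing $\prmv'=\Pi\,\prmv$ from the one-dimensional marginals formally requires linear independence of the \emph{union} of the two parameter sets, i.e.\ order up to $2n-1$ rather than the order $n$ literally stated in (A1).
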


\paragraph{The feasible region.}
By Lemma \ref{lem:equiv}, any matrix $A_H(\tau_{n-1},\t_n)$ whose entries are all non-negative yields an HMM\ equivalent to the original one. 
We thus define the {\em feasible region} of $H$ 
by
\beqn
\label{eq:feas-reg-def}
\feasreg_H = \{(\tau_{n\sm1},\tau_n)\in\R^2 \gn 
A_H(\tau_{n\sm1},\tau_n)
\geq 0,\, \tau_{n\sm1} \!>\! \tau_n\}.
\eeqn

By definition, $\Gamma_H$ is non-empty, since $(\t_{n-1},\t_n)=(1,0)$ recover the original matrix $A$. As we show below, $\feasreg_H$ is determined by three simpler regions
$\feasreg_1, \feasreg_2,\feasreg_3 \subset \R^2$.
The region $\feasreg_1$ ensures that all entries of $A_H$ are non-negative
except possibly in the lower right $2\times 2$ block corresponding to the two aliased states.
The regions $\feasreg_2$ and $\feasreg_3$ ensure non-negativity of the latter,
depending on whether the aliased relative probabilities of (\ref{eq:alp-def})
satisfy
$\a_{n\sm1} \geq \a_n$ or $\a_{n\sm1} < \a_n$, respectively.
For ease of exposition we assume as a convention that $P(\ot\gn n\sm1)\geq P(\ot\gn n)$.
\begin{theorem}
\label{lem:feasreg}
Let $H$ be a minimal 2A-HMM satisfying Assumptions (A1-A2).
There exist 
$(\t^{\min}_{n\sm1},\t^{\min}_n)$,
$(\t^{\max}_{n\sm1},\t^{\max}_n)$,
$(\t^- ,\t^+)\in\R^2$, and convex monotonic decreasing functions $\gnn,\gcc:\R\to\R$ such that
\beq
\feasreg_H = 
\begin{cases}
\feasreg_1 \cap \feasreg_2 & \a_{n\sm1} \geq \a_n \\
\feasreg_1 \cap  \feasreg_3 & \a_{n\sm1} < \a_n,
\end{cases}
\eeq
where the regions $\feasreg_1,\feasreg_2,\feasreg_3\subset \R^2$ are given by
\beq
\feasreg_1 & = &
[\t^{\min}_{n\sm1},\t^{\max}_{n\sm1}]\times[\t^{\max}_n, \t^{\min}_n ]
\\[5pt]
\feasreg_2 &=& [\t^+, \infty ) \times [\t^-,\t^+]
\\[5pt]
\feasreg_3 & = &
\{(\t_{n\sm1},\t_n)\in \feasreg_1 
\gn \,
\gnn(\t_{n\sm1}) \leq \t_n \leq \gcc(\t_{n\sm1})\,
\}.
\eeq
In addition, the set $\feasreg_H$ is connected.
\end{theorem}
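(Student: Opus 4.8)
The plan is to compute the entries of $A_H(\t_{n\sm1},\t_n)=S^{-1}A\,S$ explicitly and read off the non-negativity constraints one block at a time. Writing $S=S(\t_{n\sm1},\t_n)$ in block form, with $I_{n-2}$ in the top-left corner and the invertible $2\times2$ block $S_2=\left(\begin{smallmatrix}\t_{n\sm1}&\t_n\\1\sm\t_{n\sm1}&1\sm\t_n\end{smallmatrix}\right)$ in the bottom-right, the conjugation respects this block structure. A short calculation, conveniently organized through Lemma \ref{lem:A-exp} using the identities $BS=B$, $S^{-1}C_\b=C_{\b'}$, $S^{-1}\uo=(\t_{n\sm1}\sm\t_n)^{-1}\uo$ and $\vo_\b^\tran S=(\t_{n\sm1}\sm\t_n)\,\vo_{\b'}^\tran$ with $\b'=(\b\sm\t_n)/(\t_{n\sm1}\sm\t_n)$, shows that $A_H$ has the very same decomposition as $A$, but with $\b$ replaced by $\b'$ and the rank-one exit/entrance terms rescaled by $(\t_{n\sm1}\sm\t_n)^{\pm1}$. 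Equivalently, $A_H$ has top-left block $\At$-related block $A_{[1:n\sm2]\times[1:n\sm2]}$, top-right block $A_{[1:n\sm2]\times\{n\sm1,n\}}S_2$, bottom-left block $S_2^{-1}A_{\{n\sm1,n\}\times[1:n\sm2]}$, and bottom-right block the pure similarity $S_2^{-1}A_2S_2$, where $A_2=A_{\{n\sm1,n\}\times\{n\sm1,n\}}$ is the aliased block. This reduction is what makes the entries tractable.

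With this in hand I would sort the constraints by block. The top-left block is constant and non-negative, so it imposes nothing. Each top-right entry is affine in a single variable, $(A_H)_{i,n\sm1}=\t_{n\sm1}P(i\gn n\sm1)+(1\sm\t_{n\sm1})P(i\gn n)$ and $(A_H)_{i,n}=\t_nP(i\gn n\sm1)+(1\sm\t_n)P(i\gn n)$ for $i\le n\sm2$, giving interval bounds on $\t_{n\sm1}$ and $\t_n$. Substituting $P(n\sm1\gn j)=\a_jP(\ot\gn j)$, the bottom-left entries simplify to $(A_H)_{n\sm1,j}=(\a_j\sm\t_n)P(\ot\gn j)/(\t_{n\sm1}\sm\t_n)$ and $(A_H)_{n,j}=(\t_{n\sm1}\sm\a_j)P(\ot\gn j)/(\t_{n\sm1}\sm\t_n)$; since $\t_{n\sm1}>\t_n$ and $P(\ot\gn j)\ge0$, these reduce to $\t_n\le\a_j$ and $\t_{n\sm1}\ge\a_j$ for every $j\in\supp\row\cap[n\sm2]$. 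Intersecting all these linear inequalities yields a product of intervals, which is exactly the rectangle $\feasreg_1$, its sides being the extreme admissible values of $\t_{n\sm1}$ and $\t_n$.

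The heart of the proof is the block $S_2^{-1}A_2S_2$. Its off-diagonal entries turn out to be univariate, $(A_H)_{n,n\sm1}=q(\t_{n\sm1})/(\t_{n\sm1}\sm\t_n)$ and $(A_H)_{n\sm1,n}=\tilde q(\t_n)/(\t_{n\sm1}\sm\t_n)$ for explicit quadratics $q,\tilde q$, so their non-negativity merely sharpens the interval bounds and, in the first regime, fixes the endpoints $\t^-,\t^+$. The two diagonal entries are genuinely bilinear in $(\t_{n\sm1},\t_n)$; solving $(A_H)_{n\sm1,n\sm1}\ge0$ and $(A_H)_{n,n}\ge0$ for $\t_n$ gives, on the range of $\t_{n\sm1}$ where the relevant denominators are positive, a lower bound $\t_n\ge\gnn(\t_{n\sm1})$ and an upper bound $\t_n\le\gcc(\t_{n\sm1})$, with $\gnn,\gcc$ Möbius functions, i.e.\ ratios of affine functions of $\t_{n\sm1}$. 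I would then establish monotonicity and convexity by direct differentiation: away from its pole a Möbius function is strictly monotone with second derivative of one sign, and the signs here, fixed by the convention $P(\ot\gn n\sm1)\ge P(\ot\gn n)$ and by minimality ($\dav\neq\bm0$, Theorem \ref{thm:2-alias-irr-cond}), make both curves decreasing and convex. The dichotomy between $\a_{n\sm1}\ge\a_n$ and $\a_{n\sm1}<\a_n$ enters through the sign of $q(\a_{n\sm1})$, which a computation shows equals $P(\ot\gn n)(1\sm\a_{n\sm1})(\a_{n\sm1}\sm\a_n)$: when $\a_{n\sm1}\ge\a_n$ the bilinear constraints are slack on the rectangle and the binding ones are the rectangular off-diagonal constraints, producing the half-strip $\feasreg_2$; when $\a_{n\sm1}<\a_n$ the two curves bind and carve out $\feasreg_3$.

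Finally, connectedness follows from the slice structure. For each fixed $\t_{n\sm1}$ in the interval where $\feasreg_H$ meets the vertical line, the feasible set of $\t_n$ is the intersection of the $\feasreg_1$ interval with $[\gnn(\t_{n\sm1}),\gcc(\t_{n\sm1})]$, or with $[\t^-,\t^+]$ in the $\feasreg_2$ case, hence a single interval whose endpoints vary continuously with $\t_{n\sm1}$. A region of the form $\{(x,y):a\le x\le b,\ \phi(x)\le y\le\psi(x)\}$ with $\phi\le\psi$ continuous is path-connected, which gives the claim. I expect the main obstacle to be the third step: correctly identifying which of the four aliased-block inequalities bind in each regime, proving the $\a_{n\sm1}$-versus-$\a_n$ dichotomy and the exact descriptions of $\feasreg_2$ and $\feasreg_3$, and above all controlling the signs of the Möbius denominators so that the monotonicity and convexity of $\gnn,\gcc$ hold throughout the feasible range rather than only locally.
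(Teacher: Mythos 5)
Your proposal is correct and follows essentially the same route as the paper's proof: an explicit entrywise computation of $A_H=S^{-1}AS$ (the paper phrases the non-aliased part geometrically via the columns of $BA$ on the simplex, you via the block structure of $S$, but the inequalities are identical), the rectangle $\feasreg_1$ from the non-aliased rows and columns, a univariate quadratic for the off-diagonal aliased entries giving $\t^{\pm}$, M\"obius-function bounds $\gnn,\gcc$ from the diagonal aliased entries with the case split on $\sign(\a_{n\sm1}-\a_n)$, and connectedness from the graph structure. The one point you should add is the degenerate case $\da_{n\sm1}=0$, which the paper treats separately because the quadratic collapses to a linear constraint with a single root $\t^0$ and $\gnn,\gcc$ become affine; your framework accommodates it but the formulas for $\t^{\pm}$ and the M\"obius poles must be replaced there.
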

\noindent
The feasible regions in the two possible cases ($\a_{n\sm1} \geq \a_n$ or $\a_{n\sm1} < \a_n$) are depicted in Appendix \ref{app:ident}, Fig.\ref{fig:feas}.


\paragraph{Strict Identifiability.}
By Lemma \ref{lem:equiv}, for strict identifiability of $H$,  ${\feasreg_H}$ should be the singleton set ${\feasreg_H}=\{(1,0)\}$. 
Due to lack of space, sufficient and necessary conditions for this to hold, as well as a corresponding simple procedure to determine whether a 2A-HMM is identifiable, are given in Appendix 
\ref{app:cond_feasreg}.

{\bf Remark.} 
While beyond the scope of this paper, we note that instead of strict identifiability of a given HMM, several works studied a different concept of {\em generic} identifiability
\citep{allman2009identifiability}, proving that under mild conditions the class of HMMs is generically identifiable. In contrast, if we restrict ourselves to the class of 2A-HMMs, then our Theorem \ref{lem:feasreg} implies that this class is generically \textit{non-identifiable}. The reason is that by Theorem \ref{lem:feasreg}, for any 2A-HMM\ whose matrix $A$ has all its entries positive, there are an infinite number of equivalent 2A-HMMs, implying non-identifiability.

\section{Learning a 2A-HMM}
\label{sec:learn}
Let $(Y_t)_{t=0}^{T-1}$ be an output sequence generated by a parametric-output HMM that satisfies Assumptions (A1-A2) and initialized with its stationary distribution, $X_0 \sim \statv$.
We assume the HMM is either non-aliasing, with $n\sm1$ states, or 2-aliasing with $n$ states.
We further assume that the HMM is minimal and identifiable, as otherwise its parameters cannot be uniquely determined.

In this section we study the problems of detecting whether the HMM is aliasing and recovering its output parameters $\prmv$ and transition matrix $A$, all in terms of $(Y_t)_{t=0}^{T-1}$. 

\begin{figure}
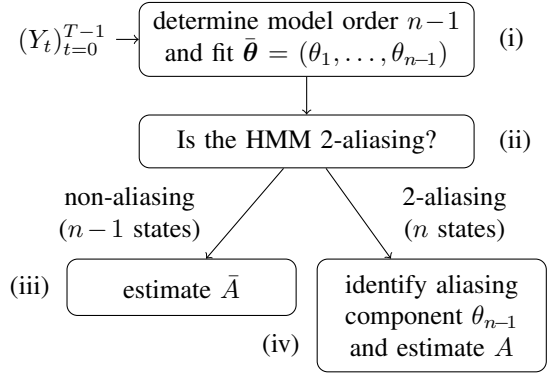
%
\centering
\tikz{
\node [block2, text width=12em] (nonaliased) 
{determine model order $n\sm1$ and fit $\prmotv=(\prm_1,\dots,\prm_{n\sm1})$
};
\node [right = .2cm of nonaliased] {\centering (i)};
\node [left =.3cm of nonaliased] (observ) 
{$(Y_{t})_{t=0}^{T-1}$}
edge [->] (nonaliased);    
\node [block2,text width=12em, below = .5cm of nonaliased] (detect) 
{Is the HMM 2-aliasing?}
edge [<-] (nonaliased);
\node [right = .2cm of detect] {\centering(ii)};
\node [block2red, minimum height = .75cm, below left = 1.2cm and -2.1cm of detect] (runold) 
{estimate $\At$
}
edge [<-] node [blockS,text width =8em ,above left = -.5cm and .0cm] 
{non-aliasing ($n\sm1$ states)} (detect)
;
\node [left = .1cm of runold] {(iii)};
\node [block2, minimum height = 1.5cm,below right = 1.2cm and -2.1cm of detect] (ident) 
{identify aliasing component $\prm_{n\sm1}$ and estimate $A$}
edge [<-] node [blockS,text width =6em , above right = -.5cm and .1cm] 
{2-aliasing ($n$ states)} (detect)
;
\node [below left = -.63cm and .1cm of ident] {(iv)};
}
\caption{Learning a 2A-HMM.}%
\label{fig:highlevel}%
\end{figure}
\paragraph{High level description.}
The proposed learning procedure consists of the following steps  
(see Fig.\ref{fig:highlevel}):
\begin{itemize}

\item[(i)]
Determine the number of output components $n\sm1$ and estimate the $n\sm1$ unique output distribution parameters $\prmotv$ and the projected stationary distribution $\statotv$.

\item[(ii)] 
Detect if the HMM is 2-aliasing.

\item[(iii)] 
In case of a non-aliased HMM, estimate the $(n\sm1)\times(n\sm1)$ transition matrix $\At$, as for example in \citet{kontorovich2013learning} or \citet{DBLP:conf/colt/Anand12}. 

\item[(iv)]
In case of a 2-aliased HMM, identify the component $\prm_{n\sm1}$ 
corresponding to the two aliased states,
and estimate the $n\times n$ transition matrix $A$.
\end{itemize}
We now describe in detail each of these steps.
As far as we know,
 our learning procedure
is the first to consistently learn a 2A-HMM in a computationally efficient way.
In particular, the solutions for problems (ii) and (iv) are new.

\paragraph{Estimating the output distribution parameters.}
As the HMM is stationary, each observable \(Y_t\) is a random realization from the following {\em parametric mixture model},
\begin{equation}
Y\sim \sum_{i=1}^{n\sm1} \statot_i f_{\prmot_i}(y).
        \label{eq:p_y}
\end{equation}
Hence, 
the number of unique output components $n-1$, the corresponding 
output parameters \(\prmotv\) and the projected stationary distribution \(\statotv\)
can be estimated by fitting a mixture model 
(\ref{eq:p_y}) to the observed output sequence \((Y_t)_{t=0}^{T-1}\).

Consistent methods to 
determine the number of components in a mixture
are well known in the literature \citep{titterington1985statistical}.
The estimation of $\prmotv$ and $\statotv$
is typically done by either applying an EM algorithm, or any recently developed spectral method \citep{dasgupta1999learning, achlioptas2005spectral,DBLP:conf/colt/Anand12}.
As our focus is on the aliasing aspects of the HMM, in what follows we assume that the number of unique output components $n\sm1$,  the output parameters $\prmotv$ and the projected stationary distribution $\statotv$ are {\em exactly known}.
As in \citet{kontorovich2013learning}, it is possible to show that our method is robust to small perturbations in these quantities (not presented).

\subsection{Moments}
To solve problems (ii), (iii) and (iv) above,
we first introduce the moment-based quantities we shall make use of.
Given $\prmotv$ and $\statotv$ or estimates of them, for any $i,j\in[n\sm1]$, we define the \emph{second order moments with time lag $t$} by 
\beqn
\label{eq:mom-def}
\momK^{(t)}_{ij} = \E[f_{\prm_i}(Y_0) f_{\prm_j}(Y_{t})]
,\quad
 t\in\{1,2,3\}.
\eeqn
The consecutive in time \emph{third order moments} are defined by
\beqn
\label{eq:momT-def}
\momTK^{(c)}_{ij} = 
\E[f_{\prm_i}(Y_0) f_{\prm_c}(Y_{1}) f_{\prm_j}(Y_{2})]
,\quad \forall c\in[n\sm1].
\eeqn
We also define the \emph{lifted kernel},
$\kernelExp = B^{\tran} \kernel B \in \R^{n\times n}.
$
One can easily verify that 
for a 2A-HMM,
\beqn
\label{eq:mom-2A}
\momK^{(t)} & =  & \kernel B A^{t} C_\b \diag(\statotv) \kernel\\
\momTK^{(c)} 
& = & \kernel B A \diag( {\kernelExp}_{[\cdot,c]} ) A C_\b \diag(\statotv) \kernel.
\eeqn
Next we define the {\em kernel free} moments $\mom^{(t)},\momT^{(c)}\in\R^{(n\sm1)\times(n\sm1)}$ as follows:
\beqn
\label{eq:striped-mom}
\mom^{(t)} &\! =\! &  \kernel^{-1} \momK^{(t)} \kernel^{-1} \diag(\statotv)^{-1}
\\
\label{eq:striped-momT}
\momT^{(c)} & \!=\! &  \kernel^{-1} \momTK^{(c)} \kernel^{-1} \diag(\statotv)^{-1}.
\eeqn
Note that by Assumption (A1),
the kernel $\kernel$ is full rank
and thus $\kernel^{-1}$ exists.
Similarly, by (A2) $\statotv > 0$, so $\diag(\statotv)^{-1}$ also exists.
Thus,  
(\ref{eq:striped-mom},\ref{eq:striped-momT}) are well defined.

Let $\dmom{2},\dmom{3},\dmomT{c}\in\R^{(n\sm1)\times (n\sm1)}$ be given
by
\begin{align}
\label{eq:deltaM2}
\dmom{2} &= \mom^{(2)} - (\mom^{(1)})^2  \\
\label{eq:deltaM3}
\dmom{3} &= \mom^{(3)} \sm \mom^{(2)} \mom^{(1)} 
 \sm \mom^{(1)}  \mom^{(2)} +
(\mom^{(1)})^3
\\
\label{eq:deltaGk}
\dmomT{c}  &= \momT^{(c)} - \mom^{(1)} \diag(\kernel_{[\cdot,c]}) \mom^{(1)}.
\end{align}
The following key lemma relates the moments (\ref{eq:deltaM2}, \ref{eq:deltaM3}, \ref{eq:deltaGk}) to the decomposition (\ref{eq:A-exp}) of the transition matrix $A$.
\begin{lemma}
\label{lem:del-rel}
Let $H$ be a minimal 2A-HMM with aliased states $n\sm1$ and $n$.
Let $\At$, $\dav$, $\dalpv$ and $\dmomo$ be defined in (\ref{eq:Atilde},\ref{eq:deltaa},\ref{eq:deltaalp},\ref{eq:deltaM1}) respectively.
Then the following relations hold:
\beqn
\label{eq:momtoA1}
\mom^{(1)} & = & \At
\\
\label{eq:dmomtoA2}
\dmom{2} & = & \dav (\dalpv)^\tran
\\
\label{eq:dmomtoA3}
\dmom{3} & = & \dmomo \dmom{2}
\\
\label{eq:dmomTtoA}
\dmomT{c} & = &
\kernel_{{n\sm1},c} \dmom{2}
,\quad \forall c\in[n\sm1].
\eeqn
\end{lemma}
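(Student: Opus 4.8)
The plan is to simultaneously block-diagonalize the transition matrix $A$ via the four maps $B,C_\b,\uo,\vo_\b$ appearing in (\ref{eq:A-exp}), after which each of the four moment identities reduces to reading off a single $(n\sm1)\times(n\sm1)$ block of either a power, or a sandwiched product, of one small structured matrix. The starting point is a handful of elementary relations obtained by direct inspection of the explicit forms of $B$, $C_\b$, $\uo=(0,\dots,0,1,-1)^\tran$ and $\vo_\b^\tran=(0,\dots,0,1\sm\b,-\b)$, namely $BC_\b=I_{n\sm1}$, $B\uo=\bm0$, $\vo_\b^\tran C_\b=\bm0$ and $\vo_\b^\tran\uo=1$. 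Checking that the only nontrivial ($2\times2$) block of $C_\b B+\uo\vo_\b^\tran$ equals $I_2$, I would conclude the resolution of identity
\[
C_\b B+\uo\,\vo_\b^\tran=I_n .
\]
Setting $U=[\,C_\b\mid\uo\,]\in\R^{n\times n}$ and $V=\bigl(\begin{smallmatrix}B\\ \vo_\b^\tran\end{smallmatrix}\bigr)\in\R^{n\times n}$, the four relations say exactly that $VU=I_n$, i.e.\ $V=U^{-1}$.

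Next I would block-diagonalize $A$ itself. Substituting (\ref{eq:A-exp}) and applying the four relations to each of its four terms, every cross term collapses and
\[
U^{-1}AU=VAU=\begin{pmatrix}\At & \dav\\[2pt] (\dalpv)^\tran & \dmomo\end{pmatrix}=:\Phi ,
\]
with blocks $BAC_\b=\At$, $BA\uo=\dav$, $\vo_\b^\tran AC_\b=(\dalpv)^\tran$ and $\vo_\b^\tran A\uo=\dmomo$; for instance only the term $C_\b\dav\vo_\b^\tran$ of (\ref{eq:A-exp}) contributes to $BA\uo$. Because the kernels and $\diag(\statotv)$ cancel in (\ref{eq:striped-mom}) by (\ref{eq:mom-2A}), one has $\mom^{(t)}=BA^tC_\b$; inserting $A^t=U\Phi^tU^{-1}$ together with $BU=[\,I_{n\sm1}\mid\bm0\,]$ and $VC_\b=\bigl(\begin{smallmatrix}I_{n\sm1}\\ \bm0\end{smallmatrix}\bigr)$ shows $\mom^{(t)}$ is exactly the top-left $(n\sm1)\times(n\sm1)$ block of $\Phi^t$. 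Taking $t=1$ proves (\ref{eq:momtoA1}).

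Writing $M=\At$, $p=\dav$, $q=\dalpv$ and $d=\dmomo$, I would expand $\Phi^2$ and $\Phi^3$ in block form; their top-left blocks are $M^2+pq^\tran$ and $M^3+Mpq^\tran+pq^\tran M+d\,pq^\tran$. Substituting $\mom^{(1)}=M$, $\mom^{(2)}=M^2+pq^\tran$ and $\mom^{(3)}=M^3+Mpq^\tran+pq^\tran M+d\,pq^\tran$ into the definitions (\ref{eq:deltaM2}),(\ref{eq:deltaM3}), the telescoping differences cancel every term except $pq^\tran=\dav(\dalpv)^\tran$ for $\dmom{2}$, and $d\,pq^\tran$ for $\dmom{3}$; this is (\ref{eq:dmomtoA2}), and since $d\,pq^\tran=\dmomo\dmom{2}$ also (\ref{eq:dmomtoA3}). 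These cancellations are routine bookkeeping once $\Phi$ is in hand.

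The third-order identity (\ref{eq:dmomTtoA}) uses the same conjugation applied to $\momT^{(c)}=BA\,D_cA\,C_\b$, with $D_c=\diag(\kernelExp_{[\cdot,c]})$ (from (\ref{eq:striped-momT}),(\ref{eq:mom-2A})): inserting $UU^{-1}$ twice, $\momT^{(c)}$ equals the top-left block of $\Phi\,(U^{-1}D_cU)\,\Phi$. The crux---and the step I expect to be the main obstacle---is showing that $D_c$ acts as the scalar $\kernel_{n\sm1,c}$ on both aliased directions. Since $\kernelExp=B^\tran\kernel B$ merges the aliased coordinates, the last two diagonal entries of $D_c$ are both equal to $\kernel_{n\sm1,c}$, whence $D_c\uo=\kernel_{n\sm1,c}\,\uo$ and $\vo_\b^\tran D_c=\kernel_{n\sm1,c}\,\vo_\b^\tran$, while $BD_cC_\b=\diag(\kernel_{[\cdot,c]})$. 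Hence $U^{-1}D_cU$ is block-diagonal with blocks $\diag(\kernel_{[\cdot,c]})$ and $\kernel_{n\sm1,c}$, and the top-left block of $\Phi\,(U^{-1}D_cU)\,\Phi$ is $M\diag(\kernel_{[\cdot,c]})M+\kernel_{n\sm1,c}\,pq^\tran$. Subtracting the defining term $\mom^{(1)}\diag(\kernel_{[\cdot,c]})\mom^{(1)}=M\diag(\kernel_{[\cdot,c]})M$ in (\ref{eq:deltaGk}) leaves exactly $\kernel_{n\sm1,c}\,\dav(\dalpv)^\tran=\kernel_{n\sm1,c}\,\dmom{2}$, proving (\ref{eq:dmomTtoA}). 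Everything beyond this scalar-action property is bookkeeping once $A=U\Phi U^{-1}$ is established.
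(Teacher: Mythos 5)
Your proof is correct and is essentially the paper's argument in different packaging: the resolution of identity $C_\b B+\uo\vo_\b^\tran=I_n$ and the blocks $BAC_\b=\At$, $BA\uo=\dav$, $\vo_\b^\tran AC_\b=(\dalpv)^\tran$, $\vo_\b^\tran A\uo=\dmomo$ are exactly the identities the paper inserts between factors of $BA^tC_\b$ (and your scalar-action observation for $D_c$ is the paper's identity $\diag(\kernelExp_{[\cdot,c]})-C_\b\diag(\kernel_{[\cdot,c]})B=\kernel_{n\sm1,c}\uo\vo_\b^\tran$). Organizing it as conjugation by $U=[\,C_\b\mid\uo\,]$ and reading off top-left blocks of $\Phi^t$ is a clean way to present the same computation.
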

\noindent
In the following, these relations will be used to detect aliasing, identify the aliased states and recover the aliased transition matrix $A$.

\paragraph{Empirical moments.}
In practice, the unknown moments  (\ref{eq:mom-def},\ref{eq:momT-def}) are estimated 
from the output sequence
$(Y_t)_{t=0}^{T-1}$ by
\beq
\emomK^{(t)}_{ij} & = & \oo{T-t}\sum_{l=0}^{T-t-1} f_{\prm_i}(Y_l) f_{\prm_j}(Y_{l+t}), \\
\emomTK^{(c)}_{ij} & = & \oo{T-2}\sum_{l=0}^{T-3} f_{\prm_i}(Y_l) f_{\prm_c}(Y_{l+1}) f_{\prm_j}(Y_{l+2}).
\eeq
With $\kernel,\statotv$ known,
the corresponding empirical kernel free moments are given by
\beqn
\label{eq:smom}
\emom^{(t)}  & = & \kernel^{-1} \emomK^{(t)}  \kernel^{-1} \diag(\statotv)^{-1}\\
\label{eq:smomT}
\emomT^{(c)}  & = & \kernel^{-1} \emomTK^{(c)}  \kernel^{-1} \diag(\statotv)^{-1}.
\eeqn
The empirical estimates for
(\ref{eq:deltaM2},\ref{eq:deltaM3},\ref{eq:deltaGk}) similarly follow.

To analyze the error between the empirical and population quantities, we make the following additional assumption:

({\bf A3}) The output distributions are bounded. Namely there exists $L>0$ such that $\forall i\in[n]$ and $\forall y\in\Y$, $f_{\prm_i}(y)\leq L$.
\begin{lemma}
\label{lem:err-asymp}
Let $(Y_t)_{t=0}^{T-1}$ be an output sequence generated by an HMM satisfying Assumptions (A1-A3).
Then, as $T\to\infty$, for any $t\in\{1,2,3\}$ and $c\in[n\sm1]$,
all error terms $\emom^{(t)} - \mom^{(t)}$, $\edmom{t} - \dmom{t}$ and
$ \edmomT{c} - \dmomT{c}$ are $O_P(T^{-\oo2})$.  
\end{lemma}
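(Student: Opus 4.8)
The plan is to reduce the whole statement to a single probabilistic estimate, namely the $O_P(T^{-1/2})$ rate for the \emph{raw} empirical moment matrices $\emomK^{(t)}$ and $\emomTK^{(c)}$, and then to propagate this rate deterministically through the fixed linear maps and polynomial combinations that produce $\emom^{(t)}$, $\emomT^{(c)}$, $\edmom{t}$ and $\edmomT{c}$. The key structural observation is that the only randomness enters through the raw moments; every downstream quantity is a smooth (in fact polynomial) function of them, composed with matrices that do not depend on $T$.

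First I would prove $\emomK^{(t)} - \momK^{(t)} = O_P(T^{-1/2})$ and the analogous bound for $\emomTK^{(c)}$. Fixing one entry $\emomK^{(t)}_{ij}$, it is the average over $N = T-t$ terms of the stationary sequence $g_l := f_{\prm_i}(Y_l) f_{\prm_j}(Y_{l+t})$, whose common mean is exactly $\momK^{(t)}_{ij}$. By Assumption (A3) each $g_l$ is bounded by $L^2$ (and by $L^3$ in the third-order case), so all its moments exist; since the chain is started at stationarity it then suffices to bound the variance of the sample mean and apply Chebyshev's inequality. For the variance bound I would use that, by Assumption (A2), the hidden chain is finite-state and ergodic (irreducible and aperiodic), hence uniformly geometrically ergodic and exponentially $\beta$-mixing. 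Each $g_l$ is a bounded functional of the finite window $(X_l,\dots,X_{l+t})$ together with the conditionally independent outputs on that window, so the augmented process $(X_l,Y_l)$ is Markov with the same geometric mixing rate and $g_l$ inherits it. Consequently $|\mathrm{Cov}(g_0,g_k)|\le C\rho^k$ for some $\rho\in(0,1)$, and
\[
\mathrm{Var}\Big(\tfrac{1}{N}\sum_{l} g_l\Big)
\le \frac{1}{N}\Big(\mathrm{Var}(g_0) + 2\sum_{k\ge1}|\mathrm{Cov}(g_0,g_k)|\Big)
= O(1/N).
\]
As $N = T-t \sim T$, Chebyshev gives the entrywise rate, and since the matrices have fixed finite dimension this is the stated matrix rate; the same argument with triple-product summands handles $\emomTK^{(c)}$.

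Then I would propagate. The kernel-free moments are $\emom^{(t)} = \kernel^{-1}\emomK^{(t)}\kernel^{-1}\diag(\statotv)^{-1}$, where under (A1)--(A2) the matrices $\kernel^{-1}$ and $\diag(\statotv)^{-1}$ are fixed and of finite norm; hence $\emom^{(t)}-\mom^{(t)} = \kernel^{-1}(\emomK^{(t)}-\momK^{(t)})\kernel^{-1}\diag(\statotv)^{-1}$ is $O_P(T^{-1/2})$, and likewise for $\emomT^{(c)}$. The quantities $\dmom{2},\dmom{3},\dmomT{c}$ in (\ref{eq:deltaM2},\ref{eq:deltaM3},\ref{eq:deltaGk}) are fixed polynomials in $\mom^{(t)}$ and $\momT^{(c)}$. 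Writing $\hat M = M + O_P(T^{-1/2})$ and $\hat N = N + O_P(T^{-1/2})$ with $M,N$ of bounded norm, the identity $\hat M\hat N - MN = (\hat M - M)N + M(\hat N - N) + (\hat M - M)(\hat N - N)$ shows each product error is $O_P(T^{-1/2})$, the cross term being $O_P(T^{-1})$. Applying this repeatedly to each product and sum in (\ref{eq:deltaM2},\ref{eq:deltaM3},\ref{eq:deltaGk}) gives $\edmom{t}-\dmom{t} = O_P(T^{-1/2})$ and $\edmomT{c}-\dmomT{c} = O_P(T^{-1/2})$.

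The main obstacle is the variance bound for the raw moments, i.e.\ converting the qualitative ergodicity of (A2) into a quantitative geometric-mixing statement and checking that the overlapping, windowed summands inherit this mixing; once the $O(1/N)$ variance is established, everything else is routine $O_P$ bookkeeping. Assumption (A3) enters precisely at this step, ensuring the summands are bounded so that the variances and the covariance--mixing inequalities are well defined.
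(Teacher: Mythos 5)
Your proposal is correct, and it reaches the same two-stage structure as the paper (bound the raw empirical moments $\emomK^{(t)},\emomTK^{(c)}$, then push the $O_P(T^{-1/2})$ rate through the fixed matrices $\kernel^{-1}$, $\diag(\statotv)^{-1}$ and the polynomial combinations), but the key probabilistic step is handled by a genuinely different tool. The paper converts Assumption (A2) into quantitative geometric ergodicity and then invokes a Hamming--Lipschitz concentration inequality for functions of HMM outputs (Theorem \ref{lem:hmm-conc}, from Kontorovich--Weiss), using (A3) to verify that $(T-t)\emomK^{(t)}_{ij}$ is $(t+1)L^2$-Lipschitz; this yields sub-Gaussian tails at scale $T^{-1/2}$ and the claim follows by a union bound over entries. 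You instead bound the variance of the sample mean via geometric decay of the covariances $|\mathrm{Cov}(g_0,g_k)|\le C\rho^k$ of the windowed, bounded summands and apply Chebyshev. Both arguments are valid and both deliver exactly the $O_P(T^{-1/2})$ rate asserted in the lemma; your route is more elementary and self-contained (no external concentration theorem), at the cost of producing only a second-moment (polynomial) tail rather than the exponential tail the paper's inequality provides, which is the stronger form of control the authors have available for the subsequent high-probability statements. The one point you should make explicit is the handling of the finitely many lags $k\le t$ where the windows of $g_0$ and $g_k$ overlap: there the mixing bound does not apply directly, but (A3) bounds those covariances by a constant, and since $t\le 3$ they contribute $O(1/N)$ to the variance anyway. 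Your propagation step, including the product identity $\hat M\hat N - MN=(\hat M-M)N+M(\hat N-N)+(\hat M-M)(\hat N-N)$, matches the paper's (terser) remark that the $\dmom{t}$ are low-order polynomials of the $\mom^{(t)}$.
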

In fact, due to strong mixing, all of the above quantities are asymptotically normally distributed \citep{bradley05}.

\subsection{Detection of aliasing}
We now proceed to detect if the HMM is aliased (step (ii) in Fig.\ref{fig:highlevel}).
We pose this as a hypothesis testing problem:
\beq
\begin{array}{l}
\H_0 :\,   \text{$H$ is non-aliased with $n\sm1$ states}
\\
\hspace{75pt}\text{vs.}
\\
\H_1 :\,   \text{$H$ is 2-aliased with $n$ states}.
\end{array}
\eeq
We begin with the following simple observation:
 \begin{lemma}
Let $H$ be a minimal non-aliased HMM with $n\sm1$ states, satisfying Assumptions (A1-A3).
Then $\dmom{2}=0$.
\end{lemma}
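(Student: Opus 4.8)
The plan is to show that for a non-aliased HMM the kernel-free moments are simply powers of a single matrix, namely $\mom^{(t)}=(\mom^{(1)})^t$; the claim $\dmom{2}=0$ then follows at once by setting $t=2$ in the definition (\ref{eq:deltaM2}).

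First I would establish the moment factorization for a non-aliased HMM with $n\sm1$ states. This is the exact analogue of the 2A identity (\ref{eq:mom-2A}), but with no states to merge: formally the lift and projection operators reduce to the identity, $B=C_\b=I_{n\sm1}$, and the pseudo-state vector $\statotv$ coincides with the genuine stationary distribution of the $(n\sm1)$-state chain. Concretely, conditioning on the hidden states $X_0,X_t$ and using $\E[f_{\prm_i}(Y_0)\gn X_0\!=\!a]=\kernel_{ia}$ together with the Markov identity $\Pr(X_0\!=\!a,X_t\!=\!b)=\stat_a(A^t)_{ba}$, the moment (\ref{eq:mom-def}) factorizes as
\beq
\momK^{(t)} = \kernel\, A^{t}\, \diag(\statotv)\, \kernel,
\eeq
where $A$ is now the $(n\sm1)\times(n\sm1)$ transition matrix of the non-aliased HMM. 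Substituting into the kernel-free moment (\ref{eq:striped-mom}) and cancelling --- which is legitimate since $\kernel$ is full rank by (A1) and $\statotv>0$ by (A2), so $\kernel^{-1}$ and $\diag(\statotv)^{-1}$ exist --- gives
\beq
\mom^{(t)} = \kernel^{-1}\momK^{(t)}\kernel^{-1}\diag(\statotv)^{-1} = A^{t}.
\eeq
In particular $\mom^{(1)}=A$ and $\mom^{(2)}=A^2=(\mom^{(1)})^2$, so $\dmom{2}=\mom^{(2)}-(\mom^{(1)})^2=0$.

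The algebra is routine once the factorization is in hand, so the only real content is the moment identity, and the one point to get right is the index bookkeeping of the convention $A_{ij}=P(i\gn j)$ inside $\Pr(X_0\!=\!a,X_t\!=\!b)$, so that the one- and $t$-step transition operators align and the kernel-free operation recovers exactly $A^t$. I note that the conclusion is in any case robust to this bookkeeping: even if one instead obtains a conjugate such as $\diag(\statotv)(A^t)^\tran\diag(\statotv)^{-1}$, the relation $\mom^{(t)}=(\mom^{(1)})^t$ still holds and hence $\dmom{2}=0$. This also explains why $\dmom{2}$ vanishes precisely in the absence of aliasing: the rank-one aliased perturbations in the decomposition (\ref{eq:A-exp}) are exactly what break $\mom^{(2)}=(\mom^{(1)})^2$ and render $\dmom{2}=\dav(\dalpv)^\tran$ nonzero, as quantified in Lemma \ref{lem:del-rel}.
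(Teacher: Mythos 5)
Your proof is correct and follows exactly the route the paper intends: the paper treats this as a ``simple observation'' whose content is the non-aliased specialization of Eq.~(\ref{eq:mom-2A}) with $B=C_\b=I_{n\sm1}$, giving $\mom^{(t)}=(\mom^{(1)})^{t}$ and hence $\dmom{2}=0$ --- equivalently, the factor $I_n-C_\b B$ appearing in the paper's computation $\dmom{2}=BA(I_n-C_\b B)AC_\b$ vanishes. Your remark that the conclusion survives the transpose/conjugation bookkeeping in the convention $A_{ij}=P(i\gn j)$ is a sensible extra safeguard and does not change the argument.
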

In contrast, 
if $H$ is 2-aliasing then according to (\ref{eq:dmomtoA2}) we have 
\(
\dmom{2}  = \dav (\dalpv)^\tran.
\)
In addition, since the HMM is assumed to be minimal and started from the stationary distribution, Theorem \ref{thm:2-alias-irr-cond} implies that both $\dav\neq 0$ and $\dalpv \neq 0$.
Thus
$\dmom{2}$ is exactly a rank-$1$ matrix, which
we write as
\beqn
\label{eq:dmom2svd}
\dmom{2} =\sigma \bm u \bm v^{\tran}
\quad \text{with} \quad
 \nrm{\u}_2 = \nrm{\v}_2 = 1,
\quad \sigma > 0,
\eeqn
where $\sigma$ is the unique non-zero singular value of $\dmom{2}$.
Hence, our hypothesis testing problem takes the form:
\beq
\H_0:\, \dmom{2} = 0
\quad
\text{vs.}
\quad
\H_1:\, \dmom{2} = \sigma \bm u \bm v^{\tran} \text{ with } \sigma > 0. 
\eeq
In practice, we only have the empirical estimate $\edmom{2}$. 
Even if $\sigma=0$, this matrix is typically full rank with $n\sm1$ non-zero singular values.
Our problem is thus detecting the rank of a matrix from a noisy version of it. There are multiple methods to do so. In this paper, motivated by \citet{kritchman2009}, we adopt the largest singular value  $\hat\sigma_1$  
 of $\edmom{2}$
 as our test statistic. The resulting test is 
\beqn
\label{eq:ths}
\text{if }  \hat\sigma_1\geq \ths \text{ return } \H_1, \text{ otherwise return } \H_0,
\eeqn
where $h_T$ is a predefined threshold. 
By Lemma \ref{lem:err-asymp}, as $T\to\infty$ the singular values of $\edmom{2}$ converge to those of $\dmom{2}$.
Thus, as the following lemma shows, with a suitable threshold this test is asymptotically consistent. 

\begin{lemma}
\label{lem:detect}
Let $H$ be a minimal HMM satisfying Assumptions (A1-A3) which is either non-aliased or 2-aliased.
Then for any $0\!<\!\eps\!<\!\oo{2}$, the test (\ref{eq:ths}) with $\ths = \Omega(T^{-\oo{2}+ \eps})$ is consistent:
 as $T\to \infty$, with probability one, it will 
 correctly 
 detect whether the HMM is non-aliased or 2-aliased.
\end{lemma}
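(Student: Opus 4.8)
The entire argument reduces to a singular-value perturbation comparison between the empirical matrix $\edmom{2}$ and its population counterpart $\dmom{2}$, since the two hypotheses are already separated at the population level. Under $\H_0$ the preceding lemma gives $\dmom{2}=0$, so its largest singular value is $\sigma_1(\dmom{2})=0$; under $\H_1$, combining (\ref{eq:dmomtoA2}) with the minimality conclusion $\dav\neq 0$ and $\dalpv\neq 0$ of Theorem \ref{thm:2-alias-irr-cond} forces $\dmom{2}=\dav(\dalpv)^\tran$ to be exactly rank one, so by (\ref{eq:dmom2svd}) its largest singular value is a fixed positive constant $\sigma_1(\dmom{2})=\sigma>0$. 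The plan is to show that the statistic $\hat\sigma_1$ tracks $\sigma_1(\dmom{2})$ closely enough that the threshold $\ths$ separates these two cases.

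First I would control the fluctuation of $\hat\sigma_1$ through Weyl's inequality for singular values, $|\hat\sigma_1-\sigma_1(\dmom{2})|\le \nrm{\edmom{2}-\dmom{2}}$, where $\nrm{\cdot}$ denotes the spectral norm. Because $\edmom{2}$ and $\dmom{2}$ are matrices of fixed size $(n\sm1)\times(n\sm1)$, the spectral norm is equivalent to the entrywise error controlled by Lemma \ref{lem:err-asymp}, so $\nrm{\edmom{2}-\dmom{2}}=\O(T^{-\oo2})$. The role of the assumption $\ths=\Omega(T^{-\oo2+\eps})$ with $0<\eps<\oo2$ is precisely to place the threshold between the two relevant scales: since $\eps>0$ it dominates the stochastic fluctuation, $T^{\oo2}\ths\to\infty$, while since $\eps<\oo2$ it still vanishes, $\ths\to0$.

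I would then finish by treating the two hypotheses separately. Under $\H_0$, $\hat\sigma_1=\nrm{\edmom{2}}=\nrm{\edmom{2}-\dmom{2}}=\O(T^{-\oo2})=o_P(\ths)$, so $\Pr(\hat\sigma_1\ge\ths)\to0$ and the test returns $\H_0$. Under $\H_1$, Weyl's inequality gives $\hat\sigma_1\ge\sigma-\nrm{\edmom{2}-\dmom{2}}\to\sigma>0$ in probability, while $\ths\to0$; hence for all large $T$ the event $\{\hat\sigma_1<\ths\}$ is contained in $\{\hat\sigma_1<\sigma/2\}$, whose probability tends to zero, so the test returns $\H_1$. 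In both cases the detection-error probability vanishes as $T\to\infty$.

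The only genuine subtlety in the core argument is the calibration of $\ths$, encapsulated in the two-sided bound on $\eps$: any threshold decaying strictly slower than $T^{-\oo2}$ but still to zero works, which is exactly what $\Omega(T^{-\oo2+\eps})$ with $0<\eps<\oo2$ guarantees. To upgrade the convergence-in-probability conclusion above to the stated almost-sure statement, I would replace Lemma \ref{lem:err-asymp} by the ergodic theorem: under Assumption (A2) the output process is stationary and ergodic, so the empirical moments converge a.s.\ to their population values and hence $\hat\sigma_1\to\sigma_1(\dmom{2})$ a.s.\ by continuity of singular values. Under $\H_1$ the fixed positive gap $\sigma$ then yields a.s.\ eventual correct classification immediately; under $\H_0$, where $\ths$ also vanishes, the hard part is that a.s.\ convergence $\hat\sigma_1\to0$ alone does not guarantee $\hat\sigma_1<\ths$ eventually, so one additionally needs the a.s.\ rate $\hat\sigma_1=o(T^{-\oo2+\eps})$, which follows from a law of the iterated logarithm for the strongly mixing output process, the same mixing that underlies the asymptotic normality noted after Lemma \ref{lem:err-asymp}.
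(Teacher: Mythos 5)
Your proposal is correct and follows essentially the same route as the paper: Weyl's inequality for singular values combined with the $O_P(T^{-\oo{2}})$ bound of Lemma \ref{lem:err-asymp}, plus the observation that a threshold $\ths=\Omega(T^{-\oo{2}+\eps})$ with $0<\eps<\oo{2}$ sits strictly between the noise scale and the constant population gap $\sigma$. You are in fact more careful than the paper about the stated ``with probability one'' conclusion---the paper's own proof only argues correct detection with high probability---though the simplest upgrade is to feed the exponential concentration inequality behind Lemma \ref{lem:err-asymp} into a Borel--Cantelli argument rather than invoking the ergodic theorem and a law of the iterated logarithm as you sketch.
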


\paragraph{Estimating the non-aliased transition matrix $\At$.}
If the HMM was detected as non-aliasing,
then
its  $(n\sm1)\times(n\sm1)$ transition matrix $\At$
can be estimated 
for example by the spectral methods given in
\citet{kontorovich2013learning} or
\citet{DBLP:conf/colt/Anand12}.
It is shown there, that 
these methods are (strongly) consistent.
Moreover, as $T\to\infty$,
\beqn
\label{eq:Abar-cons}
\eAt = \At + O_{P}(T^{-\oo{2}}).
\eeqn

\subsection{Identifying the aliased component $\prm_{n\sm1}$}
Assuming the HMM was detected as 2-aliasing,
our next task, step (iv), is to identify the aliased component.
Recall that if the aliased component is $\prm_{n\sm1}$, then by
 (\ref{eq:dmomTtoA})
\beq
\dmomT{c} & = &
\kernel_{{n\sm1},c} \dmom{2}
,\quad \forall c\in[n\sm1].
\eeq
We thus estimate the index $i\in[n\sm1]$ of the aliased component by solving the following least squares problem: 
\beqn
\label{eq:acestimate}
\ac = \argmin_{i\in[n\sm1]} \sum_{c\in [n\sm1]} 
\nrm{\edmomT{c} - \kernel_{i,c} \edmom{2} }_{\frob}^2.
\eeqn
The following result shows this method is consistent. 
\begin{lemma} 
\label{lem:identify}
For a minimal 2A-HMM satisfying Assumptions (A1-A3) with aliased states $n\sm1$ and $n$,
\beq
\lim_{T\to\infty} \Pr( {\ac} \neq n \sm1) = 0.
\eeq
\end{lemma}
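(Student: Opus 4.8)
The plan is to recast (\ref{eq:acestimate}) as an M-estimation problem: show that the \emph{population} least-squares objective is uniquely minimized at the true index $i=n-1$, and then transfer this to the empirical objective using the convergence rates of Lemma \ref{lem:err-asymp}. Accordingly, define the population objective
\[
J(i) \;=\; \sum_{c\in[n-1]} \nrm{\dmomT{c} - \kernel_{i,c}\,\dmom{2}}_{\frob}^2 ,
\]
and let $\hat J(i)$ be its empirical counterpart, obtained by replacing $\dmomT{c}$ and $\dmom{2}$ with $\edmomT{c}$ and $\edmom{2}$; the estimator is $\ac = \argmin_{i}\hat J(i)$.

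First I would evaluate $J$ at the true index. By relation (\ref{eq:dmomTtoA}) of Lemma \ref{lem:del-rel}, $\dmomT{c} = \kernel_{n-1,c}\,\dmom{2}$ for every $c$, so $J(n-1)=0$. For any other index $i\neq n-1$, substituting the same relation gives $\dmomT{c}-\kernel_{i,c}\dmom{2} = (\kernel_{n-1,c}-\kernel_{i,c})\dmom{2}$, hence
\[
J(i) \;=\; \Big(\sum_{c}(\kernel_{n-1,c}-\kernel_{i,c})^2\Big)\,\nrm{\dmom{2}}_{\frob}^2
\;=\; \nrm{\kernel_{[n-1,\cdot]}-\kernel_{[i,\cdot]}}_2^2\,\nrm{\dmom{2}}_{\frob}^2 .
\]
The key point is that both factors are strictly positive. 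Since $H$ is minimal and started from stationarity, Theorem \ref{thm:2-alias-irr-cond} gives $\dav\neq\bm 0$ and $\dalpv\neq\bm 0$, so by (\ref{eq:dmomtoA2}) the matrix $\dmom{2}=\dav(\dalpv)^\tran$ is nonzero and $\nrm{\dmom{2}}_{\frob}>0$. Moreover, by Assumption (A1) the kernel $\kernel$ is full rank, so its rows are linearly independent and in particular pairwise distinct, giving $\kernel_{[i,\cdot]}\neq\kernel_{[n-1,\cdot]}$ for $i\neq n-1$. Thus $J$ has a unique minimizer at $i=n-1$, with a strictly positive gap $\gamma := \min_{i\neq n-1} J(i) > 0$.

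It then remains to transfer this to $\hat J$. By Lemma \ref{lem:err-asymp}, $\edmom{2}-\dmom{2}$ and each $\edmomT{c}-\dmomT{c}$ are $O_P(T^{-1/2})$; since $\hat J(i)$ is a fixed polynomial in the entries of these matrices, $\hat J(i)\to J(i)$ in probability for each of the finitely many indices $i\in[n-1]$. In particular $\hat J(n-1)\to 0$ while $\hat J(i)\to J(i)\geq\gamma$ for $i\neq n-1$. Hence, with probability tending to one, $\hat J(n-1)<\gamma/2<\hat J(i)$ holds simultaneously for all $i\neq n-1$, which forces $\ac=n-1$ and yields $\Pr(\ac\neq n-1)\to 0$.

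Because the index set $[n-1]$ is finite, no uniform-convergence machinery is required: pointwise convergence at each index suffices, which is what makes the consistency argument short. The only substantive step, and the one I would be most careful about, is the strict positivity of $J(i)$ for $i\neq n-1$. This is precisely where both structural assumptions are indispensable: minimality guarantees $\dmom{2}\neq 0$, and the full-rankness of $\kernel$ from Assumption (A1) guarantees that distinct components have distinct kernel rows. Were two rows of $\kernel$ to coincide, the corresponding components would be indistinguishable under this criterion and the identification would break down.
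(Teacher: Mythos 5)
Your proposal is correct and follows essentially the same route as the paper's proof: both use relation (\ref{eq:dmomTtoA}) to show the objective vanishes at $i=n\sm1$, bound it below for $i\neq n\sm1$ by a positive constant proportional to $\nrm{\dmom{2}}_{\frob}^2$ (which equals $\sigma^2$, positive by minimality) times the squared distance between distinct rows/columns of the full-rank kernel $\kernel$, and then conclude via the $O_P(T^{-1/2})$ rates of Lemma \ref{lem:err-asymp} and a union bound over the finite index set. Your framing as a population-versus-empirical M-estimation argument is only a cosmetic repackaging of the paper's score-function computation.
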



\subsection{Learning the aliased transition matrix $A$}
\label{sec:learnaliasA}
Given the aliased component,
we estimate the $n\times n$ transition matrix $A$ using the decomposition (\ref{eq:A-exp}).
First, recall that by (\ref{eq:dmomtoA2}),  $\dmom{2}= \dav (\dalpv)^{\tran} =  \sigma \bm u \bm v^{\tran}$. 
As singular vectors are determined only up to scaling,
 we have that\beq
\dav  = \g \bm u
\qquad \text{and} \qquad
\dalpv  =  \frac{\sigma}{\g} \bm v,
\eeq
where $\gamma\in\mathbb{R}$ is a yet undetermined constant. Thus, the decomposition (\ref{eq:A-exp}) of $A$ takes the form:
\beqn
\label{eq:A-exp-bg}
A
= 
C_{\b} \At B
+
\g C_{\b} \bm u
\vo_{\b}^\tran
+
\frac{\sigma}{\g} \uo 
\bm v^{\tran} B
+
\dmomo\,  \uo \vo_{\b}^\tran
.
\eeqn
Given that $\At, \sigma, \bm u$ and $\bm v$ are known from previous steps, we are left to determine the scalars $\g$, $\b$ and  $\dmomo$ of Eq. (\ref{eq:deltaM1}).

As for $\dmomo$,
according to (\ref{eq:dmomtoA3}) we have $\dmom{3}  =  \dmomo \dmom{2}$.
Thus,
plugging the empirical versions, $\edmomo$ is 
estimated by
\beqn
\label{eq:dmomoest}
\edmomo = \argmin_{r\in\R} \nrm{\edmom{3} - r \edmom{2}}_{\frob}^2.
\eeqn
To determine $\g$ and $\b$ we turn to 
the similarity transformation $A_H(\t_{n-1},\t_n)$, given in
(\ref{eq:Atrans}).
As shown in Section \ref{sec:decompose}, this transformation characterizes all transition matrices equivalent to $A$.
To 
relate 
$A_H$
to the form of the decomposition (\ref{eq:A-exp-bg}),
we reparametrize $\t_{n\sm1}$ and $\t_n$ as follows:
\begin{alignat*}{3}
\g' &= \g(\t_{n\sm1} - \t_n),
\qquad
&\b' &= \frac{\b - \t_n}{\t_{n\sm1} - \t_n}.
\end{alignat*}
Replacing $\t_{n\sm1},\t_{n}$ with $\g',\b'$
we find that
$A_H$ is 
given by 
\beqn
\label{eq:decomp-gb}
A_H = C_{\b'} \At B
+
\g' C_{\b'} \bm u
\vo_{\b'}^\tran
+
\frac{\sigma}{\g'} \uo 
\bm v^{\tran} B
+
\dmomo\,  \uo \vo_{\b'}^\tran.
\eeqn
Note that putting 
$\g'=\g$ and $\b'=\b$ 
recovers the decomposition (\ref{eq:A-exp-bg}) for the original transition matrix $A$.

Now, since 
$H$ is assumed identifiable,
the constraint $A_H(\t_{n-1},\t_n)\geq 0$ has the unique solution $(\t_{n-1},\t_n)=(1,0)$,
or equivalently $(\g',\b')=(\g,\b)$.
Thus, with exact knowledge of the various moments, only a single pair of values  $(\g',\b')$
will yield a non-negative matrix (\ref{eq:decomp-gb}).
This perfectly recovers $\g,\b$ and the original transition matrix $A$.

In practice we plug into (\ref{eq:decomp-gb})\ the empirical versions 
$\hat \At$, $\edmomo$, $\hat\sigma_1$, $\hat {\bm u}_1$ and $\hat {\bm v}_1$,  where 
$\hat {\bm u}_1$, $\hat {\bm v}_1$ are the left and right singular vectors of $\edmom{2}$, corresponding to the singular value $\hat\sigma_1$. As described in Appendix \ref{app:estimate_gb}, the values $(\hat\g,\hat\b)$
are found by maximizing a simple two dimensional smooth function. The resulting estimate for the aliased transition matrix is
\beq
\hat A
&= &
C_{\hat\b} \hat\At B
+
\hat\g C_{\hat\b} \hat{\bm u}_1
\vo_{\hat\b}^\tran
+
\frac{\hat\sigma_1}{\hat\g} \uo 
\hat{\bm v}_1^{\tran} B
+
\hat\dmomo\,  \uo \vo_{\hat\b}^\tran
.
\eeq

The following theorem proves our method is consistent. 
\begin{theorem}
\label{thm:Ahat}
Let $H$ be a 2A-HMM satisfying assumption (A1-A3) with aliased states $n\sm1$ and $n$. 
Then 
as $T\to \infty$,
\beq
\hat A = A +
o_P(1).
\eeq
\end{theorem}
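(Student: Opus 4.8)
The plan is to establish consistency of $\hat A$ by tracking how the estimation error propagates through the decomposition (\ref{eq:A-exp-bg}). The strategy is to show that each estimated ingredient --- $\hat\At$, $\hat\sigma_1$, $\hat{\bm u}_1$, $\hat{\bm v}_1$, $\hat\dmomo$, and the recovered scalars $\hat\g,\hat\b$ --- converges in probability to its population counterpart, and then invoke continuity of the map sending these quantities to $A$ via (\ref{eq:A-exp-bg}). By Lemma \ref{lem:err-asymp}, the empirical moments satisfy $\edmom{2}-\dmom{2}=O_P(T^{-1/2})$ and $\edmom{3}-\dmom{3}=O_P(T^{-1/2})$, and by (\ref{eq:Abar-cons}) we have $\eAt=\At+O_P(T^{-1/2})$. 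These are the raw inputs from which everything else is built.

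First I would handle the singular value decomposition. Since $H$ is a minimal 2A-HMM started at stationarity, Theorem \ref{thm:2-alias-irr-cond}(ii) gives $\dav\neq\bm0$ and $\dalpv\neq\bm0$, so by (\ref{eq:dmomtoA2}) the population matrix $\dmom{2}=\dav(\dalpv)^\tran$ is exactly rank one with a strictly positive, simple singular value $\sigma$. Because $\sigma$ is simple and bounded away from the zero singular values, standard matrix perturbation bounds (Weyl for the value, Davis--Kahan for the vectors) applied to the $O_P(T^{-1/2})$ perturbation $\edmom{2}-\dmom{2}$ yield $\hat\sigma_1=\sigma+O_P(T^{-1/2})$ and $\hat{\bm u}_1\to\bm u$, $\hat{\bm v}_1\to\bm v$ in probability (up to the usual sign ambiguity, which is absorbed into the constant $\g$). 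For $\dmomo$, the least-squares problem (\ref{eq:dmomoest}) has the closed form $\edmomo=\langle\edmom{3},\edmom{2}\rangle_{\frob}/\nrm{\edmom{2}}_{\frob}^2$; since the population minimizer is exactly $\dmomo$ by (\ref{eq:dmomtoA3}) and the denominator converges to $\nrm{\dmom{2}}_{\frob}^2=\sigma^2>0$, the continuous mapping theorem gives $\edmomo=\dmomo+o_P(1)$.

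The main obstacle is the recovery of $(\hat\g,\hat\b)$, because these are obtained not from a moment formula but from the geometric non-negativity constraint: $(\g,\b)$ is the unique pair for which the matrix (\ref{eq:decomp-gb}), built from population quantities, is entrywise non-negative, and identifiability of $H$ (equivalently $\feasreg_H=\{(1,0)\}$) is what guarantees this uniqueness. The delicate point is that the empirical matrix built from $(\g',\b')$ need not be exactly non-negative for any pair, so one cannot simply intersect constraint sets; instead the estimates are defined by maximizing a smooth surrogate objective (Appendix \ref{app:estimate_gb}). I would argue that the population objective, viewed as a function of $(\g',\b')$, attains its maximum uniquely at $(\g,\b)$ --- this is exactly the content of strict identifiability translated into the reparametrized coordinates --- and that the empirical objective converges uniformly on a compact neighborhood to the population one, since it is a continuous function of the $o_P(1)$-consistent inputs $\hat\At,\hat\sigma_1,\hat{\bm u}_1,\hat{\bm v}_1,\edmomo$. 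A standard argmax-consistency (M-estimation) argument then forces $(\hat\g,\hat\b)\to(\g,\b)$ in probability. Care is needed to ensure $\hat\g$ stays bounded away from zero (so that $\hat\sigma_1/\hat\g$ is well behaved), which follows because the population $\g\neq0$ and the convergence is uniform on a compact set excluding a neighborhood of $\g'=0$.

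Finally, I would assemble the pieces: the estimator $\hat A$ is a fixed continuous (indeed rational, with non-vanishing denominators in the limit) function of $(\hat\At,\hat\sigma_1,\hat{\bm u}_1,\hat{\bm v}_1,\edmomo,\hat\g,\hat\b)$, each of which has been shown to converge in probability to its true value, with the true values producing exactly $A$ by Lemma \ref{lem:A-exp}. Applying the continuous mapping theorem to the map $(\hat\At,\hat\g,\hat\b,\edmomo,\hat\sigma_1,\hat{\bm u}_1,\hat{\bm v}_1)\mapsto\hat A$ gives $\hat A=A+o_P(1)$, completing the proof. The only subtlety in this last step is confirming that all denominators (notably $\hat\g$) are bounded away from zero in probability, which the previous paragraph secures.
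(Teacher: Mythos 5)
Your proposal is correct and follows essentially the same route as the paper: consistency of the individual SVD and moment estimates via perturbation bounds, argmax consistency of $(\hat\g,\hat\b)$ from uniform convergence of the non-negativity objective combined with uniqueness of the population maximizer under identifiability, and a final continuity argument keeping $\hat\g$ bounded away from zero. The paper organizes the last step as a two-term triangle-inequality decomposition with separate uniform-convergence lemmas (via Wedin's theorem and Newey's corollary), but this is the same argument you sketch through the continuous mapping theorem.
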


\section{Numerical simulations}
\label{sec:simul}

We present simulation results, illustrating the consistency of our methods
for the detection of aliasing, identifying the aliased component and learning the transition matrix $A$. 
As our focus is on the aliasing, we assume for simplicity that the output parameters $\prmotv$ and the projected stationary distributions $\statotv$ are exactly known.

Motivated by applications in modeling of ion channel gating \citep{crouzy1990yet,rosales2001bayesian,witkoskie2004single}, we consider the following  HMM $H$ with $n=4$ hidden states (see Fig.\ref{fig:hmm_ion}, left).
The output distributions are univariate Gaussians $\mathcal{N}(\mu_i,\sigma_i^2)$
. Its matrix 
$A$ 
and $(f_{\prm_i})_{i=1}^4$
are given by
{
\beq
A \!&\!=\!\left(\begin{array}{cccc}
  0.3 & 0.25 & 0.0 & 0.8 \\
  0.6 & 0.25 & 0.2 & 0.0 \\
  0.0 & 0.5 & 0.1 & 0.1 \\
  0.1 & 0.0 & 0.7 & 0.1
\end{array}
\right)  , 
\quad 
\begin{array}{rcl}
f_{\prm_1} &=& \mathcal{N}(3,1)\\
f_{\prm_2} &=& \mathcal{N}(6,1)\\
f_{\prm_3} &=& \mathcal{N}(0,1)\\
f_{\prm_4} &=& \mathcal{N}(0,1).
\end{array}
\eeq
States $3$ and $4$
are aliased and
by Procedure \ref{alg:feasreg}
in Appendix  \ref{sec:examples} this 2A-HMM is identifiable.
The rank-1 matrix $\dmom2$ has a singular value $\sigma=0.33$.
Fig.\ref{fig:hmm_ion} (right) shows its non-aliased version \(\Ht\) with states $3$ and $4$ merged.  

To illustrate the ability of our algorithm to detect aliasing, we generated $T$ outputs from the original aliased HMM and from its non-aliased version
$\Ht$. 
Fig.\ref{fig:sigma-dist} 
(left) shows the empirical densities (averaged over $1000$ independent runs) of the largest singular value of $\edmom{2}$, for both $H$ and 
$\Ht$.
In 
Fig.\ref{fig:sigma-dist} 
(right) we show similar 
results for a 2A-HMM with $\sigma=0.22$. 
When $\sigma=0.33$, already $T=1000$ outputs suffice for essentially perfect detection of aliasing. For the smaller $\sigma=0.22$, more samples are required.

Fig.\ref{fig:mis-detiden}
(left) shows the false alarm and misdetection  probability vs.\! sample size \(T\) 
of the aliasing detection test  (\ref{eq:ths}) with threshold  $h_T=2T^{-\oo{3}}$. The consistency of our method is evident.

Fig.\ref{fig:mis-detiden}
(right)
shows the probability of misidentifying the aliased component $\prm_{\bar 3}$.
We considered the same 2A-HMM $H$ but with different means for the Gaussian output distribution of the aliased states, $\mu_{{\bar 3}} = \{0,1,2\}$. 
As expected, 
when $f_{\prm_{\bar 3}}$ is closer to 
the output distribution of the non-aliased state 
$f_{\prm_1}$ (with mean $\mu_1=3$), identifying the aliased component is more difficult.

\begin{figure}[t]%
\centering
\tikzset{
	regS/.style = {  },
	aliasS/.style = {double = black!0, double distance=1.2pt},
}
\begin{tikzpicture}
[ ->,>=stealth',shorten >=1pt,auto,node distance=1.2cm,
  ,main node/.style={circle,fill=black!20,draw,
  font=\sffamily\small\bfseries
  },
   minimum size=.1em
   ,inner sep=2pt
   ]
  \node[main node] (1)  {1};
	\node[main node] (2) [right of=1] {2};
	\node[main node, double = black!40, double distance=1.2pt] (3) [below of = 2] {3};
  \node[main node] (4) [below of=1, double = black!40, double distance=1.2pt] {4};

  \path[every node/.style={font=\sffamily\small}]
    (1) 
        edge [bend left] node[above] {\tiny $.6$} (2)
				edge [bend left] node[left] {\tiny $.1$} (4)
    (2) 
        edge [bend left] node[right] {\tiny $.5$} (3)
				edge [bend left] node[above] {\tiny $.25$} (1)
    (3) 
        edge [bend left] node[below] {\tiny $.7$} (4)
				edge [bend left] node[right] {\tiny $.2$} (2)
    (4) 
        edge [bend left] node[left] {\tiny $.8$} (1)
				edge [bend left] node[below] {\tiny $.1$} (3);

\end{tikzpicture}
\hspace{20pt}
\newcommand{\overbar}[1]{\mkern 1.5mu\overline{\mkern-1.5mu#1\mkern-1.5mu}\mkern 1.5mu}
\tikzset{
	regS/.style = {  },
	aliasS/.style = {double = black!0, double distance=1.2pt},
}
\begin{tikzpicture}
[scale = .8, ->,>=stealth',shorten >=1pt,auto ,node distance=1.2cm,
  ,main node/.style={circle,fill=black!20,draw,
  font=\sffamily\small\bfseries
  },
   minimum size=.1em
   ,inner sep=2pt
   ]
  \node[main node] (1) {1};
	\node[main node] (2) [right = 1.2cm of 1] {2};
  \node[main node, double = black!40, double distance=1.2pt ] (3) [below right = .7cm and 0.5cm of 1] {$\overbar{\mbox{\sffamily\small\bfseries 3}}$};
  

  \path[every node/.style={font=\sffamily\small}]
    (1) 
        edge [bend left] node[above] {\tiny $.6$} (2)
				edge [bend left] node[below left] {\tiny $.1$} (3)
    (2) 
        edge [bend right] node[below right] {\tiny $.5$} (3)
        edge [] node[below] {\tiny $.25$} (1)
    (3) 
        edge [bend right] node[right] {\tiny $.087$} (2)
				edge [bend left] node[left] {\tiny $.453$} (1);

\node [below =1.24cm of 2] {};
\end{tikzpicture}
\caption{The aliased HMM (left) and its corresponding non-aliased version with states $3$ and $4$ merged (right).}%
\label{fig:hmm_ion}%
\end{figure}

\begin{figure}[ht]%
\centering
\includegraphics[width=.5\columnwidth]{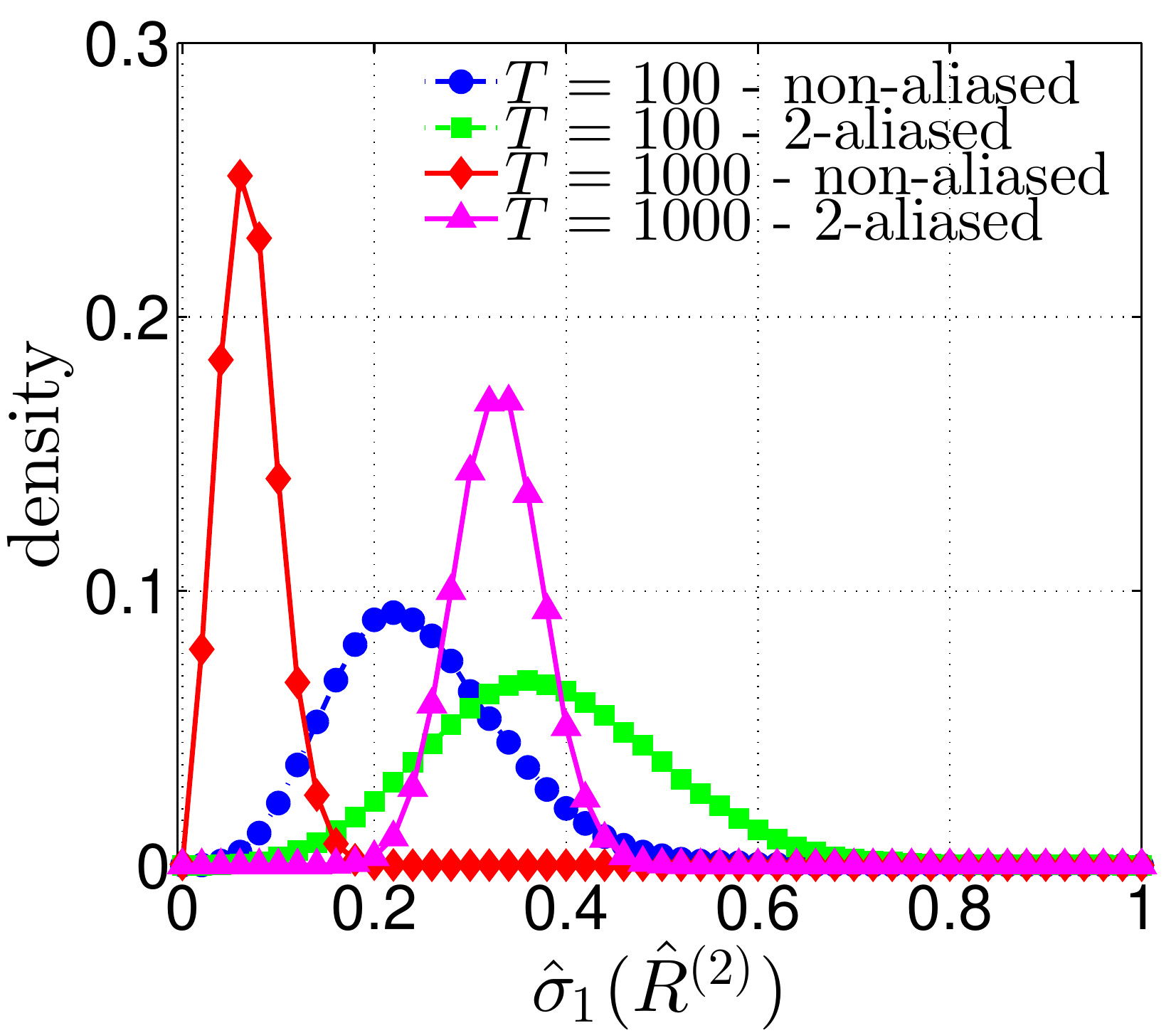}%
\includegraphics[width=.5\columnwidth]{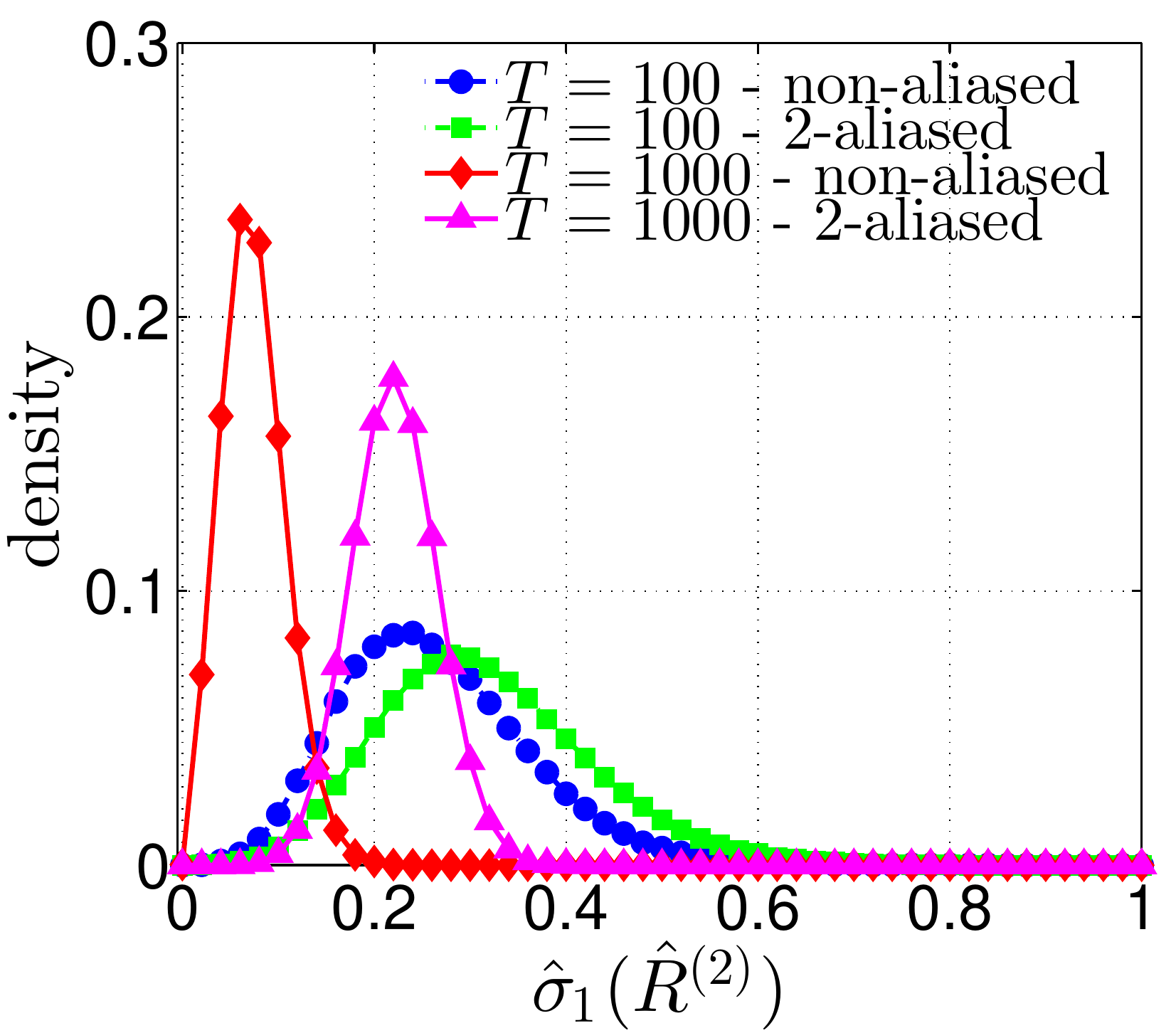}%
\caption{Empirical density of the largest singular value of $\edmom{2}$ with $\sigma=0.33$ (left) and $\sigma=0.22$ (right).
}%
\label{fig:sigma-dist}%
\end{figure}
\begin{figure}[t]%
\includegraphics[width=.5\columnwidth]{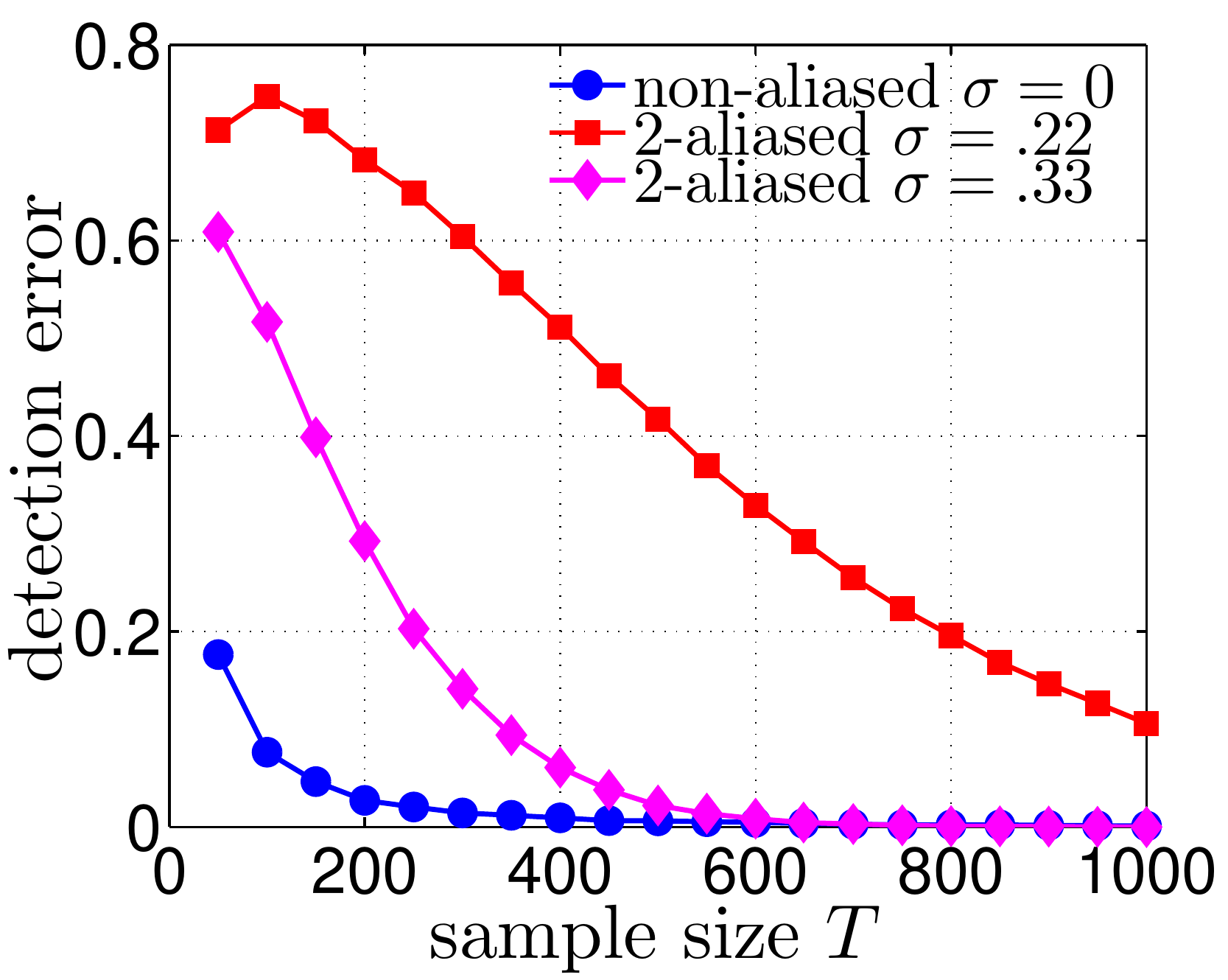}%
\includegraphics[width=.5\columnwidth]{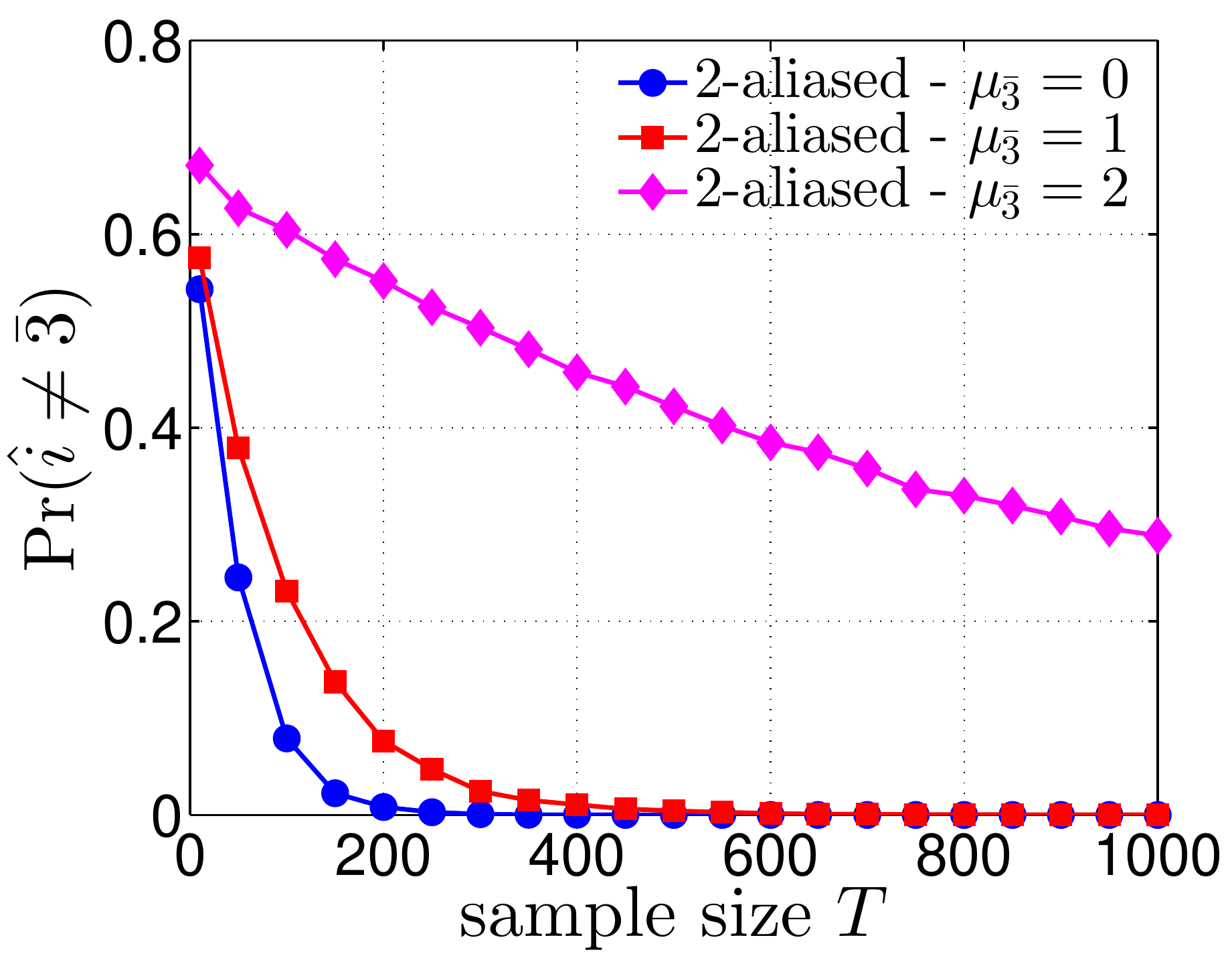}%
\caption{
Misdetection probability of aliasing/non-aliasing (left) and  probability of misidentifying the correct aliased component (right).
}%
\label{fig:mis-detiden}%
\end{figure}
\begin{figure}[H]%
\includegraphics[width=.54\columnwidth]{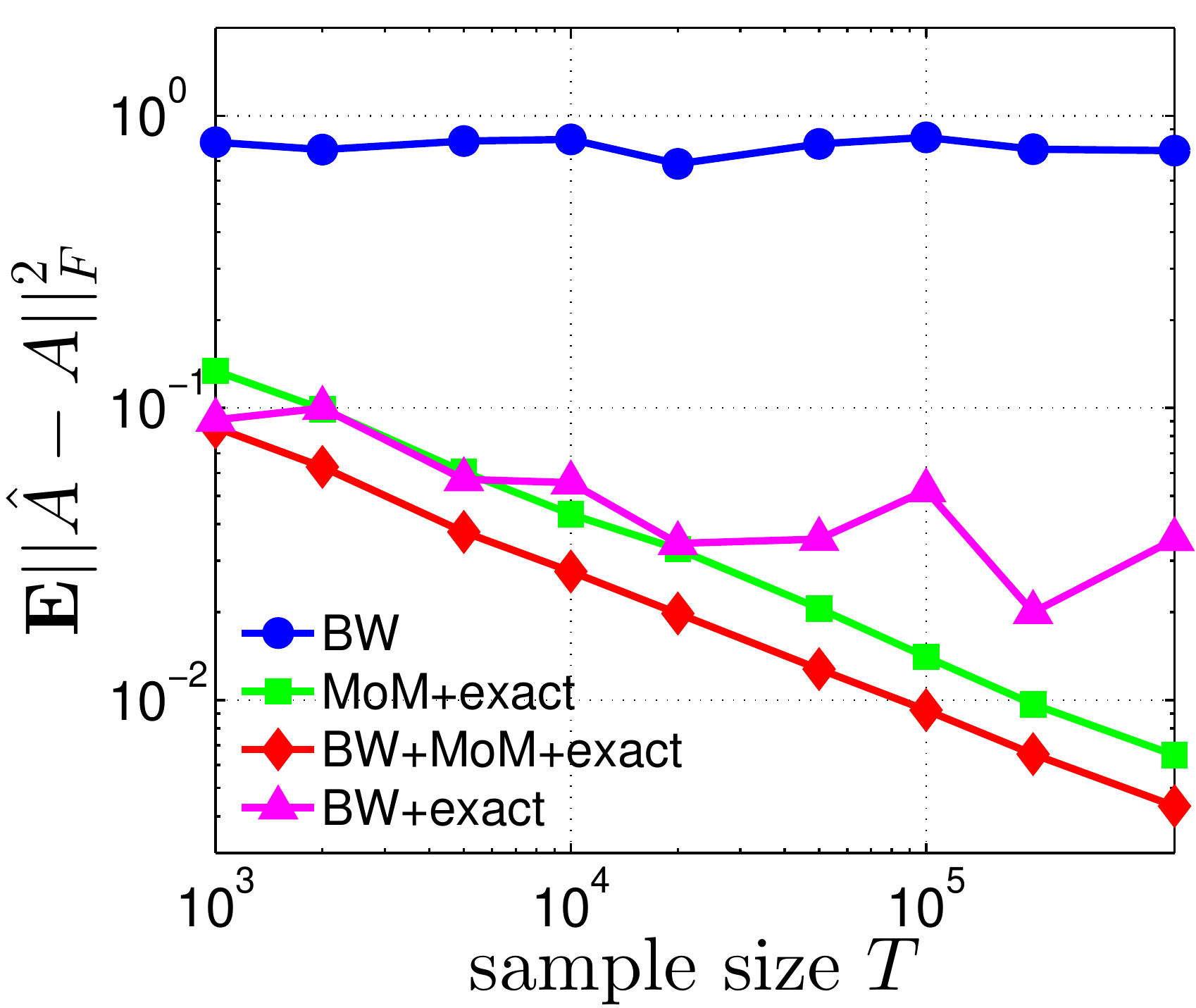}%
\hfill
\includegraphics[width=.46\columnwidth]{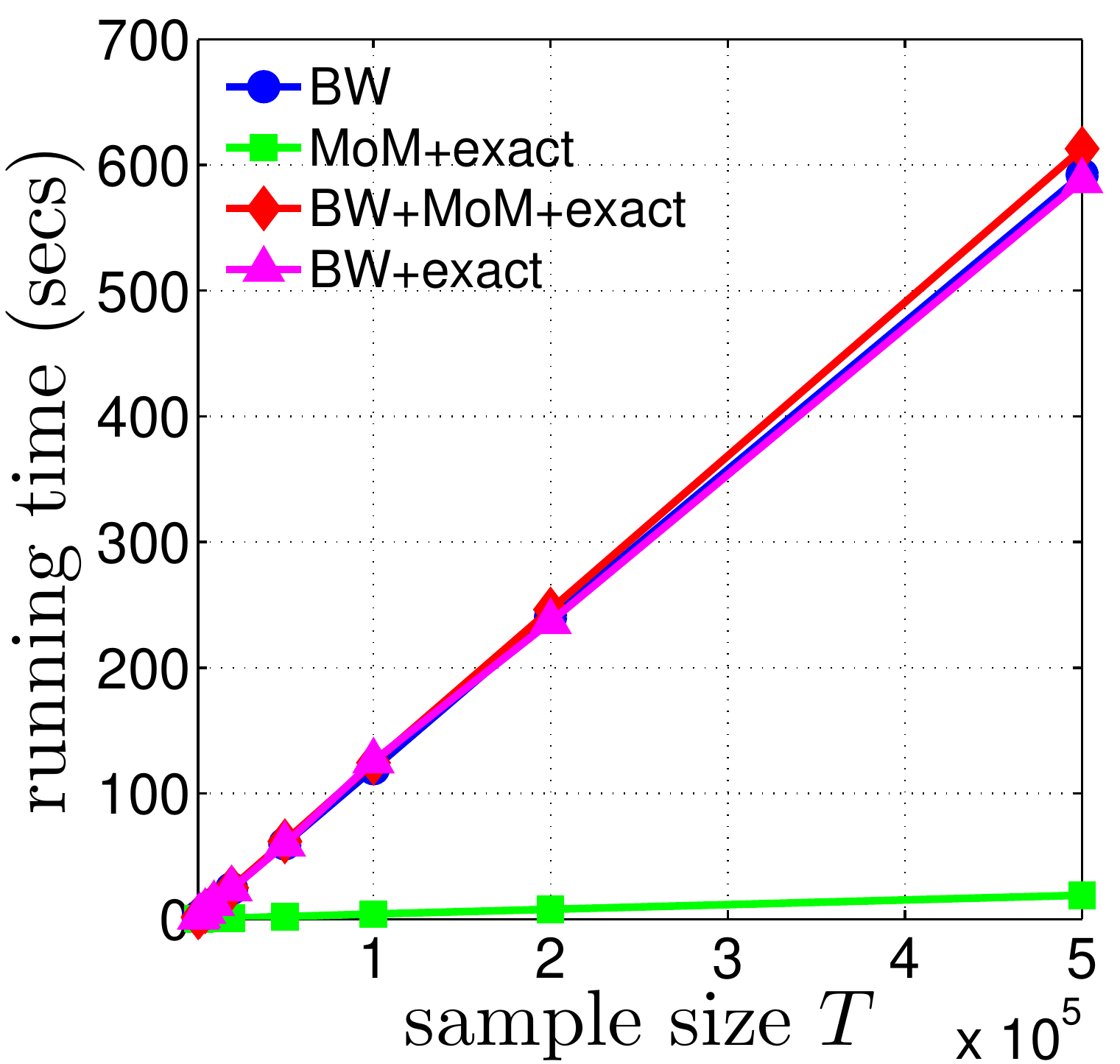}%
\caption{
Average error $\E||\hat A - A||^2_{\frob}$ and runtime comparison of different algorithms
vs. sample size $T$.
}%
\label{fig:error-time}%
\end{figure}

Finally, to estimate $A$ we considered the following methods:
The Baum-Welch algorithm with random initial guess of the HMM parameters ({BW});
our method of moments with exactly known \(\prmotv\) ({MoM+Exact});
BW initialized with the output of  our method ({BW+MoM+Exact}); 
and BW with exactly known output distributions but random initial guess of the transition matrix ({BW+Exact}).

Fig.\ref{fig:error-time} (left) shows on a logarithmic scale 
the mean square error
$\E||\hat A - A||_F^2$ vs. sample size $T$,
averaged over 100 independent realizations.
Fig.\ref{fig:error-time} (right) shows the running time as
a function of $T$. 
In these two figures, the number of
iterations of the BW was set to 20.

These results show that
with a random initial guess of the HMM parameters, BW requires far more than 20 iterations to converge. Even with exact knowledge of the output distributions  but a random initial guess for the transition matrix, BW still fails to converge after 20 iterations. In contrast, our method yields a relatively accurate estimator in only a fraction of run-time.
For an improved accuracy, this estimator can further be used as an initial guess for $A$ in the BW algorithm.


\bibliographystyle{plainnat}
\bibliography{locbib}


\newpage
\appendix

\section{Proofs for Section \ref{sec:decompose} (Decomposing $A$)}
\begin{proof}[Proof of Lemma \ref{lem:A-exp}]
\label{app:A-decompose}

Writing each term in the decomposition (\ref{eq:A-exp}) explicitly and summing these together we find a match between all entries to those of $A$.

As a representative example let us consider the last entry $A_{n,n}=P(n\gn n)$.
The first term gives
\beq
(C_\b \At B)_{[n,n]} = (1-\b) P(\ot\gn n).
\eeq
The second term gives
\beq
(C_\b \dav \vo_\b^{\tran})_{[n,n]}
&=&
-\b(1\sm\b) \da_{n\sm1}
\\
&=&
-\b(1\sm\b)(P(\ot \gn n\sm1) \sm P(\ot\gn n)).
\eeq
The third term,
\beq
(\uo (\dalpv)^{\tran} B)_{[n,n]} 
&=&
- \dalp_{n\sm1} 
\\
&=&  
-\b (\a_{n\sm1} \sm \b) P(\ot \gn n\sm1) 
\\
&&
 -\, (1\sm\b) (\a_n \sm \b) P(\ot \gn n).
\eeq
And lastly, the fourth term gives
\beq
(\dmomo \uo \vo_\b^{\tran})_{[n,n]} &=& 
\b  (\a_{n\sm1} - \b)P(\ot\gn n\sm1)  
\\
&& -\, \b(\a_n - \b) P(\ot\gn n).
\eeq

Putting $P(\ot\gn n) = P(n\sm1\gn n) + P(n\gn n)$ and $P(\ot\gn n\sm1)=P(n\sm1\gn n\sm1) + P(n\gn n\sm1)$, and summing all these four terms we obtain $P(n\gn n)$ as needed.
The other entries of $A$ are obtained similarly.
\end{proof}

\section{Proofs for Section \ref{sec:min} (Minimality)}
Let $H=(A,\bm\prm,\initv)$ be a 2A-HMM.
For any $k\geq 1$ the distribution $P_{H,k}\in\P_H$ can be cast in an explicit matrix form.
Let $o\in\Y$. The \emph{observable operator} $\obsOpr(o) \in \R^{n \times n}$ is 
defined by
\beq
\obsOpr(o) =  \diag\paren{f_{\prm_1}(o),f_{\prm_2}(o),\dots,f_{\prm_n}(o)}.
\eeq
Let $ y = (y_0,y_1,\dots,y_{k-1})\in\Y^k$ be a sequence of $k\geq1$ initial consecutive observations. 
Then the 
distribution $P_{H,k}( y)$ is given by \citet{jaeger2000observable},
\beqn
\label{eq:opr-rep}
P_{H,k}
(y)
 =  
\bm{1}_n^\tran
\obsOpr(y_{k-1})
A\,
\dots
A\,
\obsOpr(y_1)
A\,
\obsOpr(y_0)
\initv.
\eeqn

\begin{proof}[Proof of Theorem \ref{thm:2-alias-irr-cond}.]
Let us first show that $\dav\neq 0$ is necessary for minimality, namely if $\dav = 0$ then $H$ is not minimal, regardless of the initial distribution $\initv$. The non-minimality will be shown by explicitly constructing a $n\sm1$ state HMM equivalent to $H$.
Let us denote the lifting of the merged transition matrix by
\[\tilde{A} = C_\b \At B \in \R^{n\times n}.\]
Assume that $\dav = 0$.
We will shortly see that for any $\initv$ and for any $k\geq 2$ consecutive observations $y= (y_0,y_1,\dots,y_{k-1})\in\Y^k$ we have that
\beqn
\label{eq:seq-dif-span-u}
&&\obsOpr(y_{k-1}) A \dots \obsOpr(y_1) A \obsOpr(y_0) \initv
\\
\nonumber
&-& \obsOpr(y_{k-1}) \tilde{A}  \dots \obsOpr(y_1) \tilde{A}  \obsOpr(y_0) \initv
\, \propto\,{\uo}.
\eeqn
Combining (\ref{eq:seq-dif-span-u}) with (\ref{eq:opr-rep}),
and the fact that $\bm 1_n^\tran \uo = 0$, we have that $\P_{(A,\bm\prm,\initv)} = \P_{(\tilde{A},\bm\prm,\initv)}$. 
Since $\tilde{A}$ has identical $(n\sm1)$-th and $n$-th columns, and $f_{\prm_{n\sm1}} =f_{\prm_n}$ we have that
$\P_{(\tilde A,\bm\prm,\initv)} = \P_{(\At,\prmotv,\initot)}$. 
Thus $H' = (\At,\prmotv, \initot)$ is an equivalent $(n\sm 1)$-state HMM and $H$ is not minimal, proving the claim.
We prove  (\ref{eq:seq-dif-span-u}) by induction on the sequence length $k\geq2$.
First note that since $\dav = 0$, by Lemma \ref{lem:A-exp} we have that
\beq
A =\tilde{A} + \uo( (\dalpv)^\tran B + \dmomo \vo_\b^\tran ).
\eeq
Since for any $y\in\Y$, $\obsOpr(y) \uo = f_{\prm_{\ot}}(y) \uo$,  we have that
\beq
\obsOpr(y) A - \obsOpr(y) \tilde{A} = f_{\prm_{\ot}}(y)\uo( (\dalpv)^\tran B + \dmomo \vo_\b^\tran ) \propto\,{\uo}.
\eeq
This proves the case $k=2$.
Next, assume (\ref{eq:seq-dif-span-u}) holds for all sequences of length at least 2 and smaller than $k$,
namely,
for some $a\in\R$
\beq
&&\obsOpr(y_{k-2}) A \dots \obsOpr(y_1) A \obsOpr(y_0) \initv
\\
&=& a{\uo} + \obsOpr(y_{k-2}) \tilde{A}  \dots \obsOpr(y_1) \tilde{A}  \obsOpr(y_0) \initv
.
\eeq
Using the fact that $B \uo = 0$ we have $\obsOpr(y_{k\sm1})\tilde{A} \uo = 0$.
Inserting the expansion of $A$ in the l.h.s of 
(\ref{eq:seq-dif-span-u}) we get
\beq
&&f_{\prm_{\ot}}(y_{k\sm1}) \uo
\paren{(\dalpv)^\tran B + \dmomo \vo_\b^\tran}
\\
&\times&
\paren{
 a \uo
 +
 \tilde{A} \obsOpr(y_{k-2}) \dots \obsOpr(y_{1}) \tilde{A}  \obsOpr(y_0) \initv
 }.
\eeq
Since this last expression is proportional to $\uo$ we are done.

\paragraph{(ii) The case $\init_{\ot} = 0$ or $\b^0 = \b$.}
As we just saw,
having $\dav=0$ implies that the HMM is not minimal.
We now show that if $\dalpv = 0$ then $H$ is not minimal either.
By contraposition this will prove the first direction of (ii).

So 
assume that $\dalpv  = 0$.
Lemma
\ref{lem:A-exp} implies
\beqn
\label{eq:A-v}
A =\tilde{A} + ({C_\b} (\dav) + \dmomo \uo )\vo_\b^\tran.
\eeqn
Now note that
for all $y\in\Y$,\, $\vo_\b^\tran\obsOpr(y) = f_{\prm_{n\sm1}}(y) \vo_\b^\tran$
and since either $\init_{\ot} =0 $ or $\b^0 =\b$ we have that $\vo_\b^\tran \initv = 0$.
Thus $\vo_\b \obsOpr(y) \initv = 0$ and
 we find that
\beqn
\label{eq:exp2}
&\obsOpr(y_{k-1}) A \dots A\obsOpr(y_1) A \obsOpr(y_0) \initv 
\\
&\qquad
 = \obsOpr(y_{k-1}) A \dots A\obsOpr(y_1) \tilde{A} \obsOpr(y_0) \initv. 
\eeqn
Now since $\vo_\b^\tran C_\b = 0$ we have that for any $y\in\Y$, $\vo_\b^\tran\obsOpr(y) \tilde{A} = 0$ and
thus expanding $A$ by (\ref{eq:A-v}) we find that for any $y\in\Y$, 
\[
A \obsOpr(y) \tilde A = 
\Big(\tilde{A} + ({C_\b} (\dav) + \dmomo \uo )\vo_\b^\tran \Big) \obsOpr(y) \tilde A
=
\tilde A \obsOpr(y) \tilde A.
\]
Thus each $A$ in the right hand side of (\ref{eq:exp2}) can be replaced  by $\tilde{A}$ and we conclude that
$\P_{(A,\bm\prm,\initv)} = \P_{(\tilde{A},\bm\prm_n,\initv)}$.
Similarly to the case $\dav=0$ we have 
that 
$H' = (\At,\prmotv, \initot)$ is an equivalent $(n\sm 1)$-state HMM
and thus $H$ 
is not minimal. 

In order to prove the other direction we will show that if $H$ is not minimal then either $\dav=0$ or $\dalpv=0$. 
This is equivalent to the condition $\dav \dalpv^{\tran}=0$. 

Assuming $H$ is not minimal, there exists an HMM ${H'}$ with ${n'}<n$ states such that $\P_{H'} = \P_{H}$.
Assumptions (A1-A3) 
readily imply that $H'$ must have $n' = n\sm 1$ states
and that the unique $n\sm1$ output components are identical for $H$ and $H'$.
Since $\P_{H'}$ is invariant to permutations, we may assume that $\prmotv'=\prmotv$ and consequently the kernel matrices in (\ref{eq:K_def}) for both $H$ and $H'$ are equal $\kernel=\kernel'$.

Let $A'\in\R^{(n\sm1)\times(n\sm1)}$ be the transition matrix of $H'$ and define $H'' = (A'',\prmv'', \stat'')$ as the equivalent $n$-state HMM to $H'$ by setting $\b'' = \b$, $A'' = C_\b A' B$, $\prmv'' = \prmotv' B$ and $\statv'' = C_\b \statotv'$.
Note that for $H''$, by construction we have $\dav'' (\dalpv''){^\tran} = 0$.

Now, by the equivalence of the two models $H$ and $H''$,
we have that the second order moments ${\momK}^{(2)}$ given in (\ref{eq:mom-def}) are the same for both.
By the fact that $\kernel''=\kernel$, $\statotv''=\statotv$ and by (\ref{eq:dmomtoA2})
 in Lemma \ref{lem:del-rel}
we must have that $\dav {(\dalpv)}^\tran = \dav'' (\dalpv''){^\tran}$.
Thus $\dav {(\dalpv)}^\tran=0$
and the claim is proved.

\paragraph{(i) The case $\init_{\ot} \neq 0$ and $\b^0\neq\b$.} 
We saw above that if $H$ is minimal then $\dav \neq 0$.
Thus, in order to prove the claim we are left to show that
if $H$ is not minimal then $\dav = 0$.

So assume $H$ is not minimal and let $H''$ be constructed as above.
By way of contradiction assume $\dav \neq 0$.
 As we just saw, since $H$ is not minimal then $\dav(\dalpv)^{\tran}$ = 0. Thus by the assumption $\dav \neq 0$
we must have $\dalpv = 0$. 
This implies that $A$ is in the form (\ref{eq:A-v}). 
Since $\P_{H} = \P_{H''}$ we have $P_{H,2}=P_{H'',2}$ where: 
\beq
P_{H,2} & = & \bm 1_n^\tran \obsOpr(y_2) A \obsOpr(y_1) \initv\\
& = & \bm 1_n^\tran \obsOpr(y_2) \Big( \tilde A +  ({C_\b} \dav + \dmomo \uo )\vo_\b^\tran \Big) \obsOpr(y_1) \initv\\
P_{H'',2} & = & \bm 1_n^\tran \obsOpr(y_2) A'' \obsOpr(y_1) \initv.
\eeq
In addition, by the fact that $\kernel''=\kernel$, $\statotv''=\statotv$ 
we must have that $\mom^{(1)} = {\mom''}^{(1)}$, 
where $\mom^{(1)}$ is defined in (\ref{eq:striped-mom}) and ${\mom''}^{(1)}$ is defined similarly with the parameters of $H''$ instead of $H$.
By (\ref{eq:momtoA1})
in Lemma \ref{lem:del-rel} we thus have 
\beq
{\mom''}^{(1)} = A' = \At = {\mom}^{(1)}.
\eeq
Hence $A'' = \tilde A$ and $P_{H,2} = P_{H'',2}$ is equivalent to
\beqn
\label{eq:PH-dif}
\bm 1_n^\tran \obsOpr(y_2) \Big({C_\b} \dav + \dmomo \uo \Big) \vo_\b^\tran  \obsOpr(y_1) \initv= 0.
\eeqn

Now, note that $\forall y_1,y_2\in\Y$ we have
\beq
\bm 1_n^\tran \obsOpr(y_2) \uo &=& 0
\\
\vo_\b^\tran \obsOpr(y_1) \initv &=& (\b^0 - \b) \init_{\ot} f_{\prm_{\ot}}(y_1)
\\
\bm 1_n^\tran \obsOpr(y_2) {C_\b} &=& 
(f_{\prm_1}(y_2), \dots, f_{\prm_{n\sm1}}(y_2) ).
\eeq
Thus, (\ref{eq:PH-dif}) is given by
\beq
(\b^0 - \b) \init_{\ot} f_{\prm_{n\sm1}}(y_1)
\Big(f_{\prm_1}(y_2), \dots, f_{\prm_{n\sm1}}(y_2) \Big)
 \cdot \dav = 0.
\eeq
Since by assumption $(\b^0 - \b) \init_{\ot}\neq 0$ we have $\forall y_1,y_2\in\Y$
\beq
f_{\prm_{n\sm1}}(y_1)
\Big(f_{\prm_1}(y_2), \dots, f_{\prm_{n\sm1}}(y_2) \Big)
 \cdot 
 \dav = 0.
 \eeq
For each $i\in[n\sm1]$, multiplying by $f_{\prm_i}(y_2)$ and integrating over $y_1,y_2\in\Y$ we 
get
\beq
\kernel \dav = 0.
\eeq
Since $\kernel$ is full rank we must have $\dav=0$ in contradiction to the assumption $\dav\neq0$. This concludes the proof of the Theorem.
\end{proof}

\section{Proofs for Section \ref{sec:ident} (Identifiability)}
\label{app:ident}
\begin{figure}[t]
\centering
\includegraphics[width=.4\columnwidth]{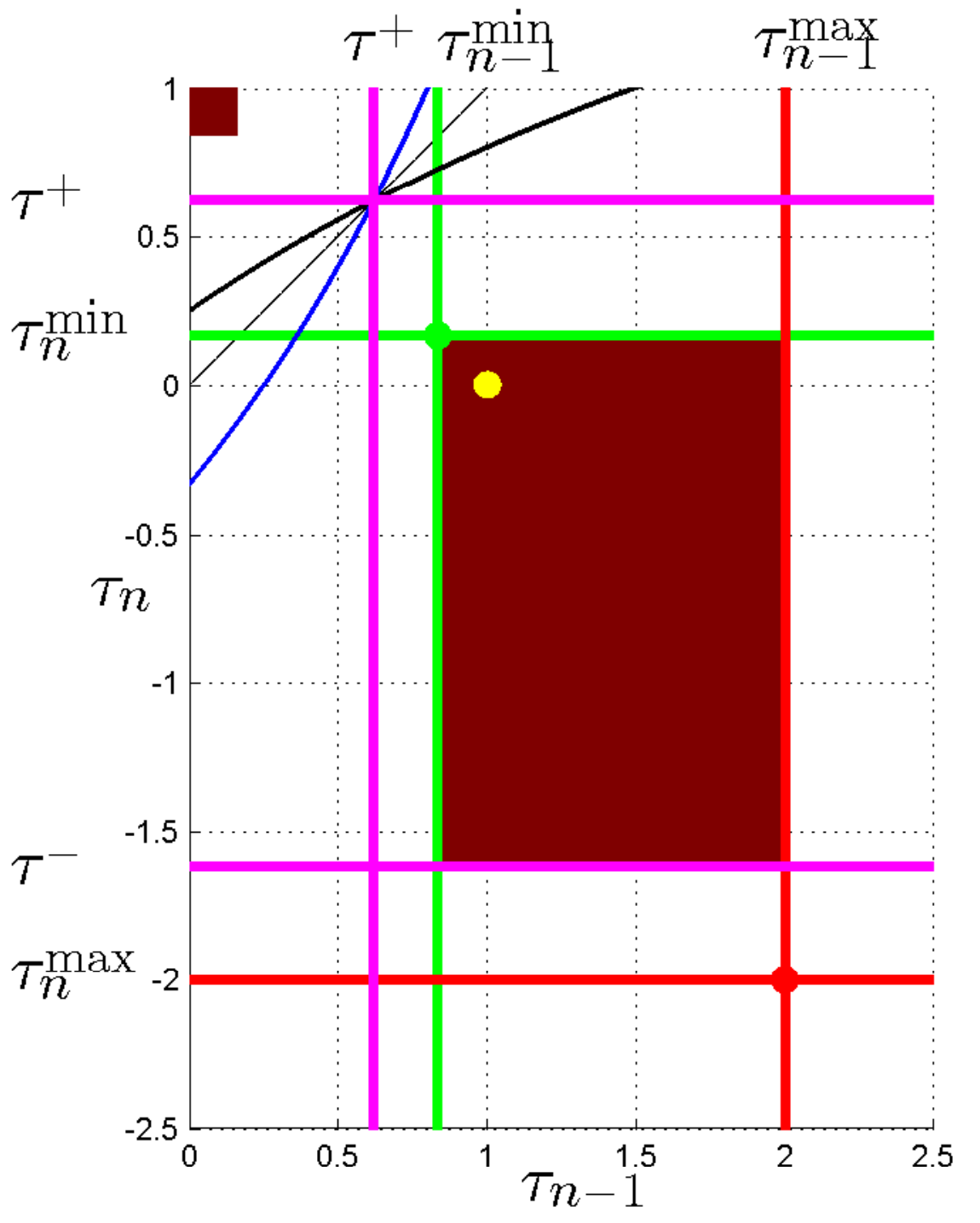}%
\hspace{30pt}
\includegraphics[width=.4\columnwidth,natwidth=61,natheight=64]{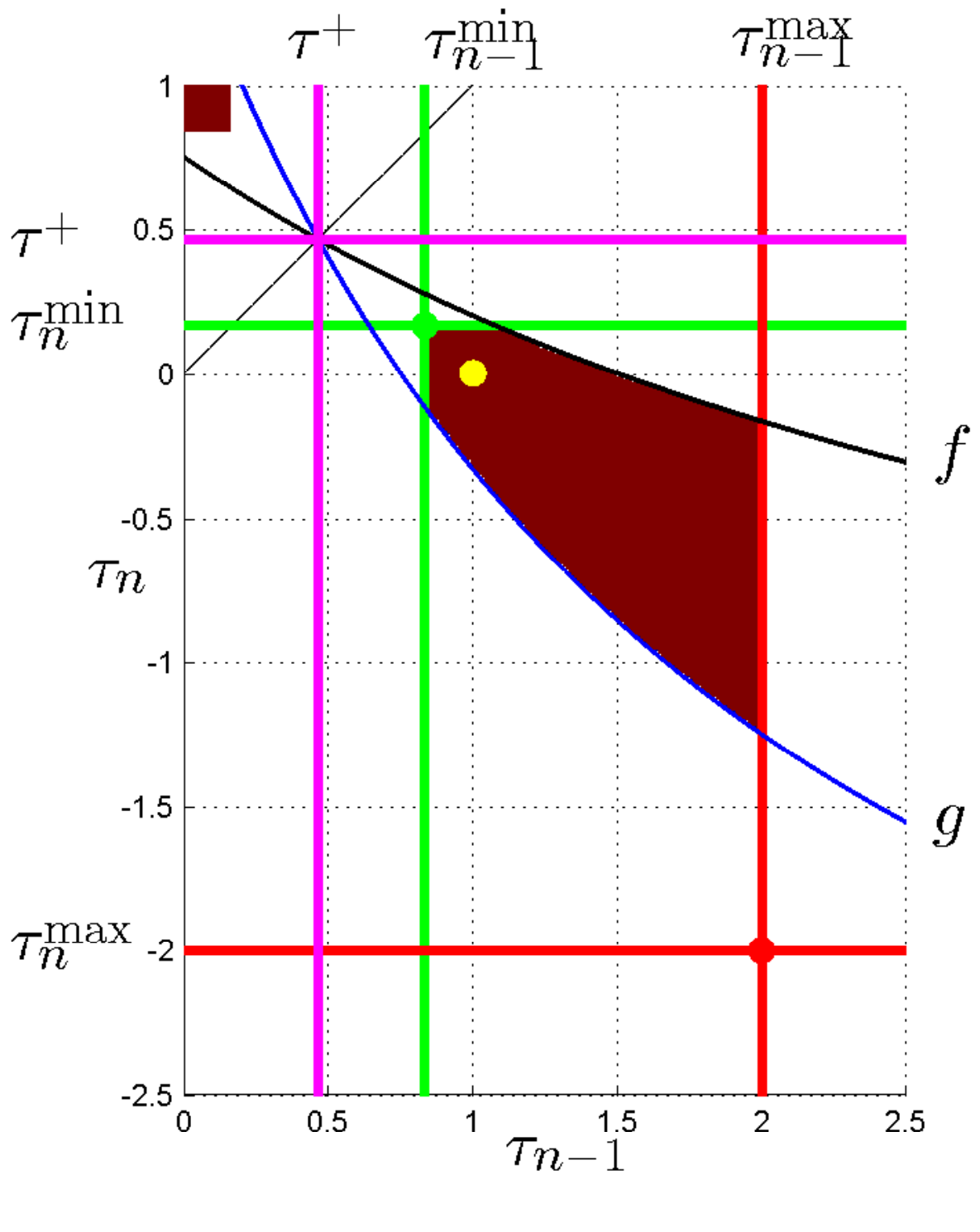}%
\caption{
The feasible region $\feasreg_H$ (shaded) in the $(\t_{n\sm1},\t_n)$ plane.
Any $(\t_{n\sm1},\t_n)\in\feasreg_H$ induces an HMM equivalent to $H$ via Lemma \ref{lem:equiv}.
The pair $(\t_{n\sm1},\t_n)=(1,0)$, corresponding to the original transition matrix $A$, is indicated by a yellow point. Left: $\a_{n\sm1} \geq \a_n$ and Right: $\a_{n\sm1} < \a_n$.
}
\label{fig:feas}%
\end{figure}

\subsection{Proof of Theorem \ref{lem:feasreg}}
\begin{figure}[t]%
\centering
\usetikzlibrary{calc,through,intersections}
\begin{tikzpicture}[scale = .8]
\coordinate[label=above:$1$] (1) at (3,4.5);
\coordinate[label=right:$2$] (2) at (6,0);
\coordinate[label=left:$\bar 3$] (3) at (0,0);
\draw[name path=triangle] (1) -- (2) -- (3) -- cycle;
\coordinate (maxup) at ($(3)!.5!(1)$);
%
\coordinate (maxdown) at ($(3)!.7!(2)$);
%
\draw [style=dashed] ($(maxup)!-.3!(maxdown)$) -- ($(maxup)!1.3!(maxdown)$);
\coordinate  (a) at ($(maxup)!.54!(maxdown)$);
\coordinate[label=right:$\av_{\bar{3}}$] (a) at ($(maxup)!.54!(maxdown)$);
\draw[fill=black] (a) circle (1pt);
\coordinate[label=above:$\av_1$] (a1) at ($(1)!.2!(2)!.1!(3)$);
\draw[fill=black,double] (a1) circle (1.5pt);
\coordinate[label=above:$\av_2$] (a2) at ($(1)!.7!(2)!.15!(3)$);
\draw[fill=black,double] (a2) circle (1.5pt);
\coordinate[label= right:$\av_3$] (a3) at ($(maxup)!.4!(a)$);
\draw[fill=black,double] (a3) circle (1.5pt);
\coordinate[label= right:$\av_4$] (a4) at ($(maxdown)!.3!(a)$);
\draw[fill=black,double] (a4) circle (1.5pt);
%
%
%
\end{tikzpicture}
\usetikzlibrary{calc,through,intersections}
\begin{tikzpicture}[scale = .8]
\coordinate[label=above:$1$] (1) at (3,4.5);
\coordinate[label=right:$2$] (2) at (6,0);
\coordinate[label=left:$\bar 3$] (3) at (0,0);
\draw[name path=triangle] (1) -- (2) -- (3) -- cycle;
\coordinate (maxup) at ($(3)!.5!(1)$);
\draw[fill=black] (maxup) circle (1.2pt) node[ left] 
{$\av_{H,{n\sm1}}^{\max}$}
;
\coordinate (maxdown) at ($(3)!.7!(2)$);
\draw[fill=black] (maxdown) circle (1pt) node[below ] 
{$\av_{H,{n}}^{\max}$}
;
\draw [style=dashed] ($(maxup)!-.3!(maxdown)$) -- ($(maxup)!1.3!(maxdown)$);
\coordinate[label=right:$\av_{\bar{3}}$] (a) at ($(maxup)!.54!(maxdown)$);
\draw[fill=black] (a) circle (1pt);
\coordinate[label=above:$\av_1$] (a1) at ($(1)!.2!(2)!.1!(3)$);
\draw[fill=black,double] (a1) circle (1.5pt);
\coordinate[label=above:$\av_2$] (a2) at ($(1)!.7!(2)!.15!(3)$);
\draw[fill=black,double] (a2) circle (1.5pt);
\coordinate[label= right:$\av_3$] (a3) at ($(maxup)!.4!(a)$);
\draw[fill=black,double] (a3) circle (1.5pt);
\coordinate[label= right:$\av_4$] (a4) at ($(maxdown)!.3!(a)$);
\draw[fill=black,double] (a4) circle (1.5pt);
\coordinate[label= left:
$\av_{H,{n\sm1}}^{\min}$
] (minup) at ($(a)!.4!(a3)$);
\draw[fill=black] (minup) circle (1pt);
\coordinate[label= left:
$\av_{H,{n}}^{\min}$
] (mindown) at ($(a)!.6!(a4)$);
\draw[fill=black] (mindown) circle (1pt);
\draw[|-|,thick] (maxup) -- (minup);
\draw[|-|,thick] (maxdown) -- (mindown);
\end{tikzpicture}
\caption{{\bf Top}: Plotting the columns of $BA$
on the simplex for a 2A-HMM with aliased states $\{3,4\}$. 
Here, $\av_{\bar 3} = \b \av_3 + (1-\b)\av_4$.
{\bf Bottom}: Any vectors $\av_{H,n\sm1}, \av_{H,n}$ within the depicted bars results in a matrix $A_H$ with all entries non-negative, except in possibly the $2\times2$ aliased block.
}
\label{fig:simplex}%
\end{figure}
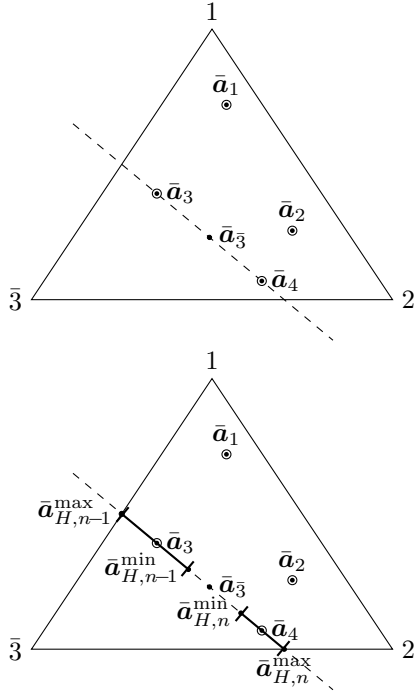

Before characterizing $\feasreg_H$ let us first give some intuition on the role of $(\tau_{n\sm1},\tau_n)$.
Consider the ${n\sm1}$ dimensional columns 
$\{\av_{i} \gn i\in[n]\}$ of the matrix $BA$.
These can be plotted on the $n\sm1$ dimensional simplex, as shown in Fig.\ref{fig:simplex} (top), for $n=4$ and aliased states $\{3,4\}$.
Recall that
$$
A_H(\t_{n\sm1},\t_n)
= S(\t_{n\sm1},\t_n)^{-1} A S(\t_{n\sm 1},\t_n)
$$
and let
$\{\av_{H,i} \gn i\in[n]\}$ be the columns of the matrix $BA_H\in\R^{{n\sm1}\times n}$.

Since $BS(\t_{n\sm1},\t_n)^{-1} = B$ we have  
\(
B A_H = B A S(\t_{n\sm1},\t_n).
\)
So the non-aliased columns of $BA_H$ are unaltered from these of $BA$, i.e. for all $i\in[n\sm2]$, $\av_{i} = \av_{H,i}$.
The new aliased columns of $BA_H$ are 
\beq
{\av_{H,n\sm1}} & = & 
 {{\av}_n} + \t_{n\sm1} \dav\\
{\av_{H,n}} & = & 
 {{\av}_n} + \t_n \dav.
\eeq
Thus $\t_{n\sm1}$ ($\t_n$) determines the position of the vector
$\av_{H,n\sm1}$ ($\av_{H,n}$) along the ray passing through $\av_{n\sm1}$ and $\av_n$ (dashed line in Fig.\ref{fig:simplex}).

Hence a necessary condition for $A_H$ to be a valid transition matrix is that 
${\av_{H,n\sm1}\geq0}$ and ${\av_{H,n} \geq 0}$,
and one cannot take $\t_{n\sm1}$ and $\t_n$ arbitrarily.
In particular, there are $\tau^{\max}_{n\sm1}$ and $\tau^{\max}_{n}$ such that
$\av_{H,n\sm1}$ and $\av_{H,n}$ are as ``far'' apart as possible by 
putting them on the opposite sides of 
the ray connecting them, such that both 
sit on
the simplex boundary.
This is achieved by taking
\beq
{\av^{\max}_{H,n\sm1}} & = & 
 {{\av}_n} + \t^{\max}_{n\sm1} \dav  \\
{\av^{\max}_{H,n}} & = & 
 {{\av}_n} + \t^{\max}_n \dav ,
\eeq
where
\beq
\tau^{\max}_{n\sm1}  = &  \min_{j\in\X \setminus \{{n\sm1},n\} } 
\frac{\oo{2}(1 + \sign(\da_j)) - ({\av_n})_j}{\da_j} &\geq 0
\\
\tau^{\max}_n  = &  \max_{j\in\X \setminus \{{n\sm1},n\} } 
\frac{\oo{2}(1 - \sign(\da_j)) - ({\av_n})_j}{\da_j} &\leq 0.
\eeq
(see Fig.\ref{fig:simplex}, bottom).
Since we assumed as a convention that $\t_{n\sm1}> \t_n$ we have that any $\t_{n\sm1} \leq \t_{n\sm1}^{\max}$ and $\t_{n} \geq \t_{n}^{\max}$ results in a non negative matrix $BA_H$.
Note that $BA_H\geq 0$ implies ${A_H}_{[1:n\sm2,1:n]} \geq 0$.

Next, consider 
the new relative probabilities $\a_{H,i}$
as defined by (\ref{eq:alp-def}) with $A_H$ replacing $A$.
One can verify that these 
satisfy
\beq
\a_{H,i} = \frac{ \a_i - \tau_n}{\tau_{{n\sm1}} - \tau_{n}}
,\quad i\in \supp\row\backslash\{{n\sm1},n\}.
\eeq

Obviously, a necessary condition for $A_H$ to be a valid transition matrix is that
\beqn
\label{eq:alpH-const}
0 \leq \a_{H,i} = \frac{ \a_i - \tau_n}{\tau_{{n\sm1}} - \tau_{n}} \leq 1
,\quad i\in \supp\row\backslash\{{n\sm1},n\}.
\eeqn

Define the minimal and maximal relative probabilities of the non-aliased states by
\beq
\alpmin & = & \min\,  \{\a_i \gn i\in\supp\row \backslash \{{n\sm1},n\} \}\\
\alpmax & = & \max\, \{\a_i \gn i\in\supp\row \backslash \{{n\sm1},n\} \}.
\eeq
Let $\alpmin_H$ and $\alpmax_H$ be defined similarly.
Taking
\beq
\t^{\min}_{n\sm1} & = & \alpmax \\
\t^{\min}_n & = & \alpmin,
\eeq
we have $\alpmin_H = 0$ and $\alpmax_H = 1$.
Hence, for any $\t_{n\sm1} \geq \t^{\min}_{n\sm1} $ and $\t_{n} \leq \t^{\min}_{n}$ the constraint (\ref{eq:alpH-const}) holds and consequently ${A_H}_{[1:n,1:n\sm2]}$ is non-negative.
The corresponding columns $\av_{H,n\sm1}^{\min}={{\av}_n} + \t^{\min}_{n\sm1} \dav$ and $\av_{H,n}^{\min}={{\av}_n} + \t^{\min}_{n} \dav$ are depicted in
Fig.\ref{fig:simplex} (bottom).

Combining the above constraints we have that the four parameters $\t^{\min}_{n\sm1},\t^{\min}_{n}, \t^{\max}_{n\sm1},\t^{\max}_{n}$ define the rectangle
\beqn
\label{eq:feas1}
\feasreg_1 = 
[\t^{\min}_{n\sm1},\t^{\max}_{n\sm1}]\times[\t^{\max}_n, \t^{\min}_n ],
\eeqn
which characterize the equivalent matrices $A_H$
having all entries non-negative except of possibly in the $2\times 2$ aliased block (see Fig.\ref{fig:feas}). 
Thus we must have 
$\feasreg_H \subset
\feasreg_1$.

We are left to find the conditions under which the $2\times 2$ aliased block is non-negative.
Writing $A_H$ explicitly we have that these conditions are
\beqn
\label{eq:cond-nc}
A_{H,n,{n\sm1}} &= & 
\tau_{n\sm1} (\tau_{n\sm1} - \a_{n\sm1} ) P(\ot\gn{n\sm1}) 
\\
\nonumber
&& +\, (1-\tau_{n\sm1})(\tau_{n\sm1} - \a_n ) P(\ot\gn n) 
\geq 0
        \\[5pt]
\label{eq:cond-cn}      
A_{H,{n\sm1},n}  &=&
\tau_n (\a_{n\sm1} - \tau_n) P(\ot\gn{n\sm1}) 
\\
\nonumber
 && +\, (1-\tau_n)(\a_n - \tau_n) P(\ot\gn n) 
\geq 0
        \\[5pt]
\label{eq:cond-cc}              
A_{H,{n\sm1},{n\sm1}}  &=&
 \tau_{n\sm1} (\a_{n\sm1} \sm \tau_n) P(\ot\gn{n\sm1}) 
 \\
 \nonumber
  && +\, (1\sm \tau_{n\sm1})(\a_n - \tau_n) P(\ot\gn n)  
  \geq  0
  \\ [5pt]
\label{eq:cond-nn}        
A_{H,n,n}  &=&
 \tau_n (\tau_{n\sm1} - \a_{n\sm1}) P(\ot\gn{n\sm1}) 
  \\ 
  \nonumber
  && +\, (1-\tau_n)(\tau_{n\sm1} - \a_n) P(\ot\gn n) 
 \geq 0.
\eeqn
As the case $P(\ot\gn{n\sm1}) = P(\ot\gn n) = 0$ is trivial, we assume that at least one of $P(\ot\gn{n\sm1}),P(\ot\gn n)$ is nonzero (and since by convention $P(\ot\gn{n\sm1}) \geq P(\ot\gn n)$, this is equivalent to 
$P(\ot\gn{n\sm1})>0$).

Recall that by definition $\da_{n\sm1} = P(\ot\gn{n\sm1}) - P(\ot\gn n)$ (see (\ref{eq:deltaa})). We now consider the cases $\da_{n\sm1}=0$ and $\da_{n\sm1}>0$ separately.

\paragraph{The case $\da_{n\sm1}=0$.}
Consider first the off-diagonal constraint (\ref{eq:cond-cn}) for $A_{H,{n\sm1},n}\geq 0$, taking the form
\beq
\t_n(1-(\a_{n\sm1}-\a_n))) \leq \a_n.
\eeq
Denote 
\[
\t^0 = {\a_n }/{(1 - (\a_{n\sm1} -\a_n))}.
\] 
Since
\(
\a_{n\sm1} - \a_n \leq 1
\)
we need
\(
 \t_n \leq 
 \t^0
.
\)
Similarly,
(\ref{eq:cond-nc}) is satisfied if and only if
\(
\t_{n\sm1} \geq 
\t^0
.
\)
Thus in order for the off-diagonal entries $A_{H,n,{n\sm1}},A_{H,{n\sm1},n}$ to be non-negative we need $(\t_{n\sm1},\t_n)\in\feasreg_2^0$ where
\beqn
\label{eq:feas20}
\feasreg_2^0 = [\t^0, \infty] \times [-\infty,\t^0].
\eeqn

Next, the on-diagonal constraint (\ref{eq:cond-cc}) for $A_{H,{n\sm1},{n\sm1}}\geq 0$ is equivalent to
\beqn
\label{eq:cons-cc-linear}
\t_n \leq \a_n + \t_{n\sm1}(\a_{n\sm1}-\a_n).
\eeqn
Similarly, the on-diagonal constrain (\ref{eq:cond-nn}) for $A_{H,n,n}\geq 0$ is
\beqn
\label{eq:cons-nn-linear}
\t_n(\a_{n\sm1}-\a_n) \leq  \t_{n\sm1} - \a_n.
\eeqn
Define the two linear functions $\gcc^0,\gnn^0: \R\to\R$ by
\beq
\gcc^0(\t_{n\sm1}) & = & \a_n + \t_{n\sm1}(\a_{n\sm1}-\a_n)\\
\gnn^0(\t_{n\sm1}) & = & \frac{\t_{n\sm1} - \a_n}{\a_{n\sm1}-\a_n}.
\eeq

Note that $\t^0$ is a fixed point of both $\gcc^0$ and $\gnn^0$,
\beq
\t^0 = \gcc^0(\t^0) = \gnn^0(\t^0).
\eeq
Note also that for $\a_{n\sm1}\geq\a_n$ the functions $\gcc^0$ and $\gnn^0$ are increasing, while for $\a_{n\sm1}<\a_n$ they are decreasing.
Thus, if $\a_{n\sm1}\geq\a_n$ the constrains (\ref{eq:cons-cc-linear},\ref{eq:cons-nn-linear})
are automatically satisfied for $(\t_{n\sm1},\t_n)\in\feasreg_2^0$, so in this case $A_{H,{n\sm1},{n\sm1}},A_{H,n,n}$ are also guaranteed to be non-negative.

If $\a_{n\sm1}<\a_n$ then with $\t_{n\sm1}\geq \t_n$ (as we assume here) we have
$\gnn^0(\t_{n\sm1}) \leq \gcc^0(\t_{n\sm1}) \leq \t_0$
and the constraints (\ref{eq:cons-cc-linear},\ref{eq:cons-nn-linear})
take the form
\(
\gnn^0(\t_{n\sm1}) \leq \t_n \leq \gcc^0(\t_{n\sm1}).
\)
Thus, in order for the on-diagonal entries $A_{H,{n\sm1},{n\sm1}}$ and $A_{H,n,n}$ to be non-negative we must have $(\t_{n\sm1},\t_n)\in\feasreg_3^0$, where
\beqn
\label{eq:feas30}
\feasreg_3^0 = \{(\t_{n\sm1},\t_n) \in \feasreg_1 \gn \gnn^0(\t_{n\sm1}) \leq \t_n \leq \gcc^0(\t_{n\sm1}) \}.
\eeqn

We are left to ensure that for  $\a_{n\sm1}<\a_n$ 
the off diagonal entries 
are also non-negative.
Indeed, since $\t_n \leq \t_{n\sm1}$, $\t^0$ is a fixed point and $\gcc(\t_{n\sm1}),\gnn(\t_{n\sm1})$ are decreasing, for any $(\t_{n\sm1},\t_n)\in\feasreg_3^0$
we automatically have that  $\t_0 \leq \t_{n\sm1}$ and $\t_n \leq \t_0$, 
so $(\t_{n\sm1},\t_n)\in\feasreg_3^0$ implies $(\t_{n\sm1},\t_n)\in\feasreg_2^0$.
Thus all entries of the aliasing block are guaranteed to be non-negative.

To conclude, we have shown that for $\da_{n\sm1}=0$ the feasible region (\ref{eq:feas-reg-def}) is given by
\beq
\feasreg_H^0 = 
\begin{cases}
\feasreg_1 \cap \feasreg_2^0 & \a_{n\sm1} \geq \a_n \\
\feasreg_1 \cap  \feasreg_3^0 & \a_{n\sm1} < \a_n.
\end{cases}
\eeq

\paragraph{The case $\da_{n\sm1}>0$.}
This case has the same characteristics as for the $\da_{n\sm1}=0$ case, but it is a bit more complex to analyze.
Define $\t^{\pm}$ (as the analogues of $\t^0$) by
\beqn
\label{eq:quad-ineq}
\t^{\pm} &=& \frac{1}{
2\da_{n\sm1}
}
\Big(
\a_{n\sm1} P(\ot\gn{n\sm1}) 
\\
\nonumber
&& \hspace{50pt}
- (1+\a_n) P(\ot\gn n)
 \pm {\sqrt{\Delta}}
 \Big),
\eeqn
where
\beq
{\Delta} 
& = & 
\Big(\a_{n\sm1} P(\ot\gn{n\sm1}) - (1+\a_n)P(\ot\gn n) \Big)^2 
\\
&& \hspace{80pt}
+\, 4 \a_n P(\ot\gn n) \da_{n\sm1}
\geq 0.
\eeq
And define the regions
\beqn
\label{eq:feas2}
\feasreg_2 &\!\!= \!\!& [\t^+, \infty ] \times [\t^-,\t^+]
\\[5pt]
\label{eq:feas3} 
\feasreg_3  &\!\!= \!\!&
\{(\t_{n\sm1},\t_n)\in \feasreg_1 
 \gn \,
 \gnn(\t_{n\sm1}) \leq \t_n \leq \gcc(\t_{n\sm1})
  \}
\eeqn
where the functions $\gcc,\gnn:\R\to\R$ are given by
\beqn
\label{eq:gcc-def}
\gcc(\t_{n\sm1}) & = & \paren{\frac{\a_{n\sm1} P(\ot\gn{n\sm1}) - \a_n P(\ot\gn n)}{\da_{n\sm1}}}
\\
\nonumber
&& -\,  \frac{P(\ot\gn{n\sm1}) P(\ot\gn n) (\a_{n\sm1}-\a_n)}{(\da_{n\sm1})^2} 
\\
\nonumber
&& \times\,
\paren{\t_{n\sm1} - \Big(\frac{-P(\ot\gn n)}{\da_{n\sm1}}\Big)}^{-1}
\eeqn
and
\beqn
\label{eq:gnn-def}
\gnn(\t_{n\sm1}) & = & 
\paren{\frac{-P(\ot\gn n)}{\da_{n\sm1}}}
\\
\nonumber
&&
-\, \frac{P(\ot\gn{n\sm1}) P(\ot\gn n) ( \a_{n\sm1} - \a_n )}{(\da_{n\sm1})^2}
\\
\nonumber
&& \hspace{-50pt} \times\,
\paren{\t_{n\sm1}- \Big(\frac{\a_{n\sm1}P(\ot\gn{n\sm1}) - \a_n P(\ot\gn n)}{\da_{n\sm1}}\Big)}^{-1}.
\eeqn

\begin{lemma} 
\label{thm:feasreg}
Let $\feasreg_1$ be defined in (\ref{eq:feas1}) and let $\feasreg_2$ and $\feasreg_3$ be defined according to whether $\da_{n\sm1}=0$ (\ref{eq:feas20},\ref{eq:feas30}) or not (\ref{eq:feas2},\ref{eq:feas3}). 
Then the feasible region $\feasreg_H$ satisfies
\beq
\feasreg_H = 
\begin{cases}
\feasreg_1 \cap \feasreg_2 & \a_{n\sm1} \geq \a_n \\
\feasreg_1 \cap \feasreg_3 & \a_{n\sm1} < \a_n.
\end{cases}
\eeq
\end{lemma}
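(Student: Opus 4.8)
The plan is to reduce the claim, in the case $\da_{n\sm1}>0$ (the case $\da_{n\sm1}=0$ having already been settled above), to an analysis of the four quadratic inequalities (\ref{eq:cond-nc})--(\ref{eq:cond-nn}) governing non-negativity of the $2\times2$ aliased block of $A_H$. Since we have already shown $\feasreg_H\subset\feasreg_1$, it suffices to determine, for $(\t_{n\sm1},\t_n)\in\feasreg_1$ with $\t_{n\sm1}>\t_n$, exactly when these four entries are simultaneously non-negative. First I would handle the two off-diagonal entries. Expanding (\ref{eq:cond-nc}) shows it is a quadratic in $\t_{n\sm1}$ alone, with positive leading coefficient $\da_{n\sm1}$ and with roots precisely $\t^{\pm}$ of (\ref{eq:quad-ineq}); hence $A_{H,n,n\sm1}\geq0$ iff $\t_{n\sm1}\leq\t^-$ or $\t_{n\sm1}\geq\t^+$. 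Symmetrically, (\ref{eq:cond-cn}) is the same quadratic in $\t_n$ with reversed overall sign, so $A_{H,n\sm1,n}\geq0$ iff $\t^-\leq\t_n\leq\t^+$. Combining these with the convention $\t_{n\sm1}>\t_n$ rules out the branch $\t_{n\sm1}\leq\t^-$ and yields exactly the region $\feasreg_2$ of (\ref{eq:feas2}).

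Next I would treat the two on-diagonal entries. Solving the constraint (\ref{eq:cond-cc}) for $\t_n$ as a function of $\t_{n\sm1}$ (its coefficient $\da_{n\sm1}\t_{n\sm1}+P(\ot\gn n)$ staying positive over $\feasreg_1$, since there $\t_{n\sm1}\geq\alpmax\geq0$) gives, after a short rational simplification, exactly $\t_n\leq\gcc(\t_{n\sm1})$ with $\gcc$ as in (\ref{eq:gcc-def}); likewise (\ref{eq:cond-nn}) rearranges to $\t_n\geq\gnn(\t_{n\sm1})$ with $\gnn$ as in (\ref{eq:gnn-def}). Thus the on-diagonal entries are non-negative iff $(\t_{n\sm1},\t_n)$ lies in the hyperbolic band $\feasreg_3$ of (\ref{eq:feas3}). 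The structural fact linking the two pairs of constraints is that $\t^{\pm}$ are common fixed points of $\gcc$ and $\gnn$: setting $\t_{n\sm1}=\t_n=\t$ in either diagonal constraint reduces it to the same quadratic whose roots are $\t^{\pm}$, so $\gcc(\t^{\pm})=\gnn(\t^{\pm})=\t^{\pm}$. This mirrors the role of the fixed point $\t^0$ in the linear case $\da_{n\sm1}=0$.

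Finally I would split on the sign of $\a_{n\sm1}-\a_n$, using monotonicity and convexity of the hyperbolas on the branch to the right of each vertical asymptote. When $\a_{n\sm1}\geq\a_n$ the coefficient $-P(\ot\gn n\sm1)P(\ot\gn n)(\a_{n\sm1}-\a_n)/(\da_{n\sm1})^2$ appearing in $\gcc,\gnn$ is non-positive, so both are increasing; together with the fixed-point property this shows every $(\t_{n\sm1},\t_n)\in\feasreg_1\cap\feasreg_2$ automatically satisfies $\gnn(\t_{n\sm1})\leq\t_n\leq\gcc(\t_{n\sm1})$, rendering the on-diagonal constraints redundant, so $\feasreg_H=\feasreg_1\cap\feasreg_2$. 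When $\a_{n\sm1}<\a_n$ this coefficient is positive, $\gcc,\gnn$ are convex and decreasing, and the fixed-point property together with $\t_{n\sm1}>\t_n$ forces any point of $\feasreg_1\cap\feasreg_3$ to obey $\t_{n\sm1}\geq\t^+$ and $\t^-\leq\t_n\leq\t^+$, rendering the off-diagonal constraints redundant, so $\feasreg_H=\feasreg_1\cap\feasreg_3$. I expect this last step to be the main obstacle: unlike the linear $\da_{n\sm1}=0$ case, where redundancy of one pair of constraints follows from comparing straight lines, here one must carefully identify which branch of each hyperbola is active and invoke convexity and monotonicity to establish the inclusions; the sign bookkeeping in the rational simplifications of (\ref{eq:cond-cc}) and (\ref{eq:cond-nn}) is the other place demanding care.
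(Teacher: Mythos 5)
Your proposal follows the paper's own proof essentially step for step: the off-diagonal entries reduce to the quadratic (\ref{eq:t-quad}) with roots $\t^{\pm}$, giving $\feasreg_2$ once the convention $\t_{n\sm1}>\t_n$ eliminates the branch $\t_{n\sm1}\leq\t^-$; the on-diagonal entries produce the hyperbolic bounds $\gcc,\gnn$; the fixed-point identity $\gcc(\t^{\pm})=\gnn(\t^{\pm})=\t^{\pm}$ links the two pairs; and the final split on the sign of $\a_{n\sm1}-\a_n$ uses monotonicity and convexity to show one pair of constraints is redundant. Your observation that restricting either diagonal constraint to the line $\t_{n\sm1}=\t_n$ reproduces the off-diagonal quadratic is a slightly cleaner route to the fixed-point property than the paper's direct computation.

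One statement in your sketch is wrong as written, and it sits exactly where the paper does its most delicate work. Rearranging (\ref{eq:cond-nn}) for $\t_n$ requires dividing by $\t_{n\sm1}\da_{n\sm1}-\bigl(\a_{n\sm1}P(\ot\gn n\sm1)-\a_nP(\ot\gn n)\bigr)$, whose sign changes at the vertical asymptote of $\gnn$; the resulting condition is therefore the two-case inequality (\ref{eq:tn-gnn}), not uniformly $\t_n\geq\gnn(\t_{n\sm1})$. Consequently, in the sub-case $\a_{n\sm1}\geq\a_n$ your claim that every point of $\feasreg_1\cap\feasreg_2$ satisfies the band condition $\gnn(\t_{n\sm1})\leq\t_n\leq\gcc(\t_{n\sm1})$ is false: for $\t_{n\sm1}\geq\t^+$ but to the left of the asymptote one has $\gnn(\t_{n\sm1})\geq\t^+\geq\t_n$, so the lower band inequality fails, yet the \emph{active} constraint on that branch is $\t_n\leq\gnn(\t_{n\sm1})$, which does hold. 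The redundancy argument must therefore be run branch by branch, as the paper does (showing $\gnn$ lies above $\t^+$ to the left of its asymptote and below $\t^-$ to its right); with that correction, and the bookkeeping you already flag, your plan goes through.
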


\begin{proof}
As the case $\da_{n\sm1}=0$ was treated above, we consider the case $\da_{n\sm1}>0$.
Consider first the off diagonal constraint (\ref{eq:cond-cn}).
Multiplying by $(-1)$, we need to solve
the following  inequality for $\t\in\R$,
\beqn
\label{eq:t-quad}
&&\t^2 
\da_{n\sm1}
 - \t(\a_{n\sm1} P(\ot\gn{n\sm1})
 \\
 \nonumber
&& \hspace{20pt}
  - (1+\a_n)P(\ot\gn n)) -\a_n P(\ot\gn n) \leq 0.
\eeqn
We
first solve with equality to find the solutions $\t^-,\t^+$ given in (\ref{eq:quad-ineq}).
Thus, since $\Delta\geq 0$ we have that any feasible $\t_n$ must satisfy 
\(
\t^- \leq \t_n \leq \t^+.
\)
Note that the constraint (\ref{eq:cond-nc}) for $\t_{n\sm1}$ is the complement of 
(\ref{eq:cond-cn}),
 and by assumption $\t_{n\sm1} \geq \t_n$, 
so (\ref{eq:cond-nc}) is satisfied iff $\t^+ \leq \t_{n\sm1}$.
Thus, the region $\feasreg_2$ given in 
(\ref{eq:feas2}) indeed
characterize the non-negativity of both $A_{{n\sm1},n}$ and $A_{n,{n\sm1}}$.
With some algebra, $\t^+$ and $\t^-$ can be shown to satisfy the following useful relations:

\begin{itemize}
        \item If $\a_{n\sm1} \geq \a_n$ then
\beqn
\label{eq: tm-a-geq}
-\frac{P(\ot\gn{n\sm1})}{\da_{n\sm1}}
\leq \t^- 
\leq 0
\eeqn
and 
\beqn
\label{eq: tp-a-geq}
0 \leq \t^+ 
\leq \frac{\a_{n\sm1} P(\ot\gn{n\sm1}) - \a_n P(\ot\gn n)}{\da_{n\sm1}}.
\eeqn
\item If $\a_{n\sm1} < \a_n $ then
\beqn
\label{eq: t-a-leq}
\t^- 
&\leq& -\frac{P(\ot\gn n)}{\da_{n\sm1}}
\\
\nonumber
&\leq& \frac{\a_{n\sm1} P(\ot\gn{n\sm1}) - \a_n P(\ot\gn n)}{\da_{n\sm1}} 
\leq \t^+.
\eeqn
\end{itemize}

We proceed to handle the constraints (\ref{eq:cond-cc}) and (\ref{eq:cond-nn})
 corresponding to the region $\feasreg_3$.
We begin by solving the inequality (\ref{eq:cond-cc}):
\beq
&& -\t_n( P(\ot\gn n) +\t_{n\sm1} \da_{n\sm1}) + \a_n P(\ot\gn n) \\
&& \hspace{20pt} +\, \t_{n\sm1}( \a_{n\sm1}P(\ot\gn{n\sm1})- \a_n P(\ot\gn n)) \geq 0.
\eeq
Note that for $(\t_{n\sm1},\t_n) \in \feasreg_1$ we have 
$(P(\ot\gn n) + \t_{n\sm1}\da_{n\sm1}) \geq 0$. 
Rearranging we get that 
in order for $A_{H,{n\sm1},{n\sm1}}$ to be non-negative we must have that
\beqn
\label{eq:tn-gcc}
\text{ if } \quad
\t_{n\sm1} \geq -\frac{P(\ot\gn n)}{\da_{n\sm1}}
\quad
\text{then}
\quad
 \t_n \leq \gcc(\t_{n\sm1}),
\eeqn
where $\gcc$ is the function given in (\ref{eq:gcc-def}).
Similarly, consider the condition (\ref{eq:cond-nn}),
\beq
&& \t_n\Big( \a_n P(\ot\gn n) - \a_{n\sm1} P(\ot\gn{n\sm1})
 + \t_{n\sm1}\da_{n\sm1}\Big) 
\\
&& \quad +\, P(\ot\gn n)( \a_n - \t_{n\sm1}) \,\geq\, 0.
\eeq
Rearranging we find that in order for $A_{H,n,n}\geq 0$ we must have
\beqn
\label{eq:tn-gnn}
\begin{cases}
\t_n \leq \gnn(\t_{n\sm1}) & \t_{n\sm1} \leq \frac{\a_{n\sm1} P(\ot\gn{n\sm1})- \a_n P(\ot\gn n)}{P(\ot\gn{n\sm1}) - P(\ot\gn n)}\\
\t_n \geq \gnn(\t_{n\sm1}) & \text{otherwise},
\end{cases}
\eeqn
where
the function $\gnn$ 
 is given 
 in (\ref{eq:gnn-def}).
Note that $\gcc$ (res. $\gnn$) defines the boundary where (\ref{eq:cond-cc}) (res. (\ref{eq:cond-nn})) changes sign, namely any pair 
$(\t_{n\sm1},\t_n) = (\t_{n\sm1},\gcc(\t_{n\sm1}))$ is on the curve making Equation (\ref{eq:cond-cc}) equal zero,
and
similarly $\gnn(\t_{n\sm1})$ is  
such that 
$(\t_{n\sm1},\t_n) = (\t_{n\sm1},\gnn(\t_{n\sm1}))$ is on the curve making 
(\ref{eq:cond-nn}) equal zero. 
Having the boundaries $\gcc,\gnn$ in our disposal let us first consider the case $\a_{n\sm1}\geq\a_n$.

\paragraph{The sub-case $\a_{n\sm1}\geq\a_n$.} 
We show that in this case, having $(\t_{n\sm1},\t_n)\in\feasreg_1 \cap \feasreg_2$ already ensures 
that conditions (\ref{eq:tn-gcc}) and (\ref{eq:tn-gnn})
are trivially met,
which in turn implies the
non-negativity of both $A_{H,{n\sm1},{n\sm1}}$ and $A_{H,n,n}$.
This 
is done
 by showing that for any $(\t_{n\sm1},\t_n)\in\feasreg_1 \cap \feasreg_2$ the curve $(\t_{n\sm1}, \gcc(\t_{n\sm1}))$ is above $(\t_{n\sm1},\t^+)$.
Similarly, for 
$\t_{n\sm1} < {(\a_{n\sm1} P(\ot\gn{n\sm1}) - \a_n P(\ot\gn n))}/{\da_{n\sm1}}$ the curve $(\t_{n\sm1}, \gnn(\t_{n\sm1}))$ is above $(\t_{n\sm1},\t^+)$ and for $\t_{n\sm1} > {(\a_{n\sm1} P(\ot\gn{n\sm1})- \a_n P(\ot\gn n))}/{\da_{n\sm1}}$ the curve $(\t_{n\sm1}, \gnn(\t_{n\sm1}))$ is below $(\t_{n\sm1},\t^-)$,
thus making conditions (\ref{eq:tn-gcc}) and (\ref{eq:tn-gnn}) true.
Toward this end consider
the equality $\gcc(\t) = \gnn(\t)$ given by
\beq
0 & = &  \Big(\a_{n\sm1} P(\ot\gn{n\sm1}) + (1-\a_n) P(\ot\gn n)\Big) \times
 \\
&&\Big(\t^2 \da_{n\sm1}
 + \t((1+\a_n)P(\ot\gn n)
\\
&&\hspace{20pt}
- \a_{n\sm1} P(\ot\gn{n\sm1}))
   -\a_n P(\ot\gn n)\Big).
 \eeq
Thus if $\paren{\a_{n\sm1} P(\ot\gn{n\sm1}) + (1-\a_n) P(\ot\gn n)}=0$ we have that $\gcc=\gnn$ identically. 
Otherwise we need to solve again (\ref{eq:t-quad}) so the solutions are $\t^+,\t^-$ with $\gcc(\t^+) = \gnn(\t^+)$ and $\gcc(\t^-) = \gnn(\t^-)$.
In addition 
one can show that $\t^+$ and $\t^-$ are in fact fixed points of both $\gcc$ and $\gnn$,
so together we have
\beq
&&\t^+ = \gcc(\t^+) = \gnn(\t^+) \\
&&\t^- = \gcc(\t^-) = \gnn(\t^-).
\eeq
Inspecting $\gcc(\t_{n\sm1})$ one can see that for $\t_{n\sm1} \geq -P(\ot\gn n)/\da_{n\sm1}$, 
$\gcc(\t_{n\sm1})$  is monotonic increasing and concave.
Since by (\ref{eq: tp-a-geq}) we have $\t^+ \geq -P(\ot\gn n)/\da_{n\sm1}$ 
we get that for $\t_{n\sm1} \geq \t^+$ we must have $\gcc(\t_{n\sm1}) \geq \t^+$ as needed.
Similarly, for $\t^+ \leq \t_{n\sm1} < {(\a_{n\sm1} P(\ot\gn{n\sm1}) - \a_n P(\ot\gn n))}/{\da_{n\sm1}}$ the function $\gnn(\t_{n\sm1})$ is increasing and convex and thus above $\t^+$, while for $ {(\a_{n\sm1}P(\ot\gn{n\sm1}) - \a_n P(\ot\gn n))}/{\da_{n\sm1}}< \t_{n\sm1}$ it is increasing but always below $\t^-$.
Thus, for $\a_{n\sm1} \geq \a_n$ we have that $\feasreg_1 \cap \feasreg_2$ also characterize the non-negativity of $A_{H,{n\sm1},{n\sm1}}$ and $A_{H,n,n}$ as claimed.

\paragraph{The sub-case $\a_{n\sm1} <\a_n$.}
Note that by (\ref{eq: t-a-leq}) we have $\t^+ \geq {(\a_{n\sm1} P(\ot\gn{n\sm1}) - \a_n P(\ot\gn n))}/{\da_{n\sm1}}$.
Thus for $\t^+ \leq \t_{n\sm1}$ both $\gcc$ and $\gnn$ are decreasing and convex  and $\gnn(\t_{n\sm1}) \leq \gcc(\t_{n\sm1})$.
Thus in order to ensure (\ref{eq:tn-gcc}, \ref{eq:tn-gnn}) we need 
\(
\gnn(\t_{n\sm1}) \leq \t_n \leq \gcc(\t_{n\sm1}).
\)
Thus, $\feasreg_3 $ as defined in (\ref{eq:feas3}) characterize the non-negativity of $A_{H,n,n},A_{H,{n\sm1},{n\sm1}}$.
Finally we need to show that having $(\t_{n\sm1},\t_n)\in \feasreg_3$ also ensures the non-negativity of $A_{H,n,{n\sm1}}$ and $A_{H,{n\sm1},n}$. But by (\ref{eq: t-a-leq})  we have that for $\t_{n\sm1} \geq \t^+$ both $\gcc(\t_{n\sm1}),\gnn(\t_{n\sm1}) \geq \t^-$ and thus $\feasreg_3 \subset \feasreg_2$.
Hence we have shown that $\feasreg_H$ is characterized as claimed.
\end{proof}

\begin{lemma}
\label{lem:conn}
 The set $\feasreg_H$ is connected.
\end{lemma}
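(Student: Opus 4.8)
The plan is to split along the two cases of Lemma~\ref{thm:feasreg}. In the case $\a_{n\sm1}\geq\a_n$ one has $\feasreg_H=\feasreg_1\cap\feasreg_2$, and both $\feasreg_1=[\t^{\min}_{n\sm1},\t^{\max}_{n\sm1}]\times[\t^{\max}_n,\t^{\min}_n]$ and $\feasreg_2=[\t^+,\infty)\times[\t^-,\t^+]$ are products of intervals, hence convex. Since the intersection of convex sets is convex and every convex subset of $\R^2$ is (path-)connected, this case is immediate.

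The substantive case is $\a_{n\sm1}<\a_n$, where $\feasreg_H=\feasreg_3$. Here I would exhibit $\feasreg_H$ as a region squeezed between two continuous graphs over a single interval, and then realize it as a continuous image of a rectangle. First I would use the inclusion $\feasreg_3\subseteq\feasreg_2$ from Lemma~\ref{thm:feasreg} to note that every point of $\feasreg_H$ has $\t_{n\sm1}\geq\t^+$, and that on $[\t^+,\infty)$ the functions $\gnn,\gcc$ are continuous, monotone decreasing, and satisfy $\gnn\leq\gcc$ (all recorded in the proof of Lemma~\ref{thm:feasreg}). Setting $L(\t_{n\sm1})=\max\{\gnn(\t_{n\sm1}),\t^{\max}_n\}$ and $U(\t_{n\sm1})=\min\{\gcc(\t_{n\sm1}),\t^{\min}_n\}$, both continuous, one can write $\feasreg_H$ as the set of $(\t_{n\sm1},\t_n)$ with $\t_{n\sm1}$ in the base interval $[\max\{\t^{\min}_{n\sm1},\t^+\},\t^{\max}_{n\sm1}]$ and $L(\t_{n\sm1})\leq\t_n\leq U(\t_{n\sm1})$. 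Once I show that the set $I$ of abscissae admitting a non-empty slice (those with $L\leq U$) is an interval, the continuous map $\Phi(\t_{n\sm1},s)=(\t_{n\sm1},(1\sm s)L(\t_{n\sm1})+s\,U(\t_{n\sm1}))$ on the connected set $I\times[0,1]$ has image exactly $\feasreg_H$, yielding connectedness.

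The step I expect to be the main obstacle is showing that $I$ is an interval. I would expand $L\leq U$ into the three conditions $\gnn\leq\gcc$, $\gnn\leq\t^{\min}_n$ and $\t^{\max}_n\leq\gcc$: the first holds throughout $[\t^+,\infty)$ thanks to the reduction via $\feasreg_3\subseteq\feasreg_2$, while the monotone decrease of $\gnn$ and $\gcc$ turns $\{\gnn\leq\t^{\min}_n\}$ into a right ray and $\{\gcc\geq\t^{\max}_n\}$ into a left ray, so that $I$ is an intersection of intervals and hence an interval. The delicate point is precisely the reduction to $\t_{n\sm1}\geq\t^+$: without it, $\{\gnn\leq\gcc\}$ (two decreasing hyperbola branches meeting at the fixed points $\t^-$ and $\t^+$) could a priori split into two components, and the simple monotonicity argument for $I$ would fail.
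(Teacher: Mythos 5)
Your proof is correct and follows essentially the same route as the paper's: the case $\a_{n\sm1}\geq\a_n$ is disposed of by convexity of the rectangle $\feasreg_1\cap\feasreg_2$, and the case $\a_{n\sm1}<\a_n$ by exploiting that $\feasreg_H$ is the region trapped between the two monotone decreasing curves $\gnn\leq\gcc$ over a single base interval, intersected with a rectangle. The paper handles the second case in one sentence; your parametrization of the region as the continuous image of $I\times[0,1]$, together with the check that the base set $I$ is an interval (using the reduction to $\t_{n\sm1}\geq\t^+$ via $\feasreg_3\subseteq\feasreg_2$ and the monotonicity of $\gnn,\gcc$), simply supplies the details the paper leaves implicit.
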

\noindent
\begin{proof}
If $\a_{n\sm1} \geq \a_n$ then $\feasreg_H = \feasreg_1 \cap \feasreg_2$ is a rectangle and thus connected. In the case $\a_{n\sm1} < \a_n$ we have that 
$\gnn(\t_{n\sm1}) \leq \gcc(\t_{n\sm1})$ and are both decreasing and convex thus the region $\feasreg_3$ with intersection with a rectangle is a connected set.
\end{proof}

{
\begin{table}
\centering
{
\begin{tabular}{c|c|c|c}
&
$A_{i,{n\sm1}}\!=\!0$
&
$0\!<\!A_{i,{n\sm1}}\!<\!1$
&
$A_{i,{n\sm1}}\!=\!1$
\\
\hline
\parbox[c]{4em}{$A_{i,n}\!=\!0$}
&
\parbox[c][5em][c]{3.5em}{
\centering
\tikz[scale = .6]{
\fill[fill=black!30] (-1,1) rectangle (1,-1);
\axes}
}
&
\parbox[c][3em][c]{3.5em}{
\centering
\tikz[scale = .6]{
\fill[fill=black!30] (-1,1) rectangle (1,0);
\axes}
}
&
\parbox[c][3em][c]{3.5em}{
\centering
\tikz[scale = .6]{
\fill[fill=black!30] (-1,1) rectangle (0,0);
\axes}
}
\\
\hline
\parbox[c]{5em}{$0\!<\!A_{i,n}\!<\!1$}
&
\parbox[c][5em][c]{3em}{
\centering
\tikz[scale = .6]{
\fill[fill=black!30] (-1,1) rectangle (0,-1);
\axes}
}
&
\parbox[c][3em][c]{3em}{
\centering
\tikz[scale = .6]{
\fill[fill=black!30] (-1,1) rectangle (1,-1);
\axes}
}
&
\parbox[c][3em][c]{3em}{
\centering
\tikz[scale = .6]{
\fill[fill=black!30] (-1,1) rectangle (0,-1);
\axes}
}
\\
\hline
\parbox[c]{4em}{$A_{i,n}\!=\!1$}
&
\parbox[c][5em][c]{3em}{
\centering
\tikz[scale = .6]{
\fill[fill=black!30] (-1,1) rectangle (0,0);
\axes}
}
&
\parbox[c][3em][c]{3em}{
\centering
\tikz[scale = .6]{
\fill[fill=black!30] (-1,1) rectangle (1,0);
\axes}
}
&
\parbox[c]{3.5em}{\centering not\\ aliased}
\\
\hline
\end{tabular}
\caption{
For any state
$i\in\X\setminus\{n\sm1,n\}$ with $A_{i,n}\in\{0,1\}$ or $A_{i,n\sm1}\in\{0,1\}$,
pick the relevant diagram.
}
\label{table:columns}
}
\end{table}
\begin{table}
\centering
{
\begin{tabular}{c|c}
\hline
$\a_{j}\!=\!0$
&
\parbox[c][5em][c]{3em}{
\centering
\tikz[scale = .6]{
\fill[fill=black!30] (-1,0) rectangle (1,-1);
\axes
}
}
\\
\hline
$\a_{j}\!=\!1$
&
\parbox[c][5em][c]{3em}{
\centering
\tikz[scale = .6]{
\fill[fill=black!30] (0,1) rectangle (1,-1);
\axes}
}
\\
\hline
\end{tabular}
\caption
{For any state $j\in\supp\row\setminus\{n\sm1,n\}$ with $\a_j\in\{0,1\}$
	 pick the corresponding diagram.}
\label{table:rows}
}
\end{table}

\begin{table*}[t]
\centering
{
\begin{tabular}{c|c|c|c|c}
&
$A_{{n\sm1},{n\sm1}}\!<\!A_{n,n}$
&
$A_{{n\sm1},{n\sm1}}\!=\!A_{n,n}$
&
\begin{tabular}{l}
$A_{n,n} < $ \\ $A_{{n\sm1},{n\sm1}}$ \\ $  < A_{\ot,{n\sm1}}$
\end{tabular}
&
$A_{{n\sm1},{n\sm1}}\!=\!A_{\ot,{n\sm1}}$
\\
\hline
\parbox[c]{5em}{
\centering $A_{n\sm1,n} =0$}
&
\parbox[c][5em][c]{3.5em}{
\centering
\tikz[scale = .6]{
\fill[fill=black!30] (-1,0) rectangle (1,-1);
\axes}
}
&
\parbox[c][3em][c]{3.5em}{
\centering
\tikz[scale = .6]{
\fill[fill=black!30] (-1,.1) rectangle (1,-.1);
\axes}
}
&
\parbox[c][3em][c]{3.5em}{
\centering
\tikz[scale = .6]{
\fill[fill=black!30] (-1,0) rectangle (1,1);
\axes}
}
&
\parbox[c][3em][c]{3.5em}{
\centering
\tikz[scale = .6]{
\fill[fill=black!30] (0,1) rectangle (1,0);
\axes}
}
\\
\hline
\parbox[c]{5em}{\centering $A_{n\sm1,n}> 0$}
&
\parbox[c][5em][c]{3em}{
\centering
\tikz[scale = .6]{
\fill[fill=black!30] (-1,1) rectangle (1,-1);
\axes}
}
&
\parbox[c][3em][c]{3em}{
\centering
\tikz[scale = .6]{
\fill[fill=black!30] (-1,1) rectangle (1,-1);
\axes}
}
&
\parbox[c][3em][c]{3em}{
\centering
\tikz[scale = .6]{
\fill[fill=black!30] (-1,1) rectangle (1,-1);
\axes}
}
&
\parbox[c][3em][c]{3em}{
\centering
\tikz[scale = .6]{
\fill[fill=black!30] (0,1) rectangle (1,-1);
\axes}
}
\\
\hline
\end{tabular}
\caption{
If $\a_{n\sm1} \geq \a_{n}$ pick the relevant diagram from here.
}
\label{table:feas2}
}
\end{table*}
\begin{table}
\centering
{
\begin{tabular}{c|c|c}
&
$A_{{n\sm1},{n\sm1}} \!=\! 0$
&
$A_{{n\sm1},{n\sm1}} \!>\! 0$
\\
\hline
\parbox[c]{4em}{$A_{n,n}\!=\!0$}
&
\parbox[c][5em][c]{3.5em}{
\centering
\tikz[scale = .6]{
\fill[fill=black!30] (-1,0.9) -- (-1,1) -- (-.9, 1) -- (1, -.9) -- (1, -1) -- (.9,-1) -- (-1,0.9);
\axes}
}
&
\parbox[c][3em][c]{3.5em}{
\centering
\tikz[scale = .6]{
\fill[fill=black!30] (-1,1) -- (1,1) -- (1,-1) -- (-1,1);
\axes}
}
\\
\hline
\parbox[c]{4em}{$A_{n,n} \!>\! 0$}
&
\parbox[c][5em][c]{3em}{
\centering
\tikz[scale = .6]{
\fill[fill=black!30] (-1,1) -- (1,-1) -- (-1,-1) -- (-1,1);
\axes}
}
&
\parbox[c][3em][c]{3em}{
\centering
\tikz[scale = .6]{
\fill[fill=black!30] (-1,1) rectangle (1,-1);
\axes}
}
\\
\hline
\end{tabular}
\caption{
If $\a_{n\sm1} < \a_{n}$ pick the relevant diagram from here.
}
\label{table:feas3}
}
\end{table}
}
\subsection{Conditions for $\abs{\feasreg_H}=1$}
\begin{figure}%
\centering
~
\scriptsize
\tikz[scale = .6]{
\fill[fill=black!30] (-1,1) rectangle (1,-1);
\draw (-1,0) -> (1,0);
\draw (0, -1) -> (0, 1);
\draw [fill] (0,0) circle   (1.5pt);
\node at (1.7,0) {$\dt_{n\sm1}$};
\node at (0, 1.3) {$\dt_n$};
\node at (0,-1.5) {\small no constraints};
}
%
\scriptsize
\hspace{10pt}
\tikz[scale = .6]{
\centering
\fill[ fill=black!30] (-1,1) rectangle (1,0);
\draw (-1,0) -> (1,0);
\draw (0, -1) -> (0, 1);
\draw [fill] (0,0) circle   (1.5pt);
\node at (1.7,0) {$\dt_{n\sm1}$};
\node at (0, 1.3) {$\dt_n$};
\node at (0,-1.5) {$\dt_n \geq 0$};
}
%
\scriptsize
\hspace{5pt}
\tikz[scale = .6]{
\centering
\draw (-1,0) -> (1,0);
\draw (0, -1) -> (0, 1);
\draw [fill] (0,0) circle   (1.5pt);
\node at (1.7,0) {$\dt_{n\sm1}$};
\node at (0, 1.3) {$\dt_n$};
\node at (0,-1.5) {$\dt_{n\sm1},\dt_n\!=\!0$};
}
\caption{\normalsize The effective feasible region for various constraints, ensuring $A_H(1+\dt_{n\sm1},\dt_n) \geq 0$ for $\abs{\dt_{n\sm1}},\abs{\dt_n}<\!< 1$.
}
\label{fig:tau-feas-examp-all}
\end{figure}
\label{app:cond_feasreg}
Let us first write
\beq
(\t_{n\sm1},\t_n) = (1,0) + (\dt_{n\sm1},\dt_n).
\eeq
We characterize the conditions for $\abs{\feasreg_H}=1$ by determining the geometrical constraints the entries of the transition matrix $A$ pose on $(\dt_{n\sm1},\dt_n)$ in order to ensure $A_H(1 +\dt_{n\sm1},\dt_n)\geq 0$. Note that $\abs{\feasreg_H}=1$ iff these constraints imply that $(\dt_{n\sm1},\dt_n)=(0,0)$ is the {\em unique} feasible pair.

As a first example, consider
a 2A-HMM $H$ having a transition matrix with all entries being strictly positive, $A \geq \eps > 0$.
Since the mapping (\ref{eq:Atrans}) is continuous in $\dt_{n\sm1},\dt_n$, 
there exists a neighborhood $N\subset\R^2$ of $(\dt_{n\sm1},\dt_n) = (0,0)$, such that for {\em any} 
$(\dt_{n\sm1},{\dt_n})\in N$ the matrix $A_H(1 +\dt_{n\sm1},\dt_n)$ is non-negative, and thus $N\subset\feasreg_{H}$. 
This condition can be represented in the $(\dt_{n\sm1},\dt_n)$ plane (i.e. $\R^2$) as the "full" diagram 
Fig.\ref{fig:tau-feas-examp-all}.
On the other hand,
the condition that
$(\dt_{n\sm1},\dt_n)= (0,0)$ is the {unique} feasible pair
can be represented by a point like diagram as in 
Fig.\ref{fig:tau-feas-examp-all}.

In general, 
the entries of the transition matrix $A$ put constraints on the feasible $(\dt_{n\sm1},\dt_n)$ 
only when $(\t_{n\sm1},\t_n) = (1,0)$ is on the boundary of $\feasreg_H$.
These constraints can be explicitly determined in terms of $A$'s entries by considering
the exact characterization of $\feasreg_H$ given in Theorem \ref{lem:feasreg}.
Note however that 
by the fact that $\feasreg_H$ is connected,
and as far as the condition $\abs{\feasreg_H}=1$ is concerned,
we  
only 
need to 
consider the shape of these constraints in a small neighborhood of $(\t_{n\sm1},\t_n) = (1,0)$,
i.e for $\abs{\dt_{n\sm1}},\abs{\dt_n} <\!< 1$.
Any such neighborhood can be represented on the $\R^2$ plane (as in Fig.\ref{fig:tau-feas-examp-all}).
The shape of this neighborhood for a given $H$ is called
the {\em effective feasible region} of $\feasreg_H$.

Now,
as the example with $A\geq\eps>0$ shows,
a non-trivial constraint on the (effective) feasible region must results from $A$ having some zeros entries.
Each such a zero entry, as determined by its position in $A$, put a boundary constraint on  $(\dt_{n\sm1},\dt_n)$.
These in turn corresponds to a suitable diagram in $\R^2$ 
(as the diagram for $\dt_n \geq 0$ in 
Fig.\ref{fig:tau-feas-examp-all}).
The effective feasible region of $A$ is obtained by taking the intersection of all these diagrams.
The exact correspondence between $A$'s entries and the corresponding diagrams is given in Tables \ref{table:columns},\ref{table:rows},\ref{table:feas2},\ref{table:feas3}.
The procedure for determining the effective feasible region of a 2A-HMM is given in Algorithm \ref{alg:feasreg}.
The correctness of the algorithm is demonstrated in the proof of
Lemma \ref{thm:feasreg}.

\begin{algorithm}
\caption{
determining the effective feasible region for minimal 2A-HMM $H$
}
\label{alg:feasreg}
\begin{algorithmic}[1]
\STATE permute aliased states so that $P(\ot\gn n\sm1) \geq P(\ot \gn n)$
\STATE collect the following diagrams:
 \begin{itemize}
       \item[-] {$\forall i\in[n\sm2]$}
               with {$A_{i,n}\in\{0,1\}$ or $A_{i,n\sm1}\in\{0,1\}$} 
  pick the relevant diagram in Table \ref{table:columns} 
 \item[-]
 $\forall j\in\supp\row\setminus\{n\sm1,n\}$
  with $\a_j\in\{0,1\}$
        pick corresponding diagram in Table \ref{table:rows}
  \item[-] if $\a_{n\sm1} \geq \a_{n}$ pick relevant diagram in table \ref{table:feas2}
       and if $\a_{n\sm1} < \a_{n}$ pick relevant diagram in Table \ref{table:feas3}
\end{itemize}
\STATE {\bf Return} the intersection of all the regions obtained in previous step
\end{algorithmic}
\hide{}
\end{algorithm}

\subsection{Examples.}
\label{sec:examples}
%
%
We demonstrate our Algorithm \ref{alg:feasreg} for determining the identifiability of 2A-HMMs on the 2A-HMM given in Section \ref{sec:simul}, shown in Fig \ref{fig:hmm_ion} (left).
Going through the steps of Algorithm \ref{alg:feasreg} 
we get the following diagrams for the effective feasible region:
\beq
&\underbrace{
\parbox[c][3em][c]{3.5em}{
\centering
\tikz[scale = .5]{
\fill[fill=black!30] (-1,1) rectangle (0,-1);
\axes}
}
}_{
\substack{
\text{from Table \ref{table:columns}:}\\[3pt]
A_{1,3} = 0 \,\wedge\, A_{1,4} \neq 0}
}
\cap
\underbrace{
\parbox[c][3em][c]{3.5em}{
\centering
\tikz[scale = .5]{
\fill[fill=black!30] (-1,0) rectangle (1,1);
\axes}
}
}_{\substack{
\text{from Table \ref{table:columns}:}\\[3pt]
A_{2,4} = 0 \,\wedge\, A_{2,3} \neq 0
}}
\cap
\underbrace{
\parbox[c][3em][c]{3.5em}{
\centering
\tikz[scale = .5]{
\fill[fill=black!30] (0,1) rectangle (1,-1);
\axes}
}
}_{
\substack{
\text{from Table \ref{table:rows}:}\\[3pt]
\a_1 = 1}
}
\,
\cap
\,
\underbrace{
\parbox[c][3em][c]{3.5em}{
\centering
\tikz[scale = .5]{
\fill[fill=black!30] (-1,0) rectangle (1,-1);
\axes}
}
}_{
\substack{
\text{from Table \ref{table:rows}:}\\[3pt]
\a_2 = 0 }
}
=
\,
\parbox[c][3em][c]{3.5em}{
\centering
\tikz[scale = .5]{
\axes}
}.
\eeq
Since their intersection results in a point like diagram, this 2A-HMM is identifiable.

More generally, for a minimal stationary 2A-HMM satisfying Assumptions (A1-A2) with aliased states $n$ and $n\sm1$,
a sufficient condition for uniqueness
is the following constraints on the allowed transitions between the hidden states:
$\exists i_{n\sm1},j_{n\sm1},i_n,j_n\in [n\sm2]$ such that
\begin{align*}
\checkmark\quad  & i_{n\sm1} \to {n\sm1} \, \to j_{n\sm1}  & \xmark\quad & i_{n\sm1} \to n \to \,*  \\
\checkmark\quad  & i_n \to n \to j_n  & \xmark\quad & i_n \to {n\sm1}\, \to \,*  \\
                                       &                                                & \xmark\quad & *\; \to {n\sm1}\, \to j_n  \\
                                               &                                                        & \xmark\quad & *\; \to n \to j_{n\sm1}.
\end{align*}
One can check that these conditions give the same set of diagrams as above.
%

\section{Proofs for Section \ref{sec:learn} (Learning)}
 
\subsection{Proof of Lemma \ref{lem:del-rel}}  

The claim in (\ref{eq:momtoA1}) that $\mom^{(1)} = BAC_\b =  \At$ follows directly from (\ref{eq:mom-2A}) and (\ref{eq:striped-mom}).
Next, we have
\beq
\dmom{2} &=& BAAC_\b - BAC_\b BAC_\b
\\
&=& BA(I_n - C_\b B)AC_\b
\\
& = & BA \uo \vo_\b^{\tran} AC_\b,
\eeq
where the last equality is by the fact that $(I_n - C_\b B)=\uo \vo_\b^{\tran}$.
Since 
$BA \uo = \dav$ and $\vo_\b^{\tran} AC_\b = (\dalpv)^{\tran}$ we have $\dmom{2}=\dav (\dalpv)^{\tran}$ as claimed in (\ref{eq:dmomtoA2}).

As for (\ref{eq:dmomtoA3}) we have,
\beq
\dmom{3} &=& BAAAC_\b - BAAC_\b BAC_\b
\\
&& -\, BAC_\b BAAC_\b + BAC_\b BAC_\b BAC_\b
\\
& = & BA(I_n - C_\b B) A (I_n - C_\b B) AC_\b
\\
&=& \dav \vo_\b^{\tran} A \uo (\dalpv)^{\tran}.
\eeq
Since by definition $\dmomo = \vo_\b^{\tran} A \uo$
we have $\dmom{3} = \dmomo \dmom{2}$
and the claim in (\ref{eq:dmomtoA3}) is proved.

Finally,
\beq
\dmomT{c}
= BA \Big(\diag(\kernelExp_{[\cdot,c]}) - C_\b \diag(\kernel_{[\cdot,c]}) B  \Big)  AC_\b.
\eeq
Since
\beq
 \diag(\kernelExp_{[\cdot,c]}) - C_\b \diag(\kernel_{[\cdot,c]})B
 = \kernel_{n\sm1,c} \uo (\vo_\b)^{\tran}
\eeq
we have that $\dmomT{c} = \kernel_{n\sm1,c} \dmom{2}$ as claimed in (\ref{eq:dmomTtoA}).
 
\subsection{Proof of Lemma \ref{lem:err-asymp}} 
Assumption (A2) combined with the 
fact that the HMM has a finite number of states imply that the HMM is {\em geometrically} ergodic:
 there exist parameters $G < \infty$ and $\mix\in[0, 1)$ such that from any initial distribution $\initv$,
\beqn
\label{eq:mix}
\nrm{A^t \initv - \statv}_{1}
\leq
2G\mix^t
,\quad
\forall t \in \N.
\eeqn
Thus, we may apply the following concentration bound, given in \citet{kontweiss2014uniform}:
\begin{theorem}
\label{lem:hmm-conc}
Let $Y=Y_0,\ldots,Y_{T-1}\in\Y^T$ be the output of a HMM with transition matrix $A$ and output parameters $\prmv$. Assume that $A$ is geometrically ergodic with constants $G,\mix$.
Let $F:(Y_0,\ldots,Y_{T-1})\mapsto \R$
be any function that is Lipschitz wit constatnt $l$ with respect to the Hamming metric
on $\Y^T$.
Then, for all
$\eps>0$,
\beqn
\label{eq:kr}
\Pr(|F(Y)-\E F|>\eps T) \le 2 \exp\paren{-\frac{T(1-\mix)^2\eps^2}{2l^2G^2}}.
\eeqn
\end{theorem}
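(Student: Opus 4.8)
The plan is to establish the bound by the martingale (bounded-differences) method for dependent random variables, in the spirit of Kontorovich and Ramanan. The two ingredients I need are: (i) that the output process $(Y_t)$ inherits geometrically decaying mixing coefficients from the geometric ergodicity (\ref{eq:mix}) of the hidden chain, and (ii) an Azuma--Hoeffding estimate for the Doob martingale of $F$ whose conditional ranges are controlled by those coefficients.

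First I would set up the Doob martingale. Let $\mathcal F_i = \sigma(Y_0,\dots,Y_i)$ and $D_i = \E[F\gn \mathcal F_i] - \E[F\gn\mathcal F_{i-1}]$, so that $F(Y)-\E F = \sum_{i=0}^{T-1} D_i$ is a sum of martingale differences. The Lipschitz hypothesis says that altering a single coordinate changes $F$ by at most $l$, so the per-coordinate Lipschitz vector is $c=(l,\dots,l)$ with $\nrm{c}_2^2 = l^2 T$. To control each increment I introduce the $\eta$-mixing coefficients
\beq
\bar\eta_{ij} & = & \tfrac12\sup\, \nrm{ \mathcal L(Y_j,\dots,Y_{T-1}\gn Y_0^{i-1},Y_i\!=\!w) - \mathcal L(Y_j,\dots,Y_{T-1}\gn Y_0^{i-1},Y_i\!=\!w') }_{1},
\eeq
the supremum taken over the observed prefix $Y_0^{i-1}$ and over $w,w'\in\Y$, and collect them in the upper-triangular matrix $\Delta\in\R^{T\times T}$ with $\Delta_{ii}=1$ and $\Delta_{ij}=\bar\eta_{ij}$ for $j>i$. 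A coupling argument then bounds the conditional oscillation of $D_i$ given $\mathcal F_{i-1}$ by $\sum_{j\geq i} l\,\bar\eta_{ij}$, and feeding these ranges into Azuma--Hoeffding produces
\beq
\Pr(|F(Y)-\E F|> s) & \leq & 2\exp\paren{-\frac{s^2}{2\,\nrm{c}_2^2\,\nrm{\Delta}_\infty^2}},
\eeq
where $\nrm{\Delta}_\infty$ is the maximal row sum of $\Delta$.

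The core of the argument, and what I expect to be the main obstacle, is bounding $\bar\eta_{ij}$ through the HMM structure. Conditioning on $(Y_0^{i-1},Y_i)$ induces by filtering a distribution $\mu_i$ over the hidden state $X_i$; because outputs are conditionally independent given the states and the hidden chain is Markov, the conditional law of the future outputs depends on this conditioning only through $\mathcal L(X_j\gn\cdot) = A^{\,j-i}\mu_i$. Two values $w,w'$ of $Y_i$ give two filtering distributions $\mu_i,\mu_i'$, and since pushing through the emission kernel cannot increase total variation, $\bar\eta_{ij}\leq \tfrac12\nrm{A^{\,j-i}\mu_i - A^{\,j-i}\mu_i'}_1$. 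Applying the mixing bound (\ref{eq:mix}) to the probability vectors $\mu_i,\mu_i'$ shows the two laws of $X_j$ are both close to $\statv$, giving $\bar\eta_{ij}\leq G\mix^{\,j-i}$. Summing the geometric series and using $G\geq1$,
\beq
\nrm{\Delta}_\infty &\leq& 1 + \sum_{k\geq1} G\mix^{k} \;\leq\; \frac{G}{1-\mix}.
\eeq
Finally I would substitute $\nrm{c}_2^2=l^2T$ and $\nrm{\Delta}_\infty\leq G/(1-\mix)$ into the Azuma bound and set $s=\eps T$, which collapses exactly to the stated inequality.

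The delicate points all live in the mixing estimate: one must verify that conditioning on the observed prefix funnels every dependence of the future outputs on the remote past through the single filtering distribution over $X_i$, and that the contraction $\nrm{A^{\,t}\mu-\statv}_1$ decays at rate $\mix^{\,t}$ \emph{uniformly} over all admissible filtering distributions $\mu$ (which is immediate from (\ref{eq:mix}), as it holds for every initial distribution). Controlling the supremum over the prefix inside $\bar\eta_{ij}$ and tracking the numerical constants so that the factor $G^2/(1-\mix)^2$ emerges with the stated constant is the only real bookkeeping; the martingale decomposition and the Azuma step are otherwise standard.
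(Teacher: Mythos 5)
The paper does not actually prove this theorem --- it is imported verbatim from the cited reference --- and the proof there is exactly the route you describe: the Kontorovich--Ramanan martingale (bounded-differences) method, with the $\eta$-mixing coefficients of the observed process funneled through the filtering distribution over the hidden state and then contracted by the geometric ergodicity of $A$. Your Doob decomposition, the reduction $\bar\eta_{ij}\leq \tfrac12\nrm{A^{\,j-i}\mu_i - A^{\,j-i}\mu_i'}_1$ via the conditional independence of the future outputs given $X_j$, and the Azuma step with $\nrm{c}_2^2=l^2T$ are all sound. The one place the bookkeeping does not quite close is the claim $\bar\eta_{ij}\leq G\mix^{\,j-i}$: passing through $\statv$ by the triangle inequality, (\ref{eq:mix}) yields $\tfrac12\bigl(2G\mix^{\,j-i}+2G\mix^{\,j-i}\bigr)=2G\mix^{\,j-i}$, so your row sums are only bounded by $1+2G\mix/(1-\mix)$, and the exponent you actually obtain has $8l^2G^2$ (for $G\geq\tfrac12$) rather than $2l^2G^2$ in the denominator. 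This is a constant-factor loss, not a structural gap; the stated constant is recovered by bounding $\bar\eta_{ij}$ directly through the Dobrushin contraction coefficient of $A^{\,j-i}$ under the normalization of $G$ and $\mix$ adopted in the cited source, so the discrepancy is one of conventions rather than of substance.
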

In order to apply the theorem note that $\forall t\in\{1,2,3\}$, $\E[\emomK_{ij}^{(t)}]= \momK_{ij}^{(t)}$ 
for any $i,j\in[n\sm1]$.
In addition, following Assumption (A3),
$(T-t) \emomK_{ij}^{(t)}$
is $(t+1)L^2$-Lipschitz with respect to the Hamming metric on $\Y^T$.
Thus, taking $\eps\approx T^{-\oo{2}}$ in Theorem \ref{lem:hmm-conc} and applying a union bound on $i,j$ readily gives
\beq
\emomK^{(t)} & = & \momK^{(t)} + \O\paren{T^{-\oo{2}}}.
\eeq
The kernel-free moments $\emom^{(t)}$ given in (\ref{eq:smom}) incur additional error which results in a factor of at most
$1/(\sigma_{\min}(\kernel)^2 \min_i{\pi_i})$ hidden in the $O_P$ notation.
Since $\dmom{t}$ are (low order) polynomials of $\mom^{(t)}$, the asymptotics $\O\paren{T^{-\oo{2}}}$ carry on to the error in $\edmom{t}$.
A similar argument yields the claim for $\dmomT{c}$.

\subsection{Proof of Lemma \ref{lem:detect}}
\label{app:detect}
Let $\sigma_1$ and $\hat\sigma_1$ be the largest singular values of $\dmom{2}$ and $\edmom{2}$, respectively.
Combining Weyl's Theorem
\citep{stewart1990matrix}
with Lemma \ref{lem:err-asymp}
gives
\beq
\abs{\sigma_1 - \hat\sigma_1} \leq \nrm{\dmom{2} - \edmom{2}}_{\frob}
 =  O_P(T^{-\oo{2}}),
\eeq

Recall that under the null hypothesis $\H_0$, we have $\sigma_1=0$. Thus, with high probability $\hat\sigma_1 < \xi_0 T^{-\oo{2}}$, for some $\xi_0>0$.
In contrast, under $\H_1$ we have $\sigma_1=\sigma >0$, thus for some $\xi_1>0$, $\hat\sigma_1 > \sigma - \xi_1 T^{-\oo{2}}$.
Hence, 
taking $T$ sufficiently large, we have that for any $c_h>0$ and $0<\eps<\oo{2}$, with $h_T = c_h T^{-\oo{2} + \eps}$,
\beq
\text{in case } \H_0:&& \quad \hat\sigma_1 < h_T
\\
\text{in case } \H_1:&& \quad \hat\sigma_1 > h_T,
\eeq
with high probability.
Thus, the correct detection of aliasing is with high probability.

\newcommand{\score}{\operatorname{score}}
\subsection{Proof of Lemma \ref{lem:identify}}
Let us define the following score function for any $i\in[n\sm1]$,
\beq
\score(i) = 
\sum_{j\in [n\sm1]} 
\nrm{\edmomT{j} - \kernel_{i,j} \edmom{2} }_{\frob}^2.
\eeq
According to Eq. (\ref{eq:acestimate}) the chosen aliased component is the index with minimal score.
Hence, in order to prove the Lemma we need to show that
\beq
\lim_{T\to\infty}       
\Pr(\exists i\neq n\sm1 : \score(i) < \score(n\sm1)) = 0.
\eeq    
By Lemma \ref{lem:err-asymp} and (\ref{eq:deltaGk}) we have
\beq
\edmomT{j} &=& \dmomT{j} + \frac{\xi_{F}^{(j)}}{\sqrt{T}}
\,=\, \kernel_{n\sm1,j}\dmom{2} + \frac{\xi_{F}^{(j)}}{\sqrt{T}}
\\
\edmom{2} &=& \dmom{2} + \frac{\xi_R}{\sqrt{T}},
\eeq
for some 
$\xi_R,\xi_{F}^{(j)}\in\R^{(n\sm1)\times (n\sm1)}$ with $O_P(\xi_R)=1$ and $O_P(\xi_F^{(j)})=1$.
Thus,
\beq
\score(n\sm1) = 
\oo{\sqrt{T}}\sum_{j\in [n\sm1]} 
\nrm{\xi_{F}^{(j)} - \kernel_{j,n\sm1} \xi_R }_{\frob}^2
\xrightarrow{P} 0.
\eeq

In contrast,
for any $i\neq n\sm 1$ we may write $\score(i)$ as
\beq
\sum_{j\in [n\sm1]} 
\nrm{(\kernel_{j,i} - \kernel_{j,n\sm1}) \dmom{2} + \oo{\sqrt{T}}( \xi_{F}^{(j)} - \kernel_{j,n\sm1} \xi_R) }_{\frob}^2.
\eeq
Applying the (inverse) triangle inequality we have
\beq
\score(i) \geq \sigma^2 \nrm{\kernel_{[\cdot,n\sm1]} - \kernel_{[\cdot,i]}}^2 - O_P(T^{-\oo{2}}).
\eeq
Since $\kernel$ is full rank, 
$\sigma^2\nrm{\kernel_{[\cdot,n\sm1]} - \kernel_{[\cdot,i]}}^2 > 0 $.
Thus, for any $i\neq n\sm1$ as $T\to\infty$, w.h.p $\score(i)> \score(n\sm1)$.
Taking a union bound over $i$ yields the claim.

\subsection{Estimating $\g$ and $\b$} 
\label{app:estimate_gb}
\newcommand{\obj}{h}
We now show how to 
estimate $\g$ and $\b$. 
As discussed in Section \ref{sec:learnaliasA},
this is done by searching for $\g',\b'$ ensuring 
the non-negativity of (\ref{eq:decomp-gb}),
namely,
$A'_H(\g',\b')\geq 0$,
where
\beq
A'_H(\g',\b') \equiv C_{\b'} \At B
+
\g' C_{\b'} \bm u
{\vo_{\b'}}^\tran
+
\frac{\sigma}{\g'} \uo 
\bm v^{\tran} B
+
\dmomo\,  \uo {\vo_{\b'}}^\tran.
\eeq
We pose this as a non-linear two dimensional optimization problem.
For any $\g'\geq0$ and $0\leq\b'\leq1$ define the objective function $\obj:\R^2\to\R$ by
\beq
\obj(\g',\b') = \min_{i,j\in[n]} \{\g' A'_H(\g',\b')_{ij} \}.
\eeq
Note that $\obj(\g',\b')\geq 0$ iff $A'_H(\g',\b')$ does not have negative entries.
Recall that by the identifiability of $H$, if we constrain $\g'\geq0$ then the constraint $A'_H(\g',\b') \geq 0$ has the unique solution $(\g,\b)$ (this is the equivalent to the convention $\t_{n\sm1}\geq \t_n$ made in Section \ref{sec:ident}).
Namely, any $(\g',\b')\neq (\g,\b)$ results in at least one negative entry in $A_H'(\g',\b')$.
Hence, $\obj(\g',\b')$ has a unique maximum, obtained at the true $(\g,\b)$.
In addition, since $\nrm{\bm u}_2=\nrm{\bm v}_2=1$, a feasible solution must have $\g' \leq 2/\sigma$.
So our optimization problem is:
\beqn
\label{eq:opt}
(\hat \g,\hat \b) = \argmax_{(\g',\b')\in[0,\frac{2}{\sigma}]\times[0,1]} {h(\g',\b')}
\eeqn
This two dimensional optimization problem can be solved by either brute force or any non-linear problem solver.

In practice, we solve the optimization problem (\ref{eq:opt})
with the empirical estimates plugged in, 
that is
\beq
\hat A'_H(\g',\b') = 
C_{\hat\b} \hat\At B
+
\g' C_{\b'} \hat{\bm u}_1
\vo_{\b'}^\tran
+
\frac{\hat\sigma_1}{\g'} \uo 
\hat{\bm v}_1^{\tran} B
+
\hat\dmomo\,  \uo \vo_{\b'}^\tran.
\eeq
The empirical objective function $\hat\obj(\g',\b')$ is defined similarly. 
%
%
Such a perturbation may results
in 
a problem with many feasible solutions,
or worse, with no feasible solutions at all.
Nevertheless, as shown in the proof of Theorem \ref{thm:Ahat},
this method is consistent.
Namely, as $T\to\infty$, the above method will return an arbitrarily close solution (in $\nrm{\cdot}_{\frob}$)
to the true transition matrix $A$, with high probability.

\subsection{Proof of Theorem \ref{thm:Ahat}} 
\newcommand{\gh}{\hat \g}
\newcommand{\bh}{\hat \b}
\newcommand{\AHh}{\hat A_H}
\newcommand{\AH}{A_H}
\newcommand{\dl}{\delta}
\newcommand{\toP}{\xrightarrow{P}}
Recall the definitions of $\AH'(\g',\b')$ and its empirical version $\AHh'(\g',\b')$, given in the previous Section \ref{app:estimate_gb}.
To prove the theorem 
we show that
\beq
\nrm{\AHh'(\gh,\bh) - \AH'(\g,\b)}_{\frob} \xrightarrow{P} 0.
\eeq
Toward this goal we bound the l.h.s by
\beqn
\label{eq:two-terms}
\nrm{\AHh'(\gh,\bh) - \AH'(\gh,\bh)}_{\frob} 
+
\nrm{\AH'(\gh,\bh) - \AH'(\g,\b)}_{\frob},
\eeqn
and show that each term converges to $0$ in probability.

We shall need the following lemma, establishing the {\em pointwise} convergence in probability of $\AHh$ to $\AH$:
\begin{lemma}
\label{lem:pointwise}
For any $0<\g'$ and $0 \leq \b' \leq 1$,
\beq
\nrm{\AHh(\g',\b') - \AH(\g',\b')}_{\frob} 
= o_P(1).
\eeq 
\end{lemma}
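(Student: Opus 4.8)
The plan is to recognize that, for \emph{fixed} $\g'>0$ and $\b'\in[0,1]$, the matrix $\AHh(\g',\b')$ is obtained from $\AH(\g',\b')$ simply by substituting the empirical estimators $\eAt,\hat\sigma_1,\hat{\bm u}_1,\hat{\bm v}_1$ and $\edmomo$ for the population quantities $\At,\sigma,\u,\v$ and $\dmomo$, while the matrices $C_{\b'},B,\uo,\vo_{\b'}$ and the scalars $\g',1/\g'$ are deterministic and do not depend on the data. Hence $\AHh(\g',\b')$ is a fixed continuous function of these five estimated ingredients, and it suffices to prove that each converges in probability to its population counterpart; the claim $\nrm{\AHh(\g',\b')-\AH(\g',\b')}_\frob=o_P(1)$ will then follow by the continuous mapping theorem and the triangle inequality applied to the four summands.

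Three of the ingredients are immediate. By Lemma~\ref{lem:err-asymp} together with (\ref{eq:momtoA1}) we have $\eAt=\emom^{(1)}=\At+\O(T^{-\oo2})$, and likewise $\edmom{2}=\dmom{2}+\O(T^{-\oo2})$ and $\edmom{3}=\dmom{3}+\O(T^{-\oo2})$. Since $\dmom{2}=\sigma\u\v^\tran$ has a single nonzero singular value, Weyl's inequality gives $|\hat\sigma_1-\sigma|\le\nrm{\edmom{2}-\dmom{2}}_\frob=\O(T^{-\oo2})$, so $\hat\sigma_1\toP\sigma$. For $\edmomo$ I would use that the least-squares problem (\ref{eq:dmomoest}) has the closed form $\edmomo=\langle\edmom{3},\edmom{2}\rangle_\frob/\nrm{\edmom{2}}_\frob^2$; because $\edmom{2}\toP\dmom{2}\neq0$ the denominator is bounded away from zero with probability tending to one, and invoking $\dmom{3}=\dmomo\dmom{2}$ from Lemma~\ref{lem:del-rel} the continuous mapping theorem yields $\edmomo\toP\langle\dmomo\dmom{2},\dmom{2}\rangle_\frob/\nrm{\dmom{2}}_\frob^2=\dmomo$.

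The delicate ingredient, and the step I expect to be the main obstacle, is the convergence of the singular vectors $\hat{\bm u}_1,\hat{\bm v}_1$. Here I would appeal to a Wedin / Davis--Kahan perturbation bound: because $\dmom{2}$ has rank one, its top singular value $\sigma$ is simple and is separated from all the remaining (zero) singular values by the constant gap $\sigma>0$. This fixed spectral gap, combined with $\edmom{2}=\dmom{2}+\O(T^{-\oo2})$, forces the top left and right singular subspaces of $\edmom{2}$ to converge, giving $\hat{\bm u}_1=\u+\O(T^{-\oo2})$ and $\hat{\bm v}_1=\v+\O(T^{-\oo2})$ \emph{up to a common sign}. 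This sign indeterminacy is exactly the point requiring care: negating both $\hat{\bm u}_1$ and $\hat{\bm v}_1$ is equivalent to replacing $\g'$ by $-\g'$ in $\AH$, so I would fix the standard sign convention on the singular vectors (equivalently, recall that the optimization (\ref{eq:opt}) restricts $\g'\ge0$), which aligns their signs with those of $\u,\v$ and yields $\hat{\bm u}_1\toP\u$, $\hat{\bm v}_1\toP\v$.

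With all five ingredients shown to converge in probability, I would conclude by noting that $\g'>0$ and $\b'\in[0,1]$ are held fixed, so $1/\g'$ is a fixed finite constant and $C_{\b'},\vo_{\b'},B,\uo$ are fixed bounded matrices. Each of the four summands of $\AHh(\g',\b')$ is therefore a product of a bounded deterministic factor with consistent random factors, and hence converges in $\nrm{\cdot}_\frob$ to the corresponding summand of $\AH(\g',\b')$; summing and using the triangle inequality gives the stated $o_P(1)$ bound. The only genuinely nonroutine part of the argument is the singular-vector perturbation together with the sign bookkeeping described above.
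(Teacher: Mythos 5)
Your proposal is correct and follows essentially the same route as the paper: reduce the claim to consistency of the five estimated ingredients $\eAt,\hat\sigma_1,\hat{\bm u}_1,\hat{\bm v}_1,\edmomo$, invoke Weyl's inequality for the singular value and Wedin's theorem for the singular vectors, and conclude by continuity of the decomposition in these quantities for fixed $(\g',\b')$. Your explicit treatment of the sign indeterminacy of the singular vectors and the closed form for $\edmomo$ are welcome refinements of steps the paper leaves implicit, but they do not constitute a different argument.
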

\begin{proof}
By (\ref{eq:Abar-cons}), $\hat\At \toP \At$.
In addition, in Section
\ref{app:detect}
we saw
$\hat \sigma_1 \toP \sigma$ and 
one can easily show that 
$\edmomo\toP \dmomo$.
Thus, in order to prove the claim it suffices to show 
that
$\hat{\bm u}_1 \toP \bm u$ and $\hat{\bm v}_1 \toP \bm v$.
By Wedin's Theorem  
\citep{stewart1990matrix}:
\beq 
\nrm{\hat{\bm u}_1 - \bm u}_2 \leq C\frac{\nrm{\edmom{2}-\dmom{2}}_2}{\sigma},
\eeq
for some $C>0$.
Combining this with Lemma \ref{lem:err-asymp} gives that $\nrm{\hat{\bm u}_1 - \bm u}_2= O_P(T^{-\oo{2}})$.
The same argument goes for $\nrm{\hat{\bm v}_1 - \bm v}_2$.
\end{proof}

We begin with the second term in (\ref{eq:two-terms}).
The first step is showing that the estimated parameters $\hat\g,\hat\b$ in (\ref{eq:opt}) converge with probability to the true parameters $\g,\b$.
We first need to following lemma, 
establishing the convergence of $\hat\obj$ to $\obj$ uniformly in probability:
%
\begin{lemma}
\label{lem:uniform-h}
For any $\eps>0$,
\beq
\Pr\paren{\sup_{(\g',\b')\in[0,2]\times[0,1]}\abs{\hat\obj(\g',\b') - \obj(\g',\b')} > \eps}
= o(1).
\eeq
\end{lemma}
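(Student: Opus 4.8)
The plan is to reduce the uniform convergence of the objectives to uniform convergence, over the compact box $[0,2]\times[0,1]$, of the matrix-valued maps whose entrywise minimum defines them, and then to exploit the polynomial structure of these maps in $(\g',\b')$. Write $\Phi(\g',\b') := \g'\,\AH'(\g',\b')$ and $\hat\Phi(\g',\b') := \g'\,\AHh'(\g',\b')$, so that by definition $\obj(\g',\b') = \min_{i,j\in[n]}\Phi_{ij}(\g',\b')$ and $\hat\obj(\g',\b') = \min_{i,j\in[n]}\hat\Phi_{ij}(\g',\b')$. The key observation is that multiplying by $\g'$ clears the $\sigma/\g'$ singularity: expanding gives
\[
\Phi(\g',\b') = \g'\, C_{\b'}\At B + (\g')^2\, C_{\b'}\bm u\,\vo_{\b'}^{\tran} + \sigma\,\uo\,\bm v^{\tran}B + \g'\,\dmomo\,\uo\,\vo_{\b'}^{\tran},
\]
which is, entrywise, a polynomial in $(\g',\b')$ of bounded degree (at most quadratic in each variable, since $C_{\b'}$ and $\vo_{\b'}$ are affine in $\b'$) whose coefficients are continuous functions of $(\At,\sigma,\bm u,\bm v,\dmomo)$; the empirical map $\hat\Phi$ is the same polynomial evaluated at the estimated coefficients $(\eAt,\hat\sigma_1,\hat{\bm u}_1,\hat{\bm v}_1,\edmomo)$. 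I would then invoke the elementary inequality $\abs{\min_k a_k - \min_k b_k}\le\max_k\abs{a_k-b_k}$ over the finite index set $[n]\times[n]$ to obtain, pointwise,
\[
\abs{\hat\obj(\g',\b') - \obj(\g',\b')} \le \max_{i,j}\abs{\hat\Phi_{ij}(\g',\b') - \Phi_{ij}(\g',\b')} \le \nrm{\hat\Phi(\g',\b') - \Phi(\g',\b')}_{\frob},
\]
so it suffices to show $\sup_{(\g',\b')\in[0,2]\times[0,1]}\nrm{\hat\Phi - \Phi}_{\frob}\toP 0$.

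Next I would write $\hat\Phi-\Phi$ as the sum of four differences, obtained by swapping one true coefficient for its estimate at a time,
\[
\hat\Phi - \Phi = \g'\,C_{\b'}(\eAt - \At)B + (\g')^2\,C_{\b'}(\hat{\bm u}_1 - \bm u)\vo_{\b'}^{\tran} + \uo(\hat\sigma_1\hat{\bm v}_1 - \sigma\bm v)^{\tran}B + \g'(\edmomo - \dmomo)\uo\,\vo_{\b'}^{\tran}.
\]
Over the box the factors $C_{\b'},\vo_{\b'},B,\uo$ have norms bounded by fixed constants (their entries are affine in $\b'\in[0,1]$ or constant) and $\g'\le 2$, so by sub-multiplicativity of the norms there is a constant $c$, independent of $(\g',\b')$, with
\[
\nrm{\hat\Phi(\g',\b') - \Phi(\g',\b')}_{\frob} \le c\Big(\nrm{\eAt - \At}_{\frob} + \nrm{\hat{\bm u}_1 - \bm u}_2 + \nrm{\hat\sigma_1\hat{\bm v}_1 - \sigma\bm v}_2 + \abs{\edmomo - \dmomo}\Big).
\]
The right-hand side no longer depends on $(\g',\b')$, so taking the supremum is immediate, and each of its four terms converges to $0$ in probability by the ingredients already assembled for Lemma \ref{lem:pointwise}: $\eAt\toP\At$ by (\ref{eq:Abar-cons}), $\hat\sigma_1\toP\sigma$ by Weyl's theorem with Lemma \ref{lem:err-asymp}, and $\hat{\bm u}_1\toP\bm u$, $\hat{\bm v}_1\toP\bm v$ by Wedin's theorem (whence $\nrm{\hat\sigma_1\hat{\bm v}_1 - \sigma\bm v}_2\le\abs{\hat\sigma_1-\sigma} + \sigma\nrm{\hat{\bm v}_1 - \bm v}_2\toP 0$, using $\nrm{\hat{\bm v}_1}_2=1$), together with $\edmomo\toP\dmomo$. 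Combining with the pointwise bound above yields $\sup_{(\g',\b')}\abs{\hat\obj - \obj}\toP 0$, which is exactly the stated probability bound.

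The only non-routine point requiring care is the clearing of the $\sigma/\g'$ term. Without the factor $\g'$ built into the definition of $\obj$, the map would blow up as $\g'\to 0$ and uniform control over a box containing $\g'=0$ would be impossible; with it, $\Phi$ is a bounded-degree polynomial, and the estimate becomes uniform ``for free'' because the monomials $(\g')^a(\b')^b$ are bounded on the box, so pointwise convergence of the coefficients upgrades automatically to uniform convergence. Everything else is a standard \emph{continuous function of consistent estimators} argument, and I expect no substantive difficulty beyond bookkeeping of the norm constants.
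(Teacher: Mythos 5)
Your proof is correct, but it takes a genuinely different route from the paper's. The paper argues ``softly'': it observes that $\hat\obj(\g',\b')$ is the minimum entry of a matrix whose entries are polynomials in $(\g',\b')$ with bounded coefficients, hence Lipschitz on the compact box, notes pointwise convergence in probability (as in Lemma~\ref{lem:pointwise}), and then invokes the generic uniform-convergence result of \citet[Corollary 2.2]{newey1991uniform} to upgrade pointwise to uniform convergence. You instead prove the uniform bound by hand: after the same key observation that the factor $\g'$ in the definition of $\obj$ clears the $\sigma/\g'$ pole and leaves a bounded-degree polynomial matrix $\Phi(\g',\b')=\g' A_H'(\g',\b')$, you pass the $\min$ through the elementary inequality $\abs{\min_k a_k-\min_k b_k}\leq\max_k\abs{a_k-b_k}$, telescope $\hat\Phi-\Phi$ into four rank-structured differences, and bound each uniformly over the box by a fixed constant times the error of a single estimated coefficient ($\nrm{\eAt-\At}_{\frob}$, $\nrm{\hat{\bm u}_1-\bm u}_2$, $\nrm{\hat\sigma_1\hat{\bm v}_1-\sigma\bm v}_2$, $\abs{\edmomo-\dmomo}$), so the supremum is taken over a right-hand side that no longer depends on $(\g',\b')$. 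Your version is more self-contained and more quantitative---it needs no external stochastic-equicontinuity theorem and actually delivers an explicit $O_P(T^{-\oo2})$ rate for $\sup\abs{\hat\obj-\obj}$ rather than mere $o_P(1)$---at the cost of some bookkeeping; the paper's version is shorter but leans on a cited result and leaves the Lipschitz constant implicit. Both arguments inherit the same minor unaddressed point from Lemma~\ref{lem:pointwise}: Wedin's theorem controls the singular vectors $\hat{\bm u}_1,\hat{\bm v}_1$ only up to a common sign, which must be fixed consistently with $(\bm u,\bm v)$; this is not a gap you introduced.
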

\begin{proof}
Note that $\hat\obj(\g',\b')$ is the value of the minimal entry of a matrix with all entries being  polynomials of $\g',\b'$ with bounded coefficients.
Thus $\hat\obj$ is Lipschitz.
In addition $[0,2]\times[0,1]$ is compact
and, similarly to Lemma \ref{lem:pointwise},
$\hat h(\g',\b')$ converges in probability pointwise to $ h(\g',\b')$.
Hence, the claim follows by \citet[Corollary 2.2]{newey1991uniform}.
\end{proof}

\begin{lemma} 
\label{lem:ghbh_to_gb}
$(\gh,\bh) \xrightarrow{P} (\g,\b)$. 
\end{lemma}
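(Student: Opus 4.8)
The plan is to prove Lemma \ref{lem:ghbh_to_gb} by the classical argmax (M-estimator) consistency argument, leveraging the uniform convergence $\hat\obj\to\obj$ of Lemma \ref{lem:uniform-h} together with the fact that, by the identifiability of $H$, the population objective $\obj$ has a \emph{unique} and \emph{well-separated} maximizer at $(\g,\b)$. The argument has three ingredients: continuity of $\obj$ and uniqueness of its maximizer, separation of that maximizer, and a sandwiching step on the high-probability event where $\hat\obj$ is close to $\obj$.

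First I would pin down the population landscape. Since $\obj(\g',\b')=\min_{i,j}\{\g'\,A'_H(\g',\b')_{ij}\}$ and multiplying $A'_H$ by $\g'$ clears the $\sigma/\g'$ term, $\obj$ is a minimum of polynomials in $(\g',\b')$ and hence continuous on the compact domain $K$ of (\ref{eq:opt}). For $\g'>0$ one has $\obj(\g',\b')\geq0$ iff $A'_H(\g',\b')\geq0$; at $\g'=0$ only the term $\sigma\uo\v^\tran B$ survives, whose rows $n\sm1$ and $n$ are $\sigma\,\v^\tran B$ and $-\sigma\,\v^\tran B$, forcing a negative entry (as $\v\neq0$ and $B$ has full row rank), so $\obj(0,\b')<0$. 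By identifiability, i.e. $\abs{\feasreg_H}=1$, the matrix $A'_H(\g',\b')$ is non-negative only at $(\g,\b)$, where it equals $A$. Hence $\obj(\g,\b)=\min_{i,j}\{\g A_{ij}\}\geq0$, while $\obj<0$ everywhere else on $K$, so $(\g,\b)$ is the unique maximizer.

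Next I would upgrade uniqueness to separation and conclude. For any $\eps>0$ let $B_\eps$ be the open $\eps$-ball about $(\g,\b)$. The set $K\setminus B_\eps$ is compact, so by continuity $\obj$ attains its supremum there, and by uniqueness of the maximizer this value is strictly below $\obj(\g,\b)$; denote the gap by $\delta>0$. By Lemma \ref{lem:uniform-h}, the event $\{\sup_K\abs{\hat\obj-\obj}<\delta/2\}$ has probability tending to one, and on it, for every $(\g',\b')\in K\setminus B_\eps$,
$$
\hat\obj(\g',\b')<\obj(\g',\b')+\tfrac{\delta}{2}\leq\obj(\g,\b)-\tfrac{\delta}{2}<\hat\obj(\g,\b),
$$
so the maximizer $(\gh,\bh)$ of $\hat\obj$ cannot lie outside $B_\eps$. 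Therefore $\Pr((\gh,\bh)\in B_\eps)\to1$ for every $\eps>0$, which is exactly $(\gh,\bh)\toP(\g,\b)$.

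The main obstacle is the first step: establishing that $(\g,\b)$ is the \emph{unique} maximizer of $\obj$ and that $\obj$ is continuous up to the boundary $\g'=0$. Both hinge on the identifiability hypothesis (through $\abs{\feasreg_H}=1$ and Lemma \ref{lem:equiv}) and on the observation that the defining multiplication by $\g'$ removes the apparent singularity of the $\sigma/\g'$ term. Once uniqueness and continuity on the compact domain are secured, the separation gap $\delta>0$ and the concluding sandwich inequality are routine.
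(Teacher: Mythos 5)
Your proposal is correct and follows essentially the same route as the paper: both are the standard argmax-consistency argument combining the uniqueness of the population maximizer $(\g,\b)$ (established via identifiability in the preceding discussion), a positive separation gap outside any neighborhood of it, and the uniform convergence of Lemma \ref{lem:uniform-h}, concluded by a sandwich of inequalities (the paper phrases this as a contradiction with margins $\eps(\delta)/4$, you phrase it directly with margins $\delta/2$). Your added care in justifying continuity of $\obj$ at $\g'=0$ and the strict positivity of the gap via compactness is a welcome tightening of details the paper leaves implicit, but it does not change the argument.
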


\begin{proof}
Recall that $(\gh,\bh)$ are the maximizers of $\hat\obj(\g',\b')$ and $(\g,\b)$ are the maximizers of $\obj(\g',\b')$, over $(\g',\b')\in[0,2]\times[0,1]$.
%
To prove the claim we need to show that
for any $\dl>0$, 
\beq
\Pr\paren{\nrm{(\gh,\bh)-(\g,\b)} > \dl } = o(1).
\eeq
Toward this end define
\beq
\eps(\dl) \equiv \obj(\g,\b) - \max_{\nrm{(\g',\b')-(\g,\b)} > \dl } \obj(\g',\b').
\eeq
Note that $\eps(\dl)>0$ since $h(\g',\b')$ has 
the {\em unique} maximum $(\g,\b)$.

Now,by Lemma \ref{lem:uniform-h}, 
we have that
\beqn
\label{eq:unif-g}
\Pr\paren{
\sup_{\g',\b'}\abs{\hat\obj(\g',\b') - \obj(\g',\b')}
> \eps(\dl)/4} = o(1).
\eeqn
Thus, if we show that 
$\sup\abs{\hat\obj - \obj} \leq \eps(\dl)/4$ 
implies $\nrm{(\gh,\bh)-(\g,\b)} \leq \dl$ then the claim is proved. 
So assume
\beq
\sup_{\g',\b'}\abs{\hat\obj(\g',\b') - \obj(\g',\b')}
\leq \eps(\dl)/4.
\eeq
Toward getting a contradiction let us assume that $\nrm{(\gh,\bh)-(\g,\b)} > \dl$.
Then the following relations hold:
\beq
\obj(\gh,\bh) &\leq& \obj(\g,\b) - \eps(\dl)
\\
\hat\obj(\gh,\bh) &\leq& \obj(\gh,\bh) + \eps(\dl)/4
\\
\hat\obj(\g,\b) &\geq& \obj(\g,\b) - \eps(\dl)/4.
\eeq
Thus,
\beq
\hat\obj(\gh,\bh) &\leq& \hat\obj(\g,\b) - \eps(\dl)/2,
\eeq
in contradiction to the optimality of $(\gh,\bh)$.
\end{proof} 

By Lemma \ref{lem:ghbh_to_gb}, $(\gh,\bh) \xrightarrow{P} (\g,\b)$. 
Since $H$ is minimal, Theorem \ref{thm:2-alias-irr-cond} implies $\g>0$ and thus $\gh \geq_P \g/2$. 
In addition, $\AH$ is continuous in the compact set $[\g/2,2]\times[0,1]$. 
Thus, by the continuous mapping theorem we have
\beq
\nrm{\AH(\gh,\bh) - \AH(\g,\b)}_{\frob} 
\xrightarrow{P} 0.
\eeq
This proves the case for the right term of (\ref{eq:two-terms}).

The convergence in probability of the left term of (\ref{eq:two-terms}) to zero is a direct consequence of the following uniform convergence lemma:
\begin{lemma}
\beq
\sup_{(\g',\b')\in[\frac{\g}{2},2]\times[0,1]} \nrm{\AHh(\g',\b') - \AH(\g',\b')}_{\frob} = o_P(1).
\eeq
\end{lemma}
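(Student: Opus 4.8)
The plan is to exploit the separable structure of the decomposition (\ref{eq:decomp-gb}): both $\AH(\g',\b')$ and $\AHh(\g',\b')$ are sums of four matched terms, and in each term the randomness enters only through a plug-in coefficient ($\At$ vs.\ $\hat\At$, $\sigma$ vs.\ $\hat\sigma_1$, $\bm u$ vs.\ $\hat{\bm u}_1$, $\bm v$ vs.\ $\hat{\bm v}_1$, $\dmomo$ vs.\ $\edmomo$), whereas the dependence on $(\g',\b')$ is carried solely by the fixed objects $C_{\b'}$, $\vo_{\b'}$, $\uo$, $B$ and the scalars $\g'$ and $\sigma/\g'$. Because the coefficient differences do not depend on $(\g',\b')$, the supremum over the compact domain factors cleanly, and the proof reduces to two ingredients: pointwise convergence of each plug-in quantity, and uniform boundedness on $[\g/2,2]\times[0,1]$ of the factors that do depend on $(\g',\b')$.

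For the first ingredient I would invoke exactly the estimates already assembled in the proof of Lemma \ref{lem:pointwise}: $\hat\At \toP \At$ (cf.\ (\ref{eq:Abar-cons}), equivalently $\emom^{(1)}\toP\mom^{(1)}=\At$ via (\ref{eq:momtoA1}) and Lemma \ref{lem:err-asymp}), $\hat\sigma_1\toP\sigma$ by Weyl's theorem and Lemma \ref{lem:err-asymp}, $\hat{\bm u}_1\toP\bm u$ and $\hat{\bm v}_1\toP\bm v$ by Wedin's theorem and Lemma \ref{lem:err-asymp}, and $\edmomo\toP\dmomo$. Each of these is in fact $O_P(T^{-1/2})$, and none depends on $(\g',\b')$.

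For the assembly I would bound $\nrm{\AHh(\g',\b')-\AH(\g',\b')}_{\frob}$ by the triangle inequality into the four matched differences and pull the supremum inside each. The first is $C_{\b'}(\hat\At-\At)B$, bounded by $(\sup_{\b'\in[0,1]}\nrm{C_{\b'}})\,\nrm{B}\,\nrm{\hat\At-\At}_{\frob}$; the second by $2\,(\sup_{\b'}\nrm{C_{\b'}})(\sup_{\b'}\nrm{\vo_{\b'}})\,\nrm{\hat{\bm u}_1-\bm u}_2$; the fourth by $(\sup_{\b'}\nrm{\vo_{\b'}})\,\abs{\edmomo-\dmomo}$. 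For the third term I would first write $\tfrac{\hat\sigma_1}{\g'}\uo\hat{\bm v}_1^\tran B-\tfrac{\sigma}{\g'}\uo\bm v^\tran B=\tfrac{1}{\g'}\uo(\hat\sigma_1\hat{\bm v}_1-\sigma\bm v)^\tran B$ and split $\hat\sigma_1\hat{\bm v}_1-\sigma\bm v=\hat\sigma_1(\hat{\bm v}_1-\bm v)+(\hat\sigma_1-\sigma)\bm v$; since $1/\g'\le 2/\g$ on the domain, this is bounded by a constant times $\nrm{\hat{\bm v}_1-\bm v}_2+\abs{\hat\sigma_1-\sigma}$. Every $(\g',\b')$-supremum on the right is finite, because $\nrm{C_{\b'}}$ and $\nrm{\vo_{\b'}}$ are uniformly bounded for $\b'\in[0,1]$ and $1/\g'$ is bounded precisely since the domain starts at $\g/2>0$. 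Hence each term is $O(1)\cdot o_P(1)$, and the finite sum is $o_P(1)$, which is the claim.

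The only genuinely delicate point, and the nearest thing to an obstacle, is the uniform control of the factor $1/\g'$ in the third term: it blows up as $\g'\to0$, so the domain must be bounded away from the origin. This is exactly why the supremum is taken over $\g'\in[\g/2,2]$, and the admissibility of this restriction rests on $\g>0$, which holds because $H$ is minimal (Theorem \ref{thm:2-alias-irr-cond} forces $\dav\neq\bm 0$, hence $\g\neq0$). With that secured, the argument is a routine product-of-bounded-times-vanishing estimate requiring no probabilistic machinery beyond the pointwise convergences already established.
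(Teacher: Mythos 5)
Your proof is correct, but it takes a genuinely different route from the paper's. The paper treats the statement as an instance of a generic uniform-convergence-in-probability result: it observes that each entry $\AHh(\g',\b')_{ij}$ is Lipschitz in $(\g',\b')$ on the compact set $[\tfrac{\g}{2},2]\times[0,1]$ (the Lipschitz constant being controlled precisely because $\g'\geq\g/2$), that each entry converges pointwise in probability to $\AH(\g',\b')_{ij}$ by Lemma \ref{lem:pointwise}, and then invokes \citet[Corollary 2.2]{newey1991uniform} plus a union bound over entries to upgrade pointwise to uniform convergence. You instead exploit the separable structure of the decomposition (\ref{eq:decomp-gb}) directly: the randomness enters only through the plug-in coefficients $\hat\At,\hat\sigma_1,\hat{\bm u}_1,\hat{\bm v}_1,\edmomo$, none of which depends on $(\g',\b')$, while the $(\g',\b')$-dependent factors $C_{\b'},\vo_{\b'},\g',1/\g'$ are deterministic and uniformly bounded on the domain, so the supremum factors out of each of the four matched differences and the claim reduces to the pointwise convergences already established. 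Your route is more elementary (it needs no stochastic-equicontinuity theorem) and in fact yields the slightly stronger uniform rate $O_P(T^{-1/2})$; the paper's route is shorter to state and would survive a less structured dependence on $(\g',\b')$, since it only needs Lipschitzness rather than the coefficient/parameter separation. You also correctly isolate the one delicate point common to both arguments: the factor $1/\g'$ forces the domain to be bounded away from $\g'=0$, which is legitimate because minimality gives $\g>0$.
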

\begin{proof}
Since $\g'\geq \g/2$ we have that for any $i,j\in[n]$, ${\AHh(\g',\b')}_{ij}$ is Lipschitz.
In addition, by Lemma \ref{lem:pointwise}, for any $(\g',\b')\in[\frac{\g}{2},2]\times[0,1]$, each entry $\AHh(\g',\b')_{ij}$ converge pointwise in probability to $\AH(\g',\b')_{ij}$.
Finally, $[\frac{\g}{2},2]\times[0,1]$ is compact.
Thus, the claim follows from \citet[Corollary 2.2]{newey1991uniform} with an application of a union bound over $i,j\in[n]$.
\end{proof}

\end{document}